\DeclareMathOperator\diag{diag}
\newcommand{\eqdef}{\ensuremath{\stackrel{\mbox{\upshape\tiny def.}}{=}}}
\def\1{\bm{1}}
\definecolor{darkcyan}{rgb}{0.0, 0.55, 0.55}
\definecolor{MidnightBlue}{RGB}{25,25,112}
\definecolor{MidnightBlueComplementingGreen}{RGB}{25,112,25}
\definecolor{MidnightBlueComplementingPurple}{RGB}{112,25,112}
\definecolor{MidnightBlueComplementingRed}{RGB}{112,25,69}
\definecolor{WowColor}{rgb}{.75,0,.75}
\definecolor{MildlyAlarming}{rgb}{0.85,0.25,0.1}
\definecolor{SubtleColor}{rgb}{0,0,.50}
\definecolor{antiquefuchsia}{rgb}{0.57, 0.36, 0.51}
\definecolor{fashionfuchsia}{rgb}{0.96, 0.0, 0.63}
\definecolor{jade}{rgb}{0.0, 0.66, 0.42}
\definecolor{caribbeangreen}{rgb}{0.0, 0.8, 0.6}
\definecolor{aquamarine}{rgb}{0.5, 0.8, 0.85}
\definecolor{lightseagreen}{rgb}{0.13, 0.7, 0.67}
\definecolor{darkgreen}{rgb}{0.0, 0.2, 0.13}
\definecolor{darkspringgreen}{rgb}{0.09, 0.45, 0.27}
\definecolor{attentioncolor}{RGB}{152,90,81}
\definecolor{burgred}{RGB}{40,3,22}
\definecolor{AnnieGreen}{RGB}{17,123,92}
\definecolor{Turquoise}{RGB}{64,224,208}
\definecolor{darkjade}{RGB}{0,122,84}
\definecolor{Window1}{RGB}{92,150,31}%
    \definecolor{Window1dark}{RGB}{41,67,13}%
\definecolor{Window2}{RGB}{255,168,28}
    \definecolor{Window2dark}{RGB}{114,75,12}
\definecolor{Window3}{RGB}{255,96,33}
    \definecolor{Window3dark}{RGB}{97,36,12}
\definecolor{InputColor}{RGB}{20,255,177}
    \definecolor{InputColorlight}{RGB}{222,237,229}
\definecolor{RedAlizarin}{rgb}{0.82, 0.1, 0.26}
\definecolor{darkcerulean}{rgb}{0.03, 0.27, 0.49}
\definecolor{smokyblack}{rgb}{0.06, 0.05, 0.03}
\definecolor{warmblack}{rgb}{0.0, 0.26, 0.26}
\definecolor{cobalt}{rgb}{0.0, 0.28, 0.67}
\definecolor{darkerred}{RGB}{139, 0, 0}
\definecolor{darkergreen}{RGB}{0, 100, 0}
\definecolor{warmred}{RGB}{205, 92, 92} 
\definecolor{warmgreen}{RGB}{34, 139, 34} 
\definecolor{warmblue}{RGB}{70, 130, 180} 
\definecolor{arrowredfig1}{HTML}{B85450} 
\definecolor{expertfig1}{HTML}{6A9153} % Use boarder colour for legibility
\definecolor{mixturefig1}{HTML}{6C8EBF} % Use boarder colour for legibility
\newcounter{termcounter}
\crefname{term}{term}{terms}
\def\term{\@ifnextchar[\term@optarg\term@noarg}%]
\def\term@optarg[#1]#2{%
  \textup{#1}%
  \def\@currentlabel{#1}%
  \def\cref@currentlabel{[][2147483647][]#1}%
  \cref@label[term]{#2}}
\def\term@noarg#1{%
  \refstepcounter{termcounter}%
  \textup{(\thetermcounter)}%
  \cref@label[term]{#1}}
\definecolor{MidnightBlue}{RGB}{25,25,112}
\definecolor{MidnightBlueComplementingGreen}{RGB}{25,112,25}
\definecolor{MidnightBlueComplementingPurple}{RGB}{112,25,112}
\definecolor{MidnightBlueComplementingRed}{RGB}{112,25,69}
\definecolor{coolblack}{rgb}{0.0, 0.18, 0.39}
\definecolor{deepjunglegreen}{rgb}{0.0, 0.29, 0.29}
\definecolor{applegreen}{rgb}{0.55, 0.71, 0.0}
\definecolor{WowColor}{rgb}{.75,0,.75}
\definecolor{MildlyAlarming}{rgb}{0.85,0.25,0.1}
\definecolor{SubtleColor}{rgb}{0,0,.50}
\definecolor{SubtleColor2}{rgb}{0.6,0.21,.50}
\definecolor{lasallegreen}{rgb}{0.03, 0.47, 0.19}
\newcounter{margincounter}
\NewDocumentCommand{\AK}{mo}{
    \IfValueF{#2}{
    %%%%% NO Margin Note
                        {{\scriptsize
                            \textcolor{deepjunglegreen}{
                            \hfill\\
                                \textbf{A:}
                                \textit{{#1}}
                            \hfill\\
                            }
                        }}
        }
    %%%% Margin Note
    \IfValueT{#2}{
                        \marginnote{{\scriptsize
                            \textcolor{deepjunglegreen}{ 
                            \textbf{A:}
                            \textit{{#1}}
                            }
                        }}
        }
                    }
\NewDocumentCommand{\Xuwei}{mo}{
    \IfValueF{#2}{
    %%%%% NO Margin Note
                        {{\scriptsize
                            \textcolor{coolblack}{ 
                            \textbf{X:}
                            \textit{{#1}}
                            }
                        }}
        }
    %%%% Margin Note
    \IfValueT{#2}{
                        \marginnote{{\scriptsize
                            \textcolor{coolblack}{ 
                            \textbf{X:}
                            \textit{{#1}}
                            }
                        }}
        }
                    }
\NewDocumentCommand{\IE}{mo}{
    \IfValueF{#2}{
    %%%%% NO Margin Note
                        {{\scriptsize
                            \textcolor{orange}{ 
                            \textbf{IE:}
                            \textit{{#1}}
                            }
                        }}
        }
    %%%% Margin Note
    \IfValueT{#2}{
                        \marginnote{{\scriptsize
                            \textcolor{orange}{ 
                            \textbf{IE:}
                            \textit{{#1}}
                            }
                        }}
        }
                    }
\NewDocumentCommand{\GA}{mo}{
    \IfValueF{#2}{
    %%%%% NO Margin Note
                        {{\scriptsize
                            \textcolor{violet}{ 
                            \textbf{GA:}
                            \textit{{#1}}
                            }
                        }}
        }
    %%%% Margin Note
    \IfValueT{#2}{
                        \marginnote{{\scriptsize
                            \textcolor{violet}{ 
                            \textbf{GA:}
                            \textit{{#1}}
                            }
                        }}
        }
                    }
\begin{document} 

\title{Online Federation For Mixtures of Proprietary Agents\\ with Black-Box Encoders}

\author{\name Xuwei Yang 
  \email yangx212@mcmaster.ca \\
\addr Department of Mathematics\\
  McMaster University\\
  Ontario, Canada 
  \AND
   \name
   Fatemeh Tavakoli
   \email 
fatemeh.tavakoli@vectorinstitute.ai
   \\
\addr Vector Institute\\
    Ontario, Canada 
  \AND
  \name David B. Emerson 
  \email 
david.emerson@vectorinstitute.ai \\ 
\addr  Vector Institute\\
    Ontario, Canada 
    \AND 
    \name
  Anastasis Kratsios\thanks{Corresponding Author. 
  Funded by NSERC DG No.\ RGPIN-2023-04482.} 
  \email 
  kratsioa@mcmaster.ca 
  \\
 \addr  Department of Mathematics\\
  McMaster University and the Vector Institute\\
  Ontario, Canada 
}

\editor{}

\maketitle

%%%%%%%%%%%%%%%%%%%%%%%%%%%
\begin{abstract} 
Most industry-standard generative AIs and feature encoders are proprietary, offering only black-box access: their outputs are observable, but their internal parameters and architectures remain hidden from the end-user.  This black-box access is especially limiting when constructing mixture-of-expert type ensemble models since the user cannot optimize each proprietary AI's internal parameters.  Our problem naturally lends itself to a non-competitive game-theoretic lens where each proprietary AI (agent) is inherently competing against the other AI agents, with this competition arising naturally due to their obliviousness of the AI's to their internal structure.  In contrast, the user acts as a central planner trying to synchronize the ensemble of competing AIs.  

We show the existence of the unique Nash equilibrium in the online setting, which we even compute in closed-form by eliciting a feedback mechanism between any given time series and the sequence generated by each (proprietary) AI agent.  Our solution is implemented as a decentralized, federated-learning algorithm in which each agent optimizes their structure locally on their machine without ever releasing any internal structure to the others.  We obtain refined expressions for pre-trained models such as transformers, random feature models, and echo-state networks.  Our ``proprietary federated learning'' algorithm is implemented on a range of real-world and synthetic time-series benchmarks. It achieves orders-of-magnitude improvements in predictive accuracy over natural benchmarks, of which there are surprisingly few due to this natural problem still being largely unexplored.
\end{abstract}
%%%%%%%%%%%%%%%%%%%%%%%%%%%

\section{Introduction} 
\label{s:Intro}

The modern era of mainstream deep learning has witnessed the emergence of ensembles of proprietary models with public \textit{black-box} access; that is, the users do not know nor can they modify the inner structure of these AIs used to build their ensemble models. 
Examples include standard large pre-trained language and image encoders such as CLIP \citep{radford2021learning}, purely image-based encoders like DINO(v2) \citep{caron2021emerging,oquab2024dinov2learningrobustvisual}, pre-trained large language encoders SLIP(v2) \citep{mu2022slip,tschannen2025siglip}, or directly using the outputs of proprietary LLMs, e.g. \citep{chatgpt} or \citep{guo2024deepseek} as deep features. 
A similar scenario arises within organizations where different teams -- for example, trading desks in a financial institution -- employ distinct models whose structures are not disclosed to one another. In such cases, only the predictions are shared, rendering traditional machine-learning techniques inapplicable since the user does not have access to, and thus \textit{cannot train}, these models.

Our problem naturally lends itself to a \textit{federated learning} (FL) approach, where a single computational task is distributed among many decentralized agents. These models are synchronized by the user/server.  This is because, in this setting, each agent can hold their own model's information locally while only releasing their predictions to the ensemble. Though the FL view may be natural when optimizing an ensemble of proprietary AIs, the black-box constraint pulls it out of the standard FL toolbox. This is because traditional FL setups, e.g. \citep{Even22SampleOptimalityFL,Fraboni23FO-AsynchronHetero, FedOptimHeteroNet, FL:all-in-one}, work by (partially) averaging local model parameters on the central server or their (partial) gradients, requiring full (white-box) access to each agent.

It is, thus, natural to approach our \textit{proprietary federated learning (PFL)} problem through a non-cooperative game-theoretic lens, recognizing that each agent can optimize their own predictions but they are constrained to act non-cooperatively by virtue of them not knowing the structure of any other agent.  Instead, the agents are guided towards cooperation by the user, acting as a central planner, who continually re-weights the contributions of each expert’s prediction to maximize the ensemble’s overall predictive performance.  This non-cooperative game-theoretic perspective provides a meaningful notion of an \textit{optimal} PFL algorithm, namely one which achieves a Nash equilibrium among the agents.

\paragraph{Main Contribution}
Though a solution to the general PFL problem is yet elusive, we have completely solved it within the generative pre-trained online setting. Each AI agent and the user/server have access to the same dynamically evolving time series and the corresponding outputs generated by each AI.   Using the information in these evolving input-output pairs, we can construct a feedback mechanism, enabling every agent to optimize their ensemble while preserving the proprietary nature of each AI (Algorithm \ref{alg:sync_subroutine}).  

Our non-cooperative game-theoretic approach precisely quantifies what ``optimality'' means: the agent's act in a \textit{Nash Equilibrium}.  This Nash equilibrium not only exists, but our PFL algorithm implements the unique \textit{closed-form} solution to this game (Theorem \ref{thm:NashEqmExistence}).  

\paragraph{Secondary Contributions}
We further focus on three classes of online deep learning models, where our algorithm’s updates can be streamlined for improved computational efficiency. First, we examine deterministic (non-generative) pre-trained deep learning models -- such as those commonly used for feature encoding -- which have no latent memory (Corollary \ref{cor:PreTrainedTransformer}). Further, we consider a class of generative extreme learning machines, specifically random shallow neural networks with continuously re-generated random biases (Corollary \ref{cor:Random_NeuralNetwork}), motivated by their theoretical interest propelled by the NTK literature. Unlike the first class, these models are generative but lack latent memory. Finally, in Section \ref{s:Examples_ReservoirComputer}, we explore echo-state networks as an example of deep learning models leveraging randomized internal states and latent memory. Each model class is supported by numerical studies demonstrating the effectiveness of our PFL algorithm across a range of online prediction tasks, using both real-world and synthetic datasets. Our PFL algorithm outperforms the benchmark by orders of magnitude in all cases.

\paragraph{Organization of the Paper}
In Section \ref{s:Setup}, we introduce the generative sequential learning models and the FL scheme. Section \ref{s:Main_Resuls} contains our main theoretical guarantees. In Section \ref{s:Examples}, we examine various FL agents and demonstrate, through numerical experiments, the advantage of our game-theoretic FL approach.   All proofs and additional supporting experimental details are relegated to comprehensive appendices.

\subsection{Related Work}
\label{s:Introduction__ss:RelatedWork}

Traditional FL methods, such as FedAvg \citep{mcmahan2017FedAvg}, 
aggregate locally-optimized parameters or gradient updates on a central server, which periodically synchronizes local models \citep{Even22SampleOptimalityFL}. These methods rely on idealized assumptions such as homogeneous agents, identically and independently distributed datasets, and synchronized updates from clients.  However, these stylized assumptions tend not to match real-world use cases, wherein agents face heterogeneous data distributions, different system capacities, and de-synchronized updates  \citep{Kairouz21AdvancesFL, Li20FLfuture}.

Due to heterogeneity between agents, averaging of model weights as a means of synchronizing distributed model training \citep{mcmahan2017FedAvg} leads to biased global models and lower prediction accuracy \citep{Zhao22nonIID}. 
Some robust optimization strategies have been designed to address this issue. For example, FedProx \citep{FedOptimHeteroNet} incorporates regularization to stabilize updates across heterogeneous agents, while SCAFFOLD \citep{Karimireddy20Scaffold} employs variance reduction to mitigate local drift. Another challenge in FL is desynchronization of updates from agents due to varying computational speeds and capacity. Asynchronous FL methods address this by allowing each agent to update the global model independently as soon as it completes its local computation \citep{Fraboni23FO-AsynchronHetero, FedOptimHeteroNet, FL:all-in-one}.

\paragraph{Game-Theoretic Approaches to Machine Learning}
Federated learning can be naturally cast as an interactive multi-agent system wherein optimal control and game theory can be applied. In collaborative FL settings, agents share their data and update local models to achieve better performance of the global model \citep{Blum21EqmOptmFL, Donahue21OptimalityStability, Huang23Collaboration, Murhekar23Incentives}. 
Several challenges arise in collaborative FL, such as privacy leakage \citep{Wei20DiffPrivacy}, communication costs \citep{Yoon25LessCommunication}, and conflicting objectives \citep{Yin24PrivacyAccuracyEnergy}. 
Non-cooperative game-theoretic models have been used to address challenges such as adversarial threats \citep{Zhang23RobustGameFL,Xie24mixed} and privacy protection \citep{Blanco-Justicia21SecurityPrivacy, Yin21PrivacyPreserving}. Utilizing non-cooperative game theory to formulate and improve multi-agent model performance, especially with proprietary AI agents, remains an emerging area of research.

\subsection{Notation and Conventions} 
\label{s:notation}
In this section, the notation used throughout the manuscript is gathered for clarity and accessibility. We denote $\mathbb{N} \eqdef \{0, 1, 2, \cdots \}$ and $\mathbb{N}_+\eqdef \{x\in \mathbb{N}:\, x>0\}$.  
We henceforth fix a latent, complete probability space $(\Omega,\mathcal{F},\mathbb{P})$.  For any $d,D\in \mathbb{N}_+$ and $1\le p<\infty$, use $L^p(\mathbb{R}^d)$ (resp.\ $L^p(\mathbb{R}^{d\times D})$) to denote the space of $\mathbb{R}^d$ (resp.\ $\mathbb{R}^{d\times D})$) valued random variables $X$ whose expected $p^{\text{th}}$ power of their Euclidean $\|\cdot\|_2$ (resp.\ Frobenius $\|\cdot\|_F$) norm is finite $\mathbb{E}[\|X\|^p_2]<\infty$ (resp.\ $\mathbb{E}[\|X\|_F^p]<\infty$), where the expectation is taken with respect to $\mathbb{P}$. As shorthand for an arbitrary Cartesian product of a single space $X$, $X \times \ldots \times X$, we write $(X)^{\mathbb{N}}$.

\paragraph{Row and Column Vectors and Matrices}
We will nearly always specify if a vector is a row or column vector and its dimensions.  However, when clear from the context, we identify the space of row vectors $\mathbb{R}^{d\times 1}\cong \mathbb{R}^d$ and the space of column vectors $\mathbb{R}^{1\times d}\cong \mathbb{R}^d$ (for the relevant $d\in \mathbb{N}_+$) to keep notation light.  
For each $d\in \mathbb{N}_+$, $I_d$ is the $d \times d$-identity matrix.  Moreover, $\bar{1}_d\in \mathbb{R}^d$ has all its entries equal to $1$. 
We denote by $0$ the zero scalar, zero vector, or zero matrix with dimensions specified in the context. 
The Kronecker product of any two matrices $A$ and $B$ is denoted by $A\otimes B$. 
The Euclidean ($\ell^2$) norm of any vector $x\in \mathbb{R}^d$, for any given dimension $d\in \mathbb{N}_+$, is denoted by $\| \cdot \|$.

\section{The Online Proprietary Federated Setting}
\label{s:Setup}

\paragraph{The Agents - Online Sequential Learners}
% \label{s:Setup__ss:EnsembleModel}
We consider $N\in \mathbb{N}_+$ generative \textit{proprietary} sequential learners $f^{1},\dots,f^{N}$ with at most three inputs at any time $t\in \mathbb{N}$.  For each such $t\in \mathbb{N}_+$, the $i^{\text{th}}$ expert model takes the (common) historical input sequence $x_{[0:t]}\in \mathbb{R}^{d_x \, (1+t)}$, an individual source of randomness $\epsilon_t^i$ which is an integrable $d_y \times d_z$-valued random variable, and an individual hidden state $Z_t^i\in \mathbb{R}^{d_y\times d_z}${; where $d_x, d_y, d_z \in \mathbb{N}_+$}.  
These three inputs are used to generate the latent state $Z^i_{t+1}$ using a \textit{static} \textbf{encoder} $\varphi^{i}:\mathbb{R}^{d_x(1+t)} \times \mathbb{R}^{d_y \times d_z}\times 
 \mathbb{R}^{d_y\times d_z }\to \mathbb{R}^{d_y 
\times d_z}$ via
\begin{equation}
\label{eq:ENCODER}
    Z_{t}^i 
\eqdef 
    \underbrace{
        \varphi^{i}\big(x_{[0:t]}
        ,Z_{t-1}^i,\epsilon_t^i\big)
    }_{\text{(Deep) Feature Encoder}}
.
\end{equation}
The (deep) feature encoder is either pre-trained or trained on an initial segment of time-series data, formally regarded as before the time when prediction begins to roll out online; i.e.\ $t=0$ and any negative times are considered part of an offline pre-training phase not considered here.  Thus, on and after time $t>0$, the feature encoder is no longer updated in this paper, and the (deep) learning model effectively becomes a kernel function.  We require that $Z_t^{i}\in L^1(\mathbb{R}^{d_y\times d_z})$ for each $t\in \mathbb{N}$.
Once the latent state is generated, it is used to update the \textit{residual} of the prediction generated at the previous time $t$ via a \textit{dynamically updating} linear \textbf{decoder} parameterized by $\beta^i$.
\begin{equation} 
\label{eq:DECODER_RESIDUALS}
    \underbrace{
        \hat{Y}^i_{t+1}
    }_{\text{Prediction}}
= 
        \underbrace{
            \hat{Y}^i_t 
        }_{\text{Historical State (Memory)}}
    +  
    \underbrace{
        \overbrace{Z^i_t 
        }^{\text{feature}}
            \beta^{i}_t
        ,
    }_{\text{Residual (Update)}}
\end{equation} 
where $\hat{Y}_t^i \in \mathbb{R}^{d_y \times 1}$, $Z_t^i\in \mathbb{R}^{d_y \times d_z}$, and $\beta_t^i \in \mathbb{R}^{d_z \times 1}$. Both initial states $\hat{Y}_0^i\in \mathbb{R}^{d_y\times 1}$ and $Z_0^i\in \mathbb{R}^{d_y \times d_z}$ are fixed a priori.  

Following the theoretical reservoir computing literature, e.g. \citep{grigoryeva2018echo}, each generative sequential is a \textit{causal} deep learning agent defined by
\begin{equation*}
\begin{aligned}
    f^{i}: (\mathbb{R}^{d_{x}})^{\mathbb{N}}&\to L^1(\mathbb{R}^{d_{y}})^{\mathbb{N}}, \\
    (x_t)_{t=0}^{\infty} & \to \big(\hat{Y}^i_t\big)_{t=0}^{\infty}.
\end{aligned}
\end{equation*}
where, for each $t\in \mathbb{N}$, $\hat{Y}^{i}_{t+1}$ is computed recursively  by first encoding the input path $x_{[0:t]}$ via Equation \eqref{eq:ENCODER}, and then updating the residual via Equation \eqref{eq:DECODER_RESIDUALS}.  By \textit{causality}, we mean that for each time $t\ge 0$, the prediction $\hat{Y}_{t+1}^i$ only depends on the input sequence as \textit{observed thus far}, i.e.\ on $x_{[0:t]}$, and not on any of its future realizations, i.e.\ not on $x_s$ for any $s>t$.

For example, in echo-state networks (ESNs) \citep{jaeger2001echo,maass2002real} or extreme learning machines such as random feature networks (RFNs) \citep{huang2006universal,rahimi2008weighted}, the deep features in Equation \eqref{eq:ENCODER} are randomly initialized and left untrained and residual updates, parameterized by $(\beta_t)_{t=0}^{\infty}$, in Equation \eqref{eq:DECODER_RESIDUALS} are trained before the online-time $t=0$ and then frozen (thus are constant) for all times $t \ge 0$.  In pre-trained models, such as transformers \citep{vaswani2017attention}, the deep features are pre-trained on an offline dataset, i.e.\ before time $t=0$, and the residual decoding/readout layer in Equation \eqref{eq:DECODER_RESIDUALS} is also typically left fixed.

\paragraph{The Server - Adaptive Mixture of Experts/Agents}
% \label{s:Setup__ss:Federated}
We assume that $N>1$, with the case of $N=1$ being trivial, as no multi-agent coordination is required. The objective of the server is to adaptively \textit{synchronize} these $N$ generative models, into a single generative ensemble model $F:(\mathbb{R}^{d_{x}})^{\mathbb{N}}\to L^1(\mathbb{R}^{d_{y}})^{\mathbb{N}}$ to maximize their performance.  The resulting ensemble is represented for each $x \in (\mathbb{R}^{d_x})^{\mathbb{N}}$ at every time $t\in \mathbb{N}$ by
\begin{equation} 
\label{eq:enbsemble_server__predictions}
F(x_{[0:t]})_t \eqdef 
\overbrace{
\sum_{i=1}^N\, 
    %\colorwblk{w_t^i}\, 
    w_t^i 
    % \,
    \underbrace{
        f^{i} \big(x_{[0:t]}\big)_t
    \,
    ,
    }_{i^{\text{th}} \text{Agent (Expert)}}
}^{\text{Mixture of Experts}}
\end{equation}
where the \textit{mixture coefficients} $w_t$ belongs to $\mathbb{R}^N$. Observe that, by construction, the server's mixture prediction is also \textit{causal}; that is, $F(x_{[0:t]})_t$ only depends on $x_{[0:t]}$ and not $x_s$ for any $s>t$.

\paragraph{The Proprietary Constraint}
Each \textit{agent} can dynamically update, or \textit{control}, its sequence of linear residual update parameters $(\beta_t)_{t=0}^{\infty}$ in Equation \eqref{eq:DECODER_RESIDUALS}, while the central \textit{server} dynamically updates its own parameters in Equation \eqref{eq:enbsemble_server__predictions}. In standard federated or online optimization problems, there are no restrictions on how agents and the server update their trainable parameters, apart from the requirement that updates be causal. That is, they must not depend on inaccessible future information.  

However, the \textit{proprietary} nature of each model introduces an additional layer of structural complexity. Specifically, neither the server nor any agent has access to the internal structure of any other model, including its architecture, parameter values, or (partial) gradient information. Together, the \textit{causality} and \textit{proprietary} constraints imply that, 
at every time $t\in \mathbb{N}_+$, the $i^{\text{th}}$ \textit{agent/client-side} parameter updates to $\beta_t^{i}$ in Equation \eqref{eq:DECODER_RESIDUALS} are of the form
\begin{equation}
\label{eq:PROPRIETARY_CONSTRAINT__Agent}
    \beta_t^{i} = b_t\Bigg(
                \left(\hat{Y}_{[0:t]}^{j}\right)_{j=1}^N
        , 
            \left(Z_{[0:t]}^{j}\right)_{j=1}^N
        ,
                y_{[0:t]}
        ,
                x_{[0:t]}
        ,
            w_{[0:t]}
    \Bigg),
\end{equation}
for each $t\in \mathbb{N}_+$, where $b_t:\mathbb{R}^{d_y(1+t)\times N }\times \mathbb{R}^{(d_y \times d_z)(1+t)\times N }\times \mathbb{R}^{d_y(1+t)}\times \mathbb{R}^{d_x(1+t)}\times \mathbb{R}^{N(1+t)} \to \mathbb{R}^{d_z \times 1}$ is a continuous function that may depend on the current time $t$ and various observed quantities, including all historical expert predictions $\hat{Y}^{1}_{[0:t]}, \dots, \hat{Y}_{[0:t]}^{N}$, their historical deep features $Z_{[0:t]}^{1},\dots,Z_{[0:t]}^{N}$ the server weights $w_{[0:t]}$, and the input data $x_{[0:t]}$.  We highlight that the $i^{\text{th}}$ agent’s parameter updates $\beta_{[0:t]}^{i}$ are known to themselves and can therefore be incorporated into their own decisions.%
\footnote{Expression \eqref{eq:PROPRIETARY_CONSTRAINT__Agent} could be extended to include the agent's internal parameters if the hidden layers were trainable, but they are not here.}

Similarly, as the server does not have black-box access to any other proprietary model, then, 
at each time $t\in \mathbb{N}_+$, the server-side mixture coefficients $w_t$ in Equation \eqref{eq:enbsemble_server__predictions}, are of the form
\begin{equation}
\label{eq:PROPRIETARY_CONSTRAINT__Server}
    w_t = \varpi_t\Bigg(
                \left(\hat{Y}_{[0:t+1]}^{j}\right)_{j=1}^N
        ,
            \left(Z_{[0:t]}^{j}\right)_{j=1}^N
        ,
                y_{[0:t]}
        ,
                x_{[0:t]}
        ,
            w_{[0:t]}
    \Big),
\end{equation}
where $\varpi_t:\mathbb{R}^{d_y(2+t)\times N}\times \mathbb{R}^{ ( d_y \times d_z)(1+t)\times N}\times \mathbb{R}^{d_y(1+t)}\times \mathbb{R}^{d_x(1+t)}\times \mathbb{R}^{N(1+t)} \to \mathbb{R}^N$ is a continuous function that may depend on the current time $t \in \mathbb{N}_+$.  
Note that, since the agents each predict first, then the server's update can leverage the information in $\hat{Y}_{t+1}^{1},\dots,\hat{Y}_{t+1}^{N}$. 

We emphasize that each client-side parameter $\beta_t^{i}$ is computed using strictly more information than the server-side weights $w_t$. This distinction arises because each agent has access to its own internal parameters, whereas the server does not, as these parameters are proprietary and remain inaccessible to the server. 
\begin{remark}[The Proprietary Constraint is a Measurability Restriction]
\label{rem:adapedness}
If $b_t$ and $\varpi$ were only required to be Borel measurable, then Conditions \eqref{eq:PROPRIETARY_CONSTRAINT__Agent} and \eqref{eq:PROPRIETARY_CONSTRAINT__Server} could be formulated as \textit{adaptedness} conditions. Specifically, $\beta_t^{i}$ would be measurable with respect to the $\sigma$-algebra generated by $\{\hat{Y}_{s}^{j},\,w_{u}, \beta_{s}^{i},Z_s^{i}\}_{s,u,j=1}^{t,t+1,N}$, while $w_t$ would be required to be measurable with respect to the $\sigma$-algebra generated by $\{\hat{Y}_{s}^{j},\,w_{u},Z_s^{i}\}_{s,j=1}^{t,N}$.   However, imposing additional continuity constraints on $b_t$ and $w_t$ enhances the structural regularity of the problem.  
\end{remark}

\section{Main Results}
\label{s:Main_Resuls}
{Our first result formalizes and derives the optimal server-side update (Theorem \ref{thm:OptimalWeights}).  We derive the optimal client-side update in (Theorem \ref{thm:NashEqmExistence}).  We additionally provide a greedy scheme which each client can use between these synchronization times (Proposition \ref{prop:localgreedyupdates}) for an added speedup at the cost of sub-optimal decisions.}

\subsection{Optimal Server-Side Weights}
\label{s:Main_Results__Server}

Let $t\in \mathbb{N}_+$ and fix a regularization parameter $\kappa>0$.  
Given any $\left(\hat{Y}_{t}^{j}\right)_{j=1}^N\in L^1(\mathbb{R}^{d_y})^{N}$ and $y_t\in \mathbb{R}^{d_y}$, 
we define the server-side objective as
\begin{equation}
\label{eq:server}
\begin{aligned}
    \min_{w_t\in \mathbb{R}^N}\,
    &
        \Big\| y_{t}  - \sum_{j=1}^N w^j_t \hat{Y}^{j}_{t} \Big\|^2 
    + 
        \sum_{j=1}^N \kappa | w^j_t |^2,
\\
\mbox{s.t.} & \,
w^1_t+\dots+w^N_t = \eta
.
\end{aligned}
\end{equation}
The normalization constraint $w^1_t+\dots+w^N_t = \eta$ ensures that the \textit{total weight size} is equal to $\eta$. When $\eta=1$, this condition serves as a \textit{signed} variant of the usual probabilistic normalization constraint found in the PAC-Bayes literature \citep{alquier2024user} optimized by a Gibbs-type measure, where instead there $w_t$ is constrained to the $N$-simplex, meaning each weight must satisfy $0 \leq w^i_t \leq 1$ for each $i$.   However, this additional constraint is removed since we are not \textit{sampling} agent predictions but rather \textit{combining} them. Doing so allows the central server to make “negative bets against” an agent’s predictions. Even if an agent consistently performs poorly, its signed opposite prediction may still be useful. This added freedom enables the server to better exploit available information across all agents.

In essence, the hyperparameter $\kappa$ controls the amount we allow the server to borrow its investment level in any given expert to leverage it towards the predictive influence of any other expert, and $\eta$ controls the total investment level of the server into each agent.
\begin{theorem}[Server: Optimal Mixture Weights]
\label{thm:OptimalWeights}
Let $\kappa,\eta>0$, $t\in \mathbb{N}_+$, $\left(\hat{Y}_{t}^{j}\right)_{j=1}^N\in L^1(\mathbb{R}^{d_y})^{N}$, $y_{[0:t]}\in \mathbb{R}^{d_y(1+t)}$, $x_{[0:t]}\in \mathbb{R}^{d_x(1+t)}$, and $w_{[0:t]}\in \mathbb{R}^{N(1+t)}$.  
Then
\[
        w_t^{\star}  
    = 
         A^{-1}\,
         \biggl(
             b -  
                \frac{
                    \mathbf{1}_N^{\top}A^{-1}\,b-
                    \eta%1
                }{
                    \mathbf{1}_N^{\top}A^{-1}\mathbf{1}_N
                }
            \cdot
             \,\mathbf{1}_N
         \biggr)
\]
is the \textit{unique} solution $w^{\star}_t=[ w^1_t, \dots, w^N_t]^\top$ 
to System \eqref{eq:server}, where $A\eqdef 2 ( \hat{\mathbf{Y}}^\top_{t} \hat{\mathbf{Y}}_{t} + \kappa I_N )$, $b\eqdef 2 (y^\top_{t}  \hat{\mathbf{Y}}_{t})^{\top}$, $\hat{\mathbf{Y}}_{t} = [ \hat{Y}^1_{t}, \dots, \hat{Y}^N_{t} ]$.
\end{theorem}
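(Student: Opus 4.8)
The plan is to recognize System \eqref{eq:server} as a strictly convex quadratic program with a single affine equality constraint, and to solve it by the method of Lagrange multipliers, whose stationarity condition turns out to be both necessary and sufficient here. First I would stack the observed predictions into the matrix $\hat{\mathbf{Y}}_t = [\hat{Y}_t^1, \dots, \hat{Y}_t^N] \in \mathbb{R}^{d_y \times N}$ so that $\sum_{j=1}^N w_t^j \hat{Y}_t^j = \hat{\mathbf{Y}}_t w_t$, and then expand the squared Euclidean norm together with the ridge penalty. Discarding the constant term $\|y_t\|^2$, the objective collapses to
\[
    \tfrac{1}{2} w_t^\top A w_t - b^\top w_t,
\]
with $A = 2(\hat{\mathbf{Y}}_t^\top \hat{\mathbf{Y}}_t + \kappa I_N)$ and $b = 2(y_t^\top \hat{\mathbf{Y}}_t)^\top$, exactly as stated. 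Since $\hat{\mathbf{Y}}_t^\top \hat{\mathbf{Y}}_t$ is positive semidefinite and $\kappa > 0$, the matrix $A$ is symmetric positive definite, hence invertible; in particular the objective is strictly convex, and since the feasible hyperplane $\{w : \mathbf{1}_N^\top w = \eta\}$ is nonempty (it contains $(\eta/N)\mathbf{1}_N$), a unique global minimizer exists.

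Next I would introduce the Lagrangian $\mathcal{L}(w, \lambda) = \tfrac{1}{2} w^\top A w - b^\top w + \lambda(\mathbf{1}_N^\top w - \eta)$ and impose stationarity in $w$, giving $A w - b + \lambda \mathbf{1}_N = 0$, i.e.\ $w = A^{-1}(b - \lambda \mathbf{1}_N)$. Substituting this expression into the constraint $\mathbf{1}_N^\top w = \eta$ and solving the resulting scalar linear equation for the multiplier yields
\[
    \lambda = \frac{\mathbf{1}_N^\top A^{-1} b - \eta}{\mathbf{1}_N^\top A^{-1} \mathbf{1}_N}.
\]
Here the denominator $\mathbf{1}_N^\top A^{-1} \mathbf{1}_N$ is strictly positive because $A^{-1}$ is again positive definite and $\mathbf{1}_N \neq 0$, so $\lambda$ is well defined. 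Back-substituting this value into $w = A^{-1}(b - \lambda \mathbf{1}_N)$ produces precisely the claimed closed form for $w_t^\star$.

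Finally, I would justify that this stationary point is genuinely the unique minimizer rather than merely a critical point. Because the objective is a strictly convex quadratic and the constraint is affine, the Karush--Kuhn--Tucker stationarity condition is sufficient for global optimality; equivalently, $A \succ 0$ forces the reduced Hessian on the constraint hyperplane to be positive definite, which both guarantees optimality and rules out any second solution.

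I expect the main ``obstacle'' to be essentially bookkeeping, since the computation is routine: the only genuine points requiring care are (i) confirming $A \succ 0$, which crucially uses $\kappa > 0$, so that $A^{-1}$ exists and the program is strictly convex, and (ii) confirming $\mathbf{1}_N^\top A^{-1} \mathbf{1}_N > 0$ so the multiplier is well defined. A secondary subtlety worth flagging is that, although the statement records $\hat{Y}_t^j \in L^1(\mathbb{R}^{d_y})$, at solve time these predictions have already been observed, so $\hat{\mathbf{Y}}_t$ and $y_t$ enter as fixed data and the minimization over $w_t \in \mathbb{R}^N$ is deterministic; the extra arguments $x_{[0:t]}$ and $w_{[0:t]}$ listed in the hypotheses never appear in the objective and play no role in the derivation.
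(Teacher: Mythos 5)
Your proposal is correct and follows essentially the same route as the paper: both reduce System \eqref{eq:server} to the strictly convex quadratic program $\min_w \tfrac12 w^\top A w - b^\top w$ subject to $\mathbf{1}_N^\top w = \eta$, solve the resulting KKT/Lagrangian stationarity system for $w$ and the multiplier $\lambda$, and back-substitute to obtain the stated closed form. The only cosmetic difference is that the paper invokes a cited result (Gallier, Proposition 6.3) for existence and uniqueness of the KKT solution, whereas you argue sufficiency directly from strict convexity and positive definiteness of $A$ -- an equally valid, self-contained justification.
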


Having settled the optimal server-side behaviour in Theorem~\ref{thm:OptimalWeights}, we now turn our attention to the optimal agent/client-side behaviour given the weights produced by the server.
\subsection{Optimal Client-Side Behaviour} \label{nash_sync}

Since each agent is unaware of the internal parameters and models of other agents, the best they can do is optimize their contribution to the ensemble mixture, $\sum_{j=1}^N w_t^j \hat{Y}_{t+1}^{j}$, given the mixture weight assigned by the central server and the observable predictions of other agents over a look-back window of length $T \in \mathbb{N}_+$. An exponentially weighted average, where recent predictive accuracy is deemed more important than historical predictions up to pre-specified weights $\alpha_i$ for $i=1,\dots,N$, is used. 
For notational simplicity, we denote $t=0$ to be the start of the look-back window; that is, it serves as a stand-in for $s=(t-T)\vee 0$.  
The objective function is written 
\begin{align} 
\label{eq:Objective_nth_expert}
 J_i(\beta^i, \beta^{-i}) = 
 \mathbb{E} \Bigg\{ \sum_{t=0}^{T-1} 
    e^{ -\alpha_i (T - 1 - t)} 
 \,
  \Big[    
 \Big\| y_{t+1} - \sum_{j=1}^N w^j_t \hat{Y}^j_{t+1} \Big\|^2 
  + \gamma_i \| \beta^i_t \|^2    
  \Big] 
  \Bigg\},
\end{align} 
for $i = 1, \dots, N$, $\beta^{-i}=
(\beta^j)_{j=1; j\neq i}^N$ and $\alpha_i\ge 0$.

Thus, each agent/client is naturally in \textit{competition} with all other agents to maximize their influence on the mixture $\sum_{j=1}^N\, w_t^j\hat{Y}_{t+1}^{j}$.  Thus, the \textit{optimal} behaviour of each agent is that of a \textit{Nash equilibrium}; that is, the action of each agent at any time cannot reduce the value of their running cost function $J_i$ in Equation \eqref{eq:Objective_nth_expert}. The following is our main theoretical contribution, which identifies and computes the \textit{unique} Nash equilibrium behaviour for all agents in \textit{closed-form}.

\begin{theorem}[Agents: Optimal Re-synchronization via The Nash Equilibrium]
\label{thm:NashEqmExistence}
Let $T, t \in \mathbb{N}_+$ with $t\ge T$, $\left(\hat{Y}_{[0:t]}^{j}\right)_{j=1}^N\in L^1(\mathbb{R}^{d_y})^{N(1+t)}$, $y_{[0:t]}\in \mathbb{R}^{d_y(1+t)}$, $x_{[0:t]}\in \mathbb{R}^{d_x(1+t)}$, and $w_{[0:t]}\in \mathbb{R}^{N(1+t)}$ with $\sum_{i=1}^N w^{i}_t =\eta$ for each $t\in [0:t]$. 
%%%
Suppose the system in Equation \eqref{eqn:Pi} for $( P_i )_{i=1}^N$ admits a solution for $t=0$, $1$, $\dots$, $T$.  
Then the $N$-agent Nash game with cost functions defined in Equation \eqref{eq:Objective_nth_expert} admits a unique feedback Nash equilibrium for $t=0$, $1$, $\dots$, $T-1$
\begin{align} 
\label{eqmvecbeta}
 \boldsymbol{\beta}_{t} =  [\beta^{1\top}_t, \dots, \beta^{N \top}_t ]^\top  
=  \mathbf{G}(t)  [ \hat{Y}^{1\top}_t, \dots, \hat{Y}^{N\top}_t ]^\top + \mathbf{H}(t) , 
\end{align} 
where the matrix-valued coefficients $\mathbf{G}(t)$ and $\mathbf{H}(t)$ are determined by the following system of matrix-valued equations for $(P_i, S_i)_{i=1}^N$ on the time span 
$t=0$, $1$ $\dots$, $T$, 
\begin{equation} 
\begin{split} 
\left \{ 
\begin{aligned}
  P_i(t)  = &   
  \mathbf{G}(t)^\top 
  \big[ \widehat{\mathbf{A}}(t) + \mathbf{D}_i(t) + e^{-\alpha_i(T-1-t)} 
  \gamma_i \widehat{\mathbf{e}}_i \widehat{\mathbf{e}}_i^\top  \big] 
   \mathbf{G}(t)  
   \\ 
   & + \big[ e^{-\alpha_i(T-1-t)}\mathbf{w}_t \mathbf{w}_t^\top + P_i(t+1) \big] 
   \mathbf{D}(t) \mathbf{G}(t) \\ 
   & + \mathbf{G}(t)^\top \mathbf{D}(t)^\top \big[ e^{-\alpha_i(T-1-t)} \mathbf{w}_t \mathbf{w}_t^\top + P_i(t+1) \big]^\top  \\ 
   & + e^{-\alpha_i(T-1-t)} \mathbf{w}_t \mathbf{w}_t^\top + P_i(t+1)  , \\ 
  & \hspace{-0.6cm} e^{ - \boldsymbol{\alpha}(T-1-t)} \Gamma + \mathbf{A}(t) + e^{ - \boldsymbol{\alpha}(T-1-t)} \widehat{\mathbf{A}}(t) 
   \, \text{is invertible} , 
  \\ 
  P_i(T)   = &    0  ,
\end{aligned}
\right.
\end{split}
\label{eqn:Pi}
\end{equation} 
\begin{equation}  
\begin{split} 
\left \{ 
\begin{aligned}
  S_i(t)  = &  \mathbf{G}^\top(t) \big[ \widehat{\mathbf{A}}(t) + \mathbf{D}_i(t) 
   + e^{-\alpha_i(T-1-t)} \gamma_i \widehat{\mathbf{e}}_i \widehat{\mathbf{e}}_i^\top  \big] \mathbf{H}(t) \\ 
& + \big[ e^{-\alpha_i(T-1-t)} \mathbf{w}_t \mathbf{w}_t^\top + P_i(t+1)  \big] \mathbf{D}(t) \mathbf{H}(t) \\  
& + \mathbf{G}^\top(t) \mathbf{D}^\top(t) \big[ - \mathbf{w}_t y_{t+1} e^{-\alpha_i(T-1-t)} + S_i(t+1) \big] \\ 
& - \mathbf{w}_t y_{t+1} e^{-\alpha_i(T-1-t)} + S_i(t+1) , \\ 
 S_i(T) = & 0   . 
\end{aligned}
\right.
\end{split} 
\label{eqn:Si}
\end{equation} 
The matrices in Equations \eqref{eqn:Pi} and \eqref{eqn:Si} are defined in Table \ref{tab:matricesEqnPiSi}.  
\end{theorem}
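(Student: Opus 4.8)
The plan is to recognize that the $N$-agent game defined by the running costs in Equation \eqref{eq:Objective_nth_expert} is a discrete-time linear-quadratic (LQ) dynamic game and to solve it by backward dynamic programming. The key structural observation is that the decoder update in Equation \eqref{eq:DECODER_RESIDUALS}, namely $\hat{Y}_{t+1}^i = \hat{Y}_t^i + Z_t^i\beta_t^i$, is \emph{linear} in both the state $\hat{Y}_t^i$ and the control $\beta_t^i$, while each cost $J_i$ is a discounted sum of terms that are \emph{quadratic} in the stacked prediction vector and in $\beta_t^i$. Stacking the agents' predictions into $\hat{\mathbf{Y}}_t = [\hat{Y}_t^{1\top},\dots,\hat{Y}_t^{N\top}]^\top$ and their controls into $\boldsymbol{\beta}_t$, the mixture residual $\sum_j w_t^j \hat{Y}_{t+1}^j$ becomes an affine function of $\boldsymbol{\beta}_t$, which places the problem squarely in the LQ-game framework whose feedback Nash equilibria are governed by coupled Riccati-type recursions.

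First, I would posit that the equilibrium cost-to-go of each agent $i$ is quadratic in the stacked state, of the form $\hat{\mathbf{Y}}_t^\top P_i(t)\hat{\mathbf{Y}}_t + 2 S_i(t)^\top \hat{\mathbf{Y}}_t + (\text{const})$, and verify this by backward induction. The terminal conditions $P_i(T)=0$ and $S_i(T)=0$ follow immediately because no cost is incurred at the terminal time. For the inductive step, at each time $t$ I would fix the equilibrium feedback strategies of all agents $j\neq i$ and differentiate agent $i$'s one-stage objective -- stage cost plus the time-$(t+1)$ value function evaluated along the dynamics -- with respect to $\beta_t^i$ only. Because the objective is quadratic and the dynamics linear, the resulting first-order (stationarity) conditions are \emph{linear} in $(\beta_t^1,\dots,\beta_t^N)$; collecting the $N$ conditions yields a single block-linear system whose coefficient matrix is precisely the matrix required to be invertible in Equation \eqref{eqn:Pi}. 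Inverting it produces the affine feedback law $\boldsymbol{\beta}_t = \mathbf{G}(t)\hat{\mathbf{Y}}_t + \mathbf{H}(t)$ of Equation \eqref{eqmvecbeta}, and substituting this law back into the one-stage objective and matching quadratic, linear, and constant terms delivers the recursions for $P_i(t)$ and $S_i(t)$ in Equations \eqref{eqn:Pi}--\eqref{eqn:Si}.

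The stochasticity is handled by taking conditional expectations inside the dynamic-programming recursion: the random features $Z_t^i$ and the noise $\epsilon_t^i$ enter only through (conditional) expectations of quadratic forms, which are absorbed into the coefficient matrices of Table \ref{tab:matricesEqnPiSi}, and the tower property lets the backward recursion close in the stated matrix variables. Uniqueness is then obtained stage by stage: the regularizers $\gamma_i\|\beta_t^i\|^2$ together with the assumed invertibility make each agent's one-stage problem strictly convex, so each agent's best response is its unique stationary point, and invertibility of the block system forces the coupled best responses to intersect in a single point; backward induction from $t=T-1$ to $t=0$ then propagates uniqueness to the whole equilibrium path.

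I expect the main obstacle to be the coupling across agents in the stationarity conditions. Unlike a single-agent LQ control problem, here each agent differentiates only with respect to its own $\beta_t^i$ while treating the others' current controls as fixed, yet all controls appear simultaneously in the shared mixture term $\sum_j w_t^j\hat{Y}_{t+1}^j$. Carefully assembling these $N$ coupled conditions into the correct block structure -- keeping track of the discount weights $e^{-\alpha_i(T-1-t)}$, the per-agent regularizers, and the projection-like terms induced by $\mathbf{w}_t\mathbf{w}_t^\top$ -- and verifying that the stated invertibility condition is exactly what guarantees a unique joint solution, is the delicate bookkeeping at the heart of the argument.
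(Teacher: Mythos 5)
Your proposal follows essentially the same route as the paper's proof: the paper likewise performs backward dynamic programming with the affine-quadratic value-function ansatz $V_i(t,\hat{\mathbf{y}}) = \hat{\mathbf{y}}^\top P_i(t)\hat{\mathbf{y}} + 2S_i(t)^\top\hat{\mathbf{y}} + r_i(t)$, terminal conditions $P_i(T)=S_i(T)=0$, per-agent stationarity conditions assembled into the block-linear system whose coefficient matrix $e^{-\boldsymbol{\alpha}(T-1-t)}\Gamma + \mathbf{A}(t) + e^{-\boldsymbol{\alpha}(T-1-t)}\widehat{\mathbf{A}}(t)$ is invertible (indeed positive definite) under the hypothesis on Equation \eqref{eqn:Pi}, followed by substitution of the affine feedback law and coefficient matching to obtain Equations \eqref{eqn:Pi}--\eqref{eqn:Si}. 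Your handling of the stochasticity also mirrors the paper's, which drops the conditioning on $\hat{Y}_t$ inside the expectations precisely because $\{Z_t^j\}_{j=1}^N$ is independent of $\hat{Y}_t$, so the coefficient matrices of Table \ref{tab:matricesEqnPiSi} are deterministic and the recursion closes.
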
 
\begin{table}[H] 
\caption{The matrices defining the closed-form updates in Theorem \ref{thm:NashEqmExistence}.}
\label{tab:matricesEqnPiSi}
\begin{tabular}{ll}
\toprule
\textbf{Matrix} & \textbf{Closed-Form Expression}
\\
\midrule
$\mathbf{A}(t)$ & $\mathbb{E} \big\{ \big[ P_1(t+1) \mathbf{e}_1 Z^1_t, \dots, P_N(t+1) \mathbf{e}_N Z^N_t \big]^\top \big[ \mathbf{e}_1 Z^1_t, \dots,  \mathbf{e}_N Z^N_t  \big]  \big\}$                                                             
\\
$\widehat{\mathbf{A}}(t)$ & $\mathbb{E} \big\{ \big[ \mathbf{e}_1 Z^1_t, \dots, \mathbf{e}_N Z^N_t \big]^\top \mathbf{w}_t \mathbf{w}_t^\top \big[ \mathbf{e}_1 Z^1_t, \dots,  \mathbf{e}_N Z^N_t  \big]  \big\}$                               \\
$\mathbf{B}(t)$ & $ \mathbb{E} \big[ P_1(t+1) \mathbf{e}_1 Z^1_t, \dots, P_N(t+1) \mathbf{e}_N Z^N_t \big]^\top$                    
\\
$\mathbf{C}(t) $ & $\mathbb{E} \big[ S_1^\top(t+1) \mathbf{e}_1 Z^1_t, \dots, S_N^\top(t+1) \mathbf{e}_N Z^N_t \big]^\top$                                                                                                                                              
\\
$\mathbf{D}(t)$ & $ \mathbb{E} \big[ \mathbf{e}_1 Z^1_t , \, \mathbf{e}_2 Z^2_t ,  \, \dots , \, \mathbf{e}_N Z^N_t  \big]$                                                                                                                                          
\\
$\mathbf{D}_i(t) $ & $\mathbb{E} \big\{  \big[ \mathbf{e}_1 Z^1_t  ,  \,\dots , \, \mathbf{e}_N Z^N_t    \big]^\top P_i(t+1) \big[ \mathbf{e}_1 Z^1_t, \,\dots , \, \mathbf{e}_N Z^N_t \big]  \big\} , \quad i = 1, \dots, N,$                                            
\\
$\mathbf{G}(t)$ & $ - \big[ e^{-\boldsymbol{\alpha}(T-1-t)} \Gamma   + \mathbf{A}(t) + e^{-\boldsymbol{\alpha}(T-1-t)} \widehat{\mathbf{A}}(t)  \big]^{-1} \big[ e^{-\boldsymbol{\alpha}(T-1-t)} \mathbf{D}^\top(t) \mathbf{w}_t \mathbf{w}_t^\top + \mathbf{B}(t) \big]$ 
\\
$\mathbf{H}(t) $ & $ \big[ e^{-\boldsymbol{\alpha}(T-1-t)} \Gamma + \mathbf{A}(t) + e^{-\boldsymbol{\alpha}(T-1-t)} \widehat{\mathbf{A}}(t)  \big]^{-1} \big[ e^{-\boldsymbol{\alpha}(T-1-t)} \mathbf{D}(t)^\top \mathbf{w}_t y_{t+1} - \mathbf{C}(t) \big]$               
\\ 
\bottomrule
\end{tabular}

Here $e_i$ is the $i^{\text{th}}$ standard basis vector in $\mathbb{R}^N$,  $\mathbf{e}_i = e_i \otimes I_{d_y} = [0, \dots, I_{d_y} , \dots, 0]^\top , \quad \widehat{\mathbf{e}}_i = e_i \otimes I_{d_z} = [0, \dots, I_{d_z} , \dots, 0]^\top$, and 
$\mathbf{w}_t=[ w^1_t I_{d_{y}}, \dots, w^N_t I_{d_{y}} ]^\top$,  $\Gamma = \diag[ \gamma_1 I_{d_z} , \dots, \gamma_N I_{d_z}]$, and $\boldsymbol{\alpha} = \diag [\alpha_1 I_{d_z} , \dots, \alpha_N I_{d_z} ]$; where $i=1,\dots,N$.
\end{table}

In the most general form of our main result, as stated in Theorem \ref{thm:NashEqmExistence}, the computational bottleneck in the FL algorithm lies in computing the matrices described therein. However, explicit closed-form expressions are derived for deterministic (non-generative) feature encoders in Section \ref{s:Analysis__ss:PreTrained} and for RFNs in Section \ref{s:Analysis__ss:ELN}.  
The algorithmic formulation of Theorem \ref{thm:NashEqmExistence} is recorded in Algorithm \ref{alg:sync_subroutine}. Before detailing these simplified cases, we first illustrate a variant of our procedure where experts are temporarily restricted in the information governing their actions, leading to sub-optimal behaviour. This relaxation enables a more rapid computational throughput of their optimal actions at the cost of a controllable degree of sub-optimal behaviour.

\subsubsection{Speedy and Greedy: Client Acceleration via Infrequent Games} \label{greedy_algorithm}
{Though the optimal behaviour of each agent, in the proprietary setting, is now known explicitly in closed-form due to Theorem \ref{thm:NashEqmExistence}, the optimal strategy does involve several large matrix computations which can be computationally intensive.  Therefore, one can consider a more efficient \textit{variant} of the client-side procedure, which only computes the optimal updates in Equation \eqref{eqmvecbeta} periodically with a frequency of $\tau\in \mathbb{N}_+$.\footnote{Note when $\tau=1$ then both this section and the previous sections' strategies are identical.} We call these times $\{\tau t\}_{t=0}^{\infty}$ \textit{synchronization times} where the system of proprietary agents synchronize.  This is because the agents act greedily, and thus Nash-sub-optimally, for all other times, then their behaviour is \textit{de-synchronizing} for the optimal behaviour given by the Nash equilibrium computed in Theorem \ref{thm:NashEqmExistence}.

We consider the following \textit{greedy} objective for each agent between synchronization times.  We emphasize that this greedy objective ignores the prediction of all other agents and the weights issued/used by the central server. For each $t\in \mathbb{N}_+$ not divisible by $\tau$, the $i^{\text{th}}$ expert's greedy objective is defined to be
\begin{align} 
\label{eq:ridge_local}
        \tilde{J}_i(\beta_t^{i})
    \eqdef 
        \sum_{s=(t-T)\vee 0}^{t-1}\,
        e^{-\alpha_i(t-1-s)} 
        \big\| y_{s+1} - ( \hat{Y}_{s}^i 
        + Z_{s}^i \beta^i_t ) \big\|^2
        +
        \gamma_i\big\|\beta_t^i\big\|^2  , 
\end{align} 
where the parameters $\alpha_i$ and $\gamma_i$ are the same as Equation \eqref{eq:Objective_nth_expert}. 
This allows for rapid and fully parallelized online optimization of each model but has the drawback that the ensemble may deviate from its optimum. Note that the $T$ in Equation \eqref{eq:ridge_local} need not be the same as that in Equation \eqref{eq:Objective_nth_expert}. When required, the former is referred to as the Client $T$ while the later is the Game $T$.  
The following is a variant of the well-known optimal solution to the exponentially-weighted ridge-regression problem of Equation~\eqref{eq:ridge_local}.

\begin{proposition}[Agents: Optimal Linear Decoder With No Communication] 
\label{prop:localgreedyupdates} 
The unique minimum for Objective \eqref{eq:ridge_local} is
\begin{align} 
   \beta_t^i =
        \big(
            X^{i \top}_{t-1} X^i_{t-1} + \gamma_i I_{d_z}
        \big)^{-1}\, X_{t-1}^{i \top} \, \bar{y}^i_t  ,   \notag 
\end{align} 
where
\begin{align} 
& X^i_{t-1}  = \big[ e^{ - \alpha_i (t-1-0)/2} Z^{i\top}_0 , \ldots, e^{-\alpha_i (t-1-s)/2} Z^{i \top }_s, \ldots,  Z^{i \top }_{t-1} \big]^\top ,  \notag  \\ 
& \bar{y}^i_t  = \big[ e^{-\alpha_i (t-1-0)/2} (y_1 - \hat{Y}^i_{0} )^\top, \ldots, e^{-\alpha_i (t-1-s)/2} (y_{s+1} - \hat{Y}^i_{s} )^\top, \ldots,  (y_{t} - \hat{Y}^i_{t-1} )^\top \big]^\top . \notag  
\end{align} 
\end{proposition}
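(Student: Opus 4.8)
The plan is to recognize Objective~\eqref{eq:ridge_local} as a standard Tikhonov-regularized (ridge) least-squares problem in the single unknown $\beta_t^i \in \mathbb{R}^{d_z}$, once the exponential discount factors have been folded into the data. First I would introduce the residual targets $r_{s+1}^i \eqdef y_{s+1} - \hat{Y}_s^i \in \mathbb{R}^{d_y}$, so that each summand of the data-fidelity term reads $e^{-\alpha_i(t-1-s)}\|r_{s+1}^i - Z_s^i\beta_t^i\|^2$. Since $\alpha_i \ge 0$, every weight $e^{-\alpha_i(t-1-s)}$ is nonnegative and has a real square root $e^{-\alpha_i(t-1-s)/2}$, which I would absorb into both the feature block and the target, using $e^{-\alpha_i(t-1-s)}\|r_{s+1}^i - Z_s^i\beta_t^i\|^2 = \|e^{-\alpha_i(t-1-s)/2}r_{s+1}^i - e^{-\alpha_i(t-1-s)/2}Z_s^i\beta_t^i\|^2$.

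Next I would stack these weighted blocks vertically over $s$ across the look-back window. Stacking the scaled feature matrices $e^{-\alpha_i(t-1-s)/2}Z_s^i \in \mathbb{R}^{d_y \times d_z}$ reproduces exactly the tall matrix $X_{t-1}^i$ of the statement (whose block-row structure is precisely what the inner transposes in its definition encode), while stacking the scaled residual targets yields the tall vector $\bar{y}_t^i$. Because the Euclidean norm of a stacked vector is the sum of the squared norms of its blocks, the entire discounted data term collapses to $\|\bar{y}_t^i - X_{t-1}^i\beta_t^i\|^2$, so the full objective becomes
\[
\tilde{J}_i(\beta_t^i) = \big\|\bar{y}_t^i - X_{t-1}^i\beta_t^i\big\|^2 + \gamma_i\big\|\beta_t^i\big\|^2 .
\]

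With the objective in this canonical form, I would expand the quadratic and differentiate in $\beta_t^i$, obtaining $\nabla\tilde{J}_i(\beta_t^i) = -2X_{t-1}^{i\top}\big(\bar{y}_t^i - X_{t-1}^i\beta_t^i\big) + 2\gamma_i\beta_t^i$; setting this to zero gives the normal equations $\big(X_{t-1}^{i\top}X_{t-1}^i + \gamma_i I_{d_z}\big)\beta_t^i = X_{t-1}^{i\top}\bar{y}_t^i$, which rearrange into the claimed formula. For existence and uniqueness I would observe that the Hessian $2\big(X_{t-1}^{i\top}X_{t-1}^i + \gamma_i I_{d_z}\big)$ is positive definite: $X_{t-1}^{i\top}X_{t-1}^i \succeq 0$ always, and $\gamma_i I_{d_z} \succ 0$ since $\gamma_i > 0$. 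Hence $\tilde{J}_i$ is strictly convex and coercive, so its unique global minimizer is the solution of the normal equations, and the regularized Gram matrix is invertible, justifying the inverse in the stated expression.

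The main obstacle is bookkeeping rather than conceptual: one must carefully verify that the block-transpose conventions in the definitions of $X_{t-1}^i$ and $\bar{y}_t^i$ indeed realize the intended vertical stacking of the $d_y \times d_z$ feature blocks and the $d_y$-dimensional residual targets, and that the summation range $(t-T)\vee 0,\dots,t-1$ in Objective~\eqref{eq:ridge_local} is aligned with the index set underlying the stated matrices (the displayed definitions implicitly take the window to begin at $s=0$, i.e.\ the regime $t \le T$). Once these dimensional and indexing alignments are confirmed, the remainder is the classical ridge-regression derivation and requires no further machinery.
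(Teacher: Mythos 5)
Your proposal is correct and follows essentially the same route as the paper: fold the square roots of the exponential weights into the features and residual targets, recognize the result as a standard ridge regression, and invoke strict convexity (via $\gamma_i>0$) for the unique minimizer. The only difference is that you derive the normal equations explicitly, whereas the paper cites the textbook closed-form solution at that step.
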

}

\subsection{Algorithmic Compute, Communication, and Privacy Trade-offs}

Two computational, and subtly different, implementations of the proposed algorithm are possible, both of which minimize the agent objectives in Equation \eqref {eq:Objective_nth_expert} using the iterative method from Theorem \ref{thm:NashEqmExistence}. In the first, detailed in Algorithm \ref{alg:sync_subroutine}, each agent computes $\mathbf{D}_i(t)$, $P_i(t)$, and $S_i(t)$ in parallel. Alternatively, all computations can be centralized on the server. While both are mathematically equivalent, they differ regarding computational burden, communication overhead, and privacy implications.

Computing the Nash equilibrium is resource-intensive. If the server is compute-constrained, one can parallelize the construction of $\mathbf{D}_i(t)$, $P_i(t)$, and $S_i(t)$ across agents to alleviate the bottleneck. However, this introduces trade-offs. Each agent must access $\{Z_t^i\}_{i=1}^N$, which exposes potentially proprietary model activations to other participants in the network. Moreover, while all required history $\{Z_t^i\}_{i=1}^N$ may be gather and shared once, quantities like $\mathbf{A}(t)$ and $\mathbf{B}(t)$ depend on $\{P_i(t+1)\}_{i=1}^N$ and must be updated and communicated at each time step -- incurring additional communication costs and privacy risks.

Alternatively, if each agent transmits all required $\{Z_t^i\}_{i=1}^N$, either all at once or alongside predictions, to the server. Then all information needed to perform the iterative updates of Theorem \ref{thm:NashEqmExistence} exists at the server's disposal. This one-shot communication significantly reduces overhead and enhances privacy, as agents do not access potentially sensitive activations from others. Such concealment is standard, for example, in commercial LLM APIs to prevent model inversion or theft \citep{Oliynyk1}. The trade-off is that the server must now centrally perform the full linear algebra computations, which can be resource intensive. This centralized setup is used in the experiments below.

\section{Use-Cases: From Pre-Trained Encoders to Echo-State Agents}
\label{s:Examples}

This section puts our theory into practice for various AI agents ranging from ``static'' kernel regressors to sequential deep learning models with internal latent ``memory states.'' In each case, we derive simplified and specialized expressions of the main result (Theorem \ref{thm:NashEqmExistence}), yielding streamlined algorithms.  We consider three cases: pre-trained feature encoders, RFNs, and ESNs. In each case, the proposed theoretical PFL algorithm is tested across multiple datasets, validating its reliability on synthetic and real-world data.

\subsection{Experiment Details} 

Before detailing each case, the experiments used to evaluate the quality of the proposed PFL algorithm are outlined. We consider five challenging time-series datasets. Three are synthetic and and two are drawn from real-world benchmarks. Each is discussed in detail in Appendix \ref{s:ExperimentDetials__ss::datasets}. The experiments involve multiple agents and a server, with predictions made using two or five collaborating agents. Periodic synchronization via the Nash game, defined in Section \ref{nash_sync}, is compared with purely greedy local optimization, detailed in Section \ref{greedy_algorithm}, forgoing any Nash computations. These two approaches are often referred to throughout as the game and non-game settings, respectively. Experimental results for the distinct model varieties are reported in Sections \ref{s:Analysis__ss:PreTrained}--\ref{s:Examples_ReservoirComputer}, with visualizations focusing on the five-client settings. These sections also derive how the main algorithm (Theorems \ref{thm:OptimalWeights} and \ref{thm:NashEqmExistence}) simplifies for each model. A model's predictive performance is primarily quantified by its predictive mean-squared error (MSE), with optimal hyperparameters reported for both Nash and non-Nash settings.  

Various other quantitative tools and qualitative visualizations are also incorporated to provide a complete perspective on each model and algorithm's performance.  The first set of plots overlay the predictions made by the server in both the game and non-game settings with the ground truth signal for comparison. The second set of figures illustrates the distribution of relative squared errors between the game and non-game predictions at every time step,
\begin{align*}
\frac{(\hat{Y}^n_{i, t} - y_{i, t})^2}{(\hat{Y}^g_{i, t} - y_{i, t})^2},
\end{align*}
where $y_{i, t}$ is the $i^{\text{th}}$ dimension of the target variable at time step $t$. The corresponding predictions with and without the Nash-game are denoted $\hat{Y}^g_{i, t}$ and $\hat{Y}^n_{i, t}$, respectively. If the value of the ratio at a given time step is greater than one, the server's prediction error in the Nash setting is lower than that of the non-Nash setting.

Thereafter, in Section \ref{s:Experiments__ss:Ablation}, three ablation studies are presented. First, the impact of the Nash game on a server's mixture weights over time is illustrated, highlighting its positive influence towards producing higher quality mixtures. In the second study, the impact of varying the look-back length on Nash game performance is analyzed, offering deeper insights into its internal mechanisms. Finally, the Nash game's performance under different synchronization frequencies is considered, along with runtime overhead measurements, including the total runtime incurred by playing the Nash game each round for all agent types across two datasets. For additional experimental details, hyperparameters, and further ablations, such as motivation for using Hard Sigmoid activations with ESNs, see Appendices \ref{s:ExperimentDetials__ss::datasets}--\ref{a:Ablations}.\footnote{All experimental code is found at: \url{https://github.com/fatemetkl/FedMOE}.}

\subsection{Experimental Results Summary}

The main results are reported in Table \ref{tab:summary_results}. For all datasets and all models, application of Nash-game synchronization improves MSE, sometimes by an order of magnitude. The Nash game is especially helpful for the Concept Shift, BoC Exchange Rate, and ETT time-series. For both ESN and RFN models, there is also typically an improvement in prediction accuracy, both with and without the Nash game, with more experts contributing to the mixture. Such improvement is largely expected, especially since these models incorporate randomness in their feature representations.

\begin{table}[ht]
\centering
\caption{Summary of experimental results for feature encoder, RFN, and ESN models across datasets. The reported MSE is the mean of three separate runs with different random seeds for ESNs and RFNs. For the pre-trained encoder models, there are no fluctuations due to randomness.}
\label{tab:summary_results}
\resizebox{0.9\linewidth}{!}{\begin{tabular}{cclccc}
\toprule
Model & \# of Experts & Dataset & Nash game & No Nash game \\
\midrule
\multirow{10}{*}{Transformer} & \multirow{5}{*}{2} & Periodic Signal & $\mathbf{1.78749\mathrm{e}{\text{-}1}}$ & $5.86464\mathrm{e}{\text{-}1}$ \\
& & Logistic Map & $\mathbf{4.53782\mathrm{e}{\text{-}2}}$ & $5.46232\mathrm{e}{\text{-}2}$ \\
& & Concept Shift & $\mathbf{6.45116\mathrm{e}{\text{-}2}}$ & $6.77600\mathrm{e}{\text{-}1}$ \\
& & BoC Exchange Rate & $\mathbf{4.48450\mathrm{e}{\text{-}5}}$ & $8.06623\mathrm{e}{\text{-}5}$ \\
& & ETT Series & $\mathbf{7.50786\mathrm{e}{\text{-}4}}$ & $1.05523\mathrm{e}{\text{-}3}$ \\
\cmidrule{2-6}
& \multirow{5}{*}{5} & Periodic Signal & $\mathbf{6.95994\mathrm{e}{\text{-}2}}$ & $5.86337\mathrm{e}{\text{-}1}$ \\
& & Logistic Map & $\mathbf{3.88076\mathrm{e}{\text{-}2}}$ & $6.99996\mathrm{e}{\text{-}2}$ \\
& & Concept Shift & $\mathbf{5.56220\mathrm{e}{\text{-}2}}$ & $3.25625\mathrm{e}{\text{-}1}$ \\
& & BoC Exchange Rate & $\mathbf{4.48687\mathrm{e}{\text{-}5}}$ & $8.35143\mathrm{e}{\text{-}5}$ \\
& & ETT Series & $\mathbf{7.27732\mathrm{e}{\text{-}4}}$ & $1.26992\mathrm{e}{\text{-}3}$ \\
\midrule
\multirow{10}{*}{RFN} & \multirow{5}{*}{2} & Periodic Signal & $\mathbf{2.56902\mathrm{e}{\text{-}1}}$ & $6.28833\mathrm{e}{\text{-}1}$ \\
& & Logistic Map & $\mathbf{4.97284\mathrm{e}{\text{-}2}}$ & $5.54874\mathrm{e}{\text{-}2}$ \\
& & Concept Shift & $\mathbf{7.85654\mathrm{e}{\text{-}3}}$ & $9.45541\mathrm{e}{\text{-}0}$ \\
& & BoC Exchange Rate & $\mathbf{7.86047\mathrm{e}{\text{-}5}}$ & $2.10533\mathrm{e}{\text{-}3}$ \\
& & ETT Series & $\mathbf{1.19513\mathrm{e}{\text{-}3}}$ & $3.51060\mathrm{e}{\text{-}3}$ \\
\cmidrule{2-6}
& \multirow{5}{*}{5} & Periodic Signal & $\mathbf{ 5.96514\mathrm{e}{\text{-}2}}$ & $6.15566\mathrm{e}{\text{-}1}$ \\
& & Logistic Map & $\mathbf{4.64415\mathrm{e}{\text{-}2}}$ & $5.50781\mathrm{e}{\text{-}2}$ \\
& & Concept Shift & $\mathbf{1.01067\mathrm{e}{\text{-}2}}$ & $1.28812\mathrm{e}{\text{-}0}$ \\
& & BoC Exchange Rate & $\mathbf{7.65036\mathrm{e}{\text{-}5}}$ & $3.15237\mathrm{e}{\text{-}3}$ \\
& & ETT Series & $\mathbf{7.11836\mathrm{e}{\text{-}4}}$ & $2.26659\mathrm{e}{\text{-}3}$ \\
\midrule
\multirow{10}{*}{ESN} & \multirow{5}{*}{2} & Periodic Signal & $\mathbf{1.02870\mathrm{e}{\text{-}1}}$ & $5.62155\mathrm{e}{\text{-}1}$ \\
& & Logistic Map & $\mathbf{4.71543\mathrm{e}{\text{-}2}}$ & $5.30976\mathrm{e}{\text{-}2}$ \\
& & Concept Shift & $\mathbf{1.30703\mathrm{e}{\text{-}1}}$ & $1.98226\mathrm{e}{\text{-}0}$ \\
& & BoC Exchange Rate & $\mathbf{5.15198\mathrm{e}{\text{-}5}}$ & $2.18067\mathrm{e}{\text{-}4}$ \\
& & ETT Series & $\mathbf{7.13503\mathrm{e}{\text{-}4}}$ & $1.11331\mathrm{e}{\text{-}3}$ \\
\cmidrule{2-6}
& \multirow{5}{*}{5} & Periodic Signal & $\mathbf{3.22272\mathrm{e}{\text{-}2}}$ & $5.57315\mathrm{e}{\text{-}1}$  \\
& & Logistic Map & $\mathbf{4.65061\mathrm{e}{\text{-}2}}$ & $5.28456\mathrm{e}{\text{-}2}$ \\
& & Concept Shift & $\mathbf{8.86346\mathrm{e}{\text{-}2}}$ & $4.85359\mathrm{e}{\text{-}0}$ \\
& & BoC Exchange Rate & $\mathbf{1.15900\mathrm{e}{\text{-}4}}$ & $ 4.08432\mathrm{e}{\text{-}4}$ \\
& & ETT Series & $\mathbf{7.15441\mathrm{e}{\text{-}4}}$ & $1.08915\mathrm{e}{\text{-}3}$ & \\
\bottomrule
\end{tabular}}
\end{table}
\subsection{{Pre-Trained Transformer with Fine-Tunable Linear Decoder}}
\label{s:Analysis__ss:PreTrained}

We first consider the setting in which each (deep) feature encoder $\varphi^{i}$ in Equation \eqref{eq:ENCODER} is pre-trained and deterministic. This is likely the most common case in our setting, occurring when $\varphi^{i}$ is, for instance, a pre-trained transformer or a foundation model such as  CLIP \citep{radford2021learning}, DINO(v2) \citep{caron2021emerging,oquab2024dinov2learningrobustvisual}, SLIP(v2) \citep{mu2022slip,tschannen2025siglip}, among many others. Beyond its prevalence in modern deep learning, this setup offers a key advantage. The dynamically-evolving matrices in Theorem \ref{thm:NashEqmExistence} simplify dramatically, yielding surprisingly straightforward expressions. This is largely due to all expectations vanishing, as the encoder has no remaining internal sources of randomness.

\begin{corollary}[Pre-Trained Deterministic Feature Encoders]
\label{cor:PreTrainedTransformer}
In the setting of Theorem \ref{thm:NashEqmExistence}.  If additionally, for $i=1,\dots,N$, the map $\varphi^{i}:\mathbb{R}^{%(1+t)
d_x
\times 1}\to \mathbb{R}^{d_z\times 1}$ is of the form
\begin{equation*}
    \varphi^{i}(x_{[0:t]},Z_{t-1}^i, \epsilon_t^i ) \eqdef \phi^{i}(x_t).
\end{equation*}
{Then the matrix-valued processes in Theorem \ref{thm:NashEqmExistence} are}
\allowdisplaybreaks 
\begin{align} 
& \mathbf{A}(t) = 
\begin{bmatrix} 
\phi^{i}(x_t)^{\top} \mathbf{e}_i^\top P_i(t+1) \mathbf{e}_j\phi^{(j)}(x_t) 
\end{bmatrix}_{i,j=1}^N , 
\notag \\ 
& \widehat{\mathbf{A}}(t) = 
\begin{bmatrix} 
\phi^{i}(x_t)^{\top} \mathbf{e}_i^\top \mathbf{w}_t \mathbf{w}_t^\top  \mathbf{e}_j\phi^{(j)}(x_t) 
\end{bmatrix}_{i,j=1}^N , 
\notag \\ 
& \mathbf{B}(t) = 
 \begin{bmatrix} 
     P_1(t+1)  \mathbf{e}_1 \phi^{(1)}(x_t) , \dots,  P_N(t+1) \mathbf{e}_N \phi^{(N)}(x_t)
 \end{bmatrix}^\top  
 ,  
 \notag \\ 
 & \mathbf{C}(t)  = 
 \begin{bmatrix} 
    S_1(t+1)^\top  \mathbf{e}_1 \phi^{(1)}(x_t) , \dots, S_N(t+1)^\top  \mathbf{e}_N \phi^{(N)}(x_t)
 \end{bmatrix}^\top  
 , 
 \notag \\ 
& \mathbf{D}(t) 
 = 
 \begin{bmatrix} 
 \mathbf{e}_1 \phi^{(1)}(x_t) , \dots , \mathbf{e}_N \phi^{(N)}(x_t) 
 \end{bmatrix} , 
 \notag \\ 
 & \mathbf{D}_i(t) 
 = \big[ \phi^{(j)}(x_t)^\top \mathbf{e}_j^\top P_i(t+1) \mathbf{e}_k \phi^{(k)}(x_t) \big]_{j,k=1}^N , 
 \quad i = 1, \dots, N . 
 \notag 
\end{align} 
\end{corollary}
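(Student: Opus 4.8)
The plan is to observe that the memoryless, deterministic form $\varphi^{i}(x_{[0:t]},Z_{t-1}^i,\epsilon_t^i)=\phi^{i}(x_t)$ strips out the only source of randomness feeding the matrices of Table \ref{tab:matricesEqnPiSi}, so that every expectation appearing there collapses to a plain evaluation at $Z_t^i=\phi^{i}(x_t)$. Concretely, in the general encoder \eqref{eq:ENCODER} the latent state $Z_t^i$ is random only through the internal noise $\epsilon_t^i$ and the recursively-random predecessor $Z_{t-1}^i$; under the hypothesis both dependencies are removed, and $Z_t^i=\phi^{i}(x_t)$ becomes a deterministic function of the (given) observed input $x_t$. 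Since the server weights $\mathbf{w}_t$ and the targets $y_{t+1}$ are likewise given data, the only remaining candidate randomness sits in the backward iterates $P_i(t+1)$ and $S_i(t+1)$.

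First I would dispatch this last point by a downward induction on $t$ over the recursions \eqref{eqn:Pi} and \eqref{eqn:Si}. At the terminal time the boundary data $P_i(T)=0$ and $S_i(T)=0$ are trivially deterministic and symmetric. Assuming $P_i(t+1)$ and $S_i(t+1)$ deterministic, each of $\mathbf{A}(t),\widehat{\mathbf{A}}(t),\mathbf{B}(t),\mathbf{C}(t),\mathbf{D}(t),\mathbf{D}_i(t)$ is an expectation of a deterministic integrand and hence deterministic, so $\mathbf{G}(t)$ and $\mathbf{H}(t)$ are deterministic, and therefore so are $P_i(t)$ and $S_i(t)$; symmetry of $P_i(t)$ propagates because every additive term on the right-hand side of \eqref{eqn:Pi} is manifestly symmetric once $P_i(t+1)$ is. This establishes that throughout the recursion every $P_i$ and $S_i$ is a deterministic symmetric matrix, so the expectation operator in Table \ref{tab:matricesEqnPiSi} never acts on anything random beyond the now-deterministic $Z_t^i$.

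With the expectations rendered vacuous, the remaining work is bookkeeping: substitute $Z_t^i\mapsto \phi^{i}(x_t)$ into each definition and expand the block structure. The matrices $\mathbf{B}(t)$, $\mathbf{C}(t)$, and $\mathbf{D}(t)$ are linear in the $Z_t^i$, so the substitution is immediate and matches the stated expressions verbatim. For the quadratic objects $\mathbf{A}(t)$, $\widehat{\mathbf{A}}(t)$, and $\mathbf{D}_i(t)$ I would read off the $(i,j)$ (respectively $(j,k)$) block of the product of the stacked matrix $[\mathbf{e}_1 Z_t^1,\dots,\mathbf{e}_N Z_t^N]$ with itself: for $\mathbf{A}(t)$ this block equals $(P_i(t+1)\mathbf{e}_i\phi^{i}(x_t))^\top(\mathbf{e}_j\phi^{j}(x_t))=\phi^{i}(x_t)^\top\mathbf{e}_i^\top P_i(t+1)^\top\mathbf{e}_j\phi^{j}(x_t)$, and the transpose on $P_i(t+1)$ disappears by the symmetry established above, yielding exactly the claimed form; the computations for $\widehat{\mathbf{A}}(t)$ and $\mathbf{D}_i(t)$ are identical with $P_i(t+1)$ replaced by $\mathbf{w}_t\mathbf{w}_t^\top$ and by the common $P_i(t+1)$, respectively.

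The only genuine subtlety, hardly an obstacle, is keeping the block-index convention $\mathbf{e}_i=e_i\otimes I_{d_y}$ straight so that $\mathbf{e}_i^\top P_i(t+1)\mathbf{e}_j$ correctly extracts the $(i,j)$ block of $P_i(t+1)$, and confirming that no expectation secretly couples the distinct agents' noises (it does not, since all $\epsilon_t^i$-dependence has been removed). Everything else is a direct plug-in, which is precisely why the general updates of Theorem \ref{thm:NashEqmExistence} simplify so drastically in the pre-trained deterministic regime.
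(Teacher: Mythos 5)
Your proposal is correct and follows essentially the same route as the paper, which treats this corollary as an immediate specialization of Theorem \ref{thm:NashEqmExistence}: once the encoder is deterministic and memoryless, $Z_t^i=\phi^{i}(x_t)$ is non-random, so every expectation in Table \ref{tab:matricesEqnPiSi} collapses to a plain substitution, and the paper offers no separate proof beyond this observation. Your backward induction showing that each $P_i$ and $S_i$ is deterministic and symmetric --- so that the $(i,j)$ block $\phi^{i}(x_t)^{\top}\mathbf{e}_i^\top P_i(t+1)^{\top}\mathbf{e}_j\phi^{(j)}(x_t)$ may be written without the transpose on $P_i(t+1)$ --- is a careful touch the paper leaves implicit, but it does not alter the substance of the argument.
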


In this section, the deep encoder considered is a pre-trained transformer-based encoder model. Pre-training details are outlined in Appendix \ref{a:TransExpHyper}. The final linear layer constitutes the trainable $\beta$. In subsequent sections, sources of randomness are incrementally incorporated into the models.

\begin{figure}[ht!]
    \centering
    \begin{subfigure}[b]{0.99\linewidth}
    \centering
    \begin{minipage}[c]{0.05\linewidth}
    \caption{}
    \end{minipage}
    \begin{minipage}[c]{0.8\linewidth}
        \centering
        \includegraphics[width=\linewidth]{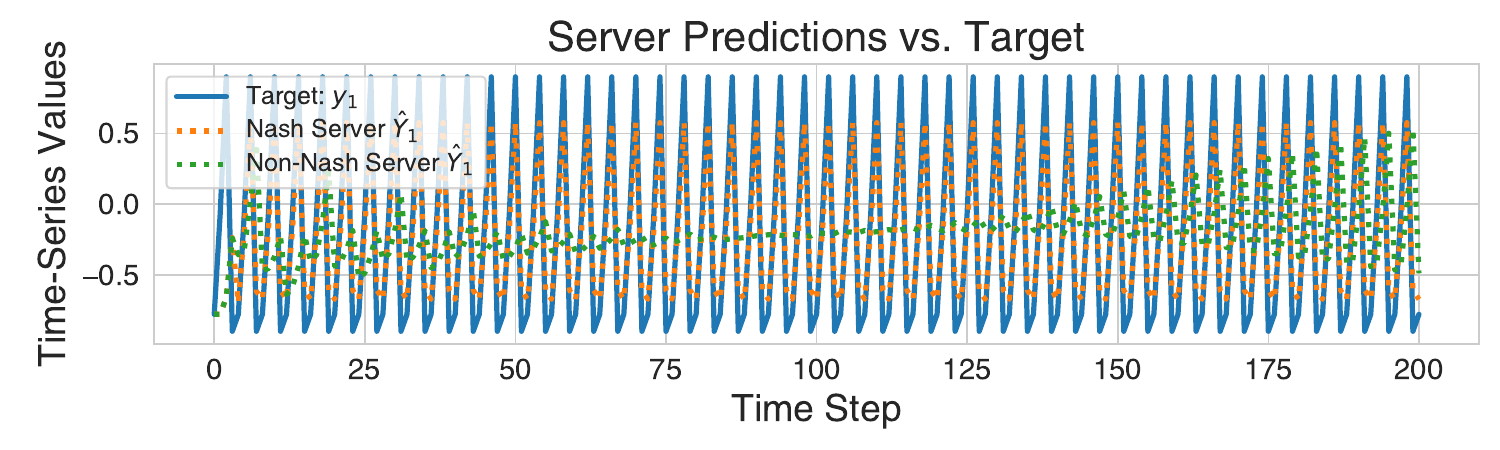}
    \end{minipage}
    \label{fig:transformer_periodic_overlaid_predictions}
    \end{subfigure}

    \begin{subfigure}[b]{0.99\linewidth}
    \centering
    \begin{minipage}[c]{0.05\linewidth}
    \caption{}
    \end{minipage}
    \begin{minipage}[c]{0.8\linewidth}
        \centering
        \includegraphics[width=\linewidth]{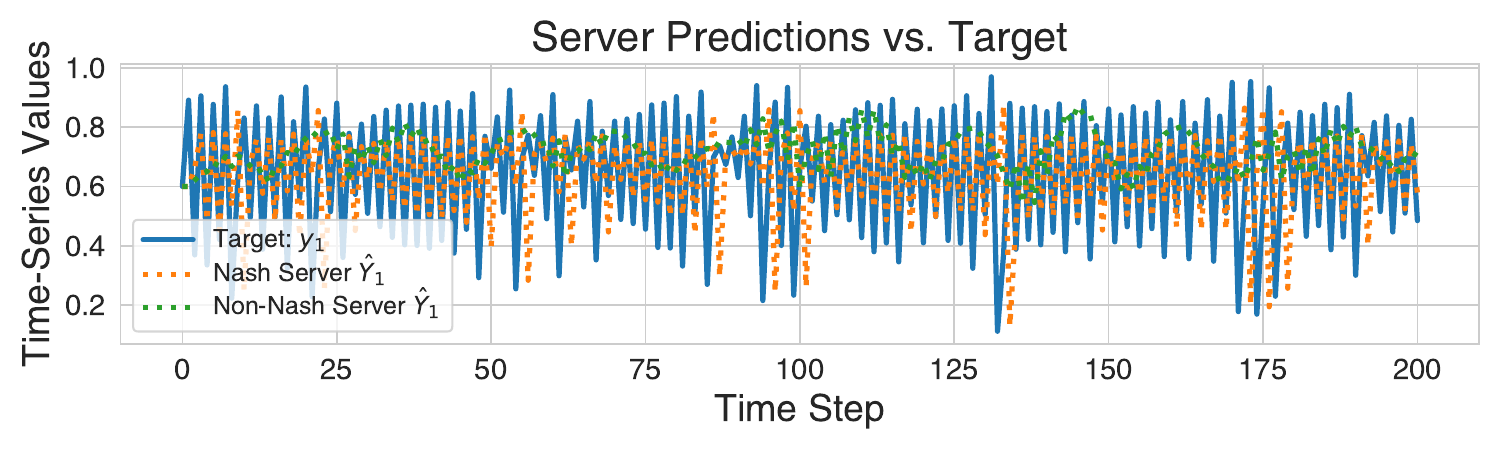}
    \end{minipage}
    \label{fig:transformer_logistic_overlaid_predictions}
    \end{subfigure}

    \begin{subfigure}[b]{0.99\linewidth}
    \centering
    \begin{minipage}[c]{0.05\linewidth}
    \caption{}
    \end{minipage}
    \begin{minipage}[c]{0.8\linewidth}
        \centering
        \includegraphics[width=\linewidth]{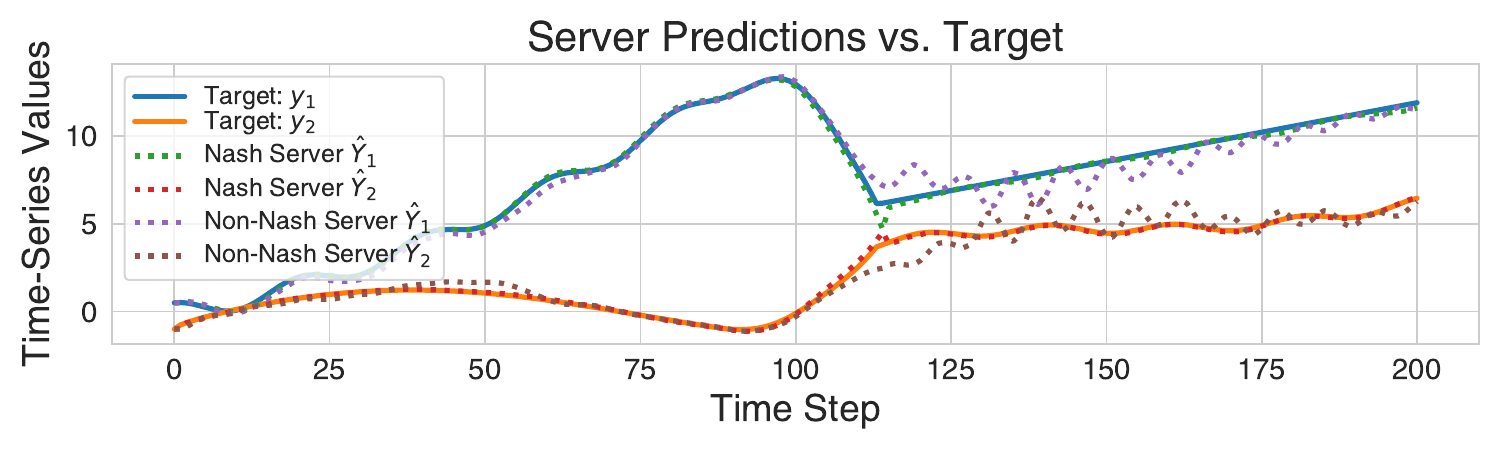}
    \end{minipage}
    \label{fig:transformer_concept_overlaid_predictions}
    \end{subfigure}
    
    \begin{subfigure}[b]{0.99\linewidth}
    \centering
    \begin{minipage}[c]{0.05\linewidth}
    \caption{}
    \end{minipage}
    \begin{minipage}[c]{0.8\linewidth}
        \centering
        \includegraphics[width=\linewidth]{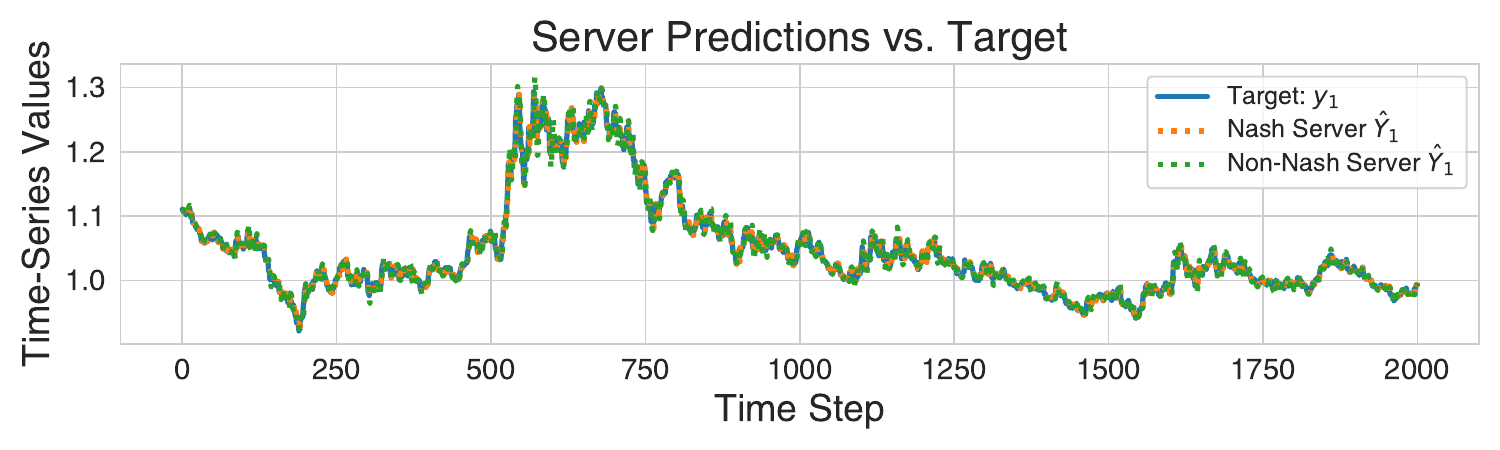}
    \end{minipage}
    \label{fig:transformer_boc_overlaid_predictions}
    \end{subfigure}

    \begin{subfigure}[b]{0.99\linewidth}
    \centering
    \begin{minipage}[c]{0.05\linewidth}
    \caption{}
    \end{minipage}
    \begin{minipage}[c]{0.8\linewidth}
        \centering
        \includegraphics[width=\linewidth]{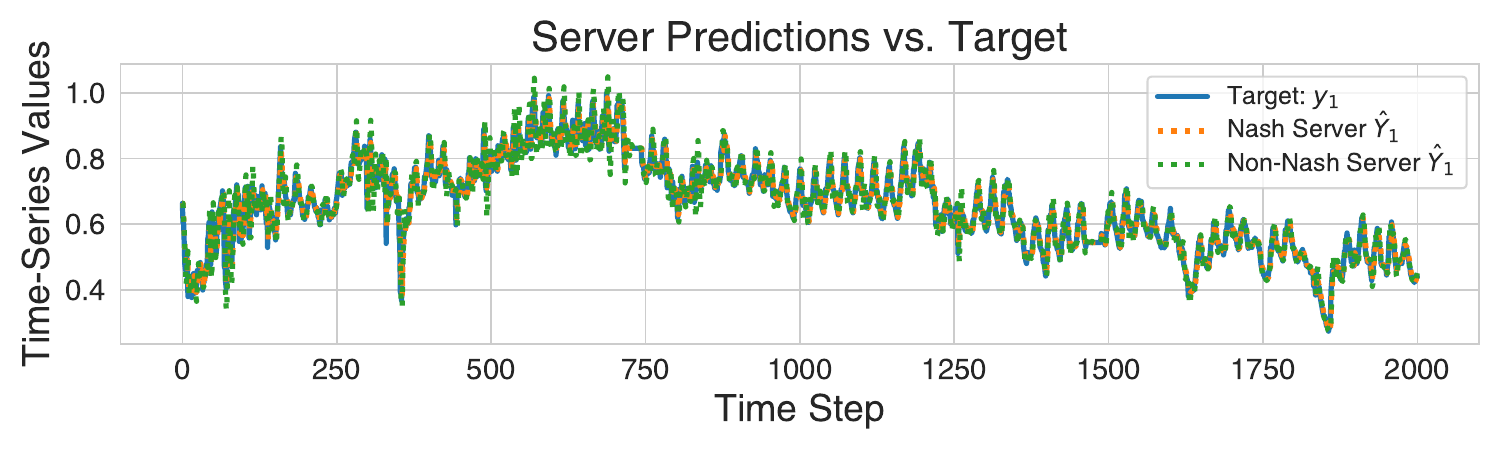}
    \end{minipage}
    \label{fig:transformer_ett_overlaid_predictions}
    \end{subfigure}
    \caption{Server predictions compared with ground truth targets for mixtures of encoder models with and without Nash-game synchronization for the Periodic (\subref{fig:transformer_periodic_overlaid_predictions}), Logistic (\subref{fig:transformer_logistic_overlaid_predictions}), Concept Drift (\subref{fig:transformer_concept_overlaid_predictions}), BoC Exchange Rates (\subref{fig:transformer_boc_overlaid_predictions}), and ETT (\subref{fig:transformer_ett_overlaid_predictions}) time series.}
    \label{fig:transfomer_overlaid}
\end{figure}

\subsubsection{Transformer Experimental Results}
\label{s:Examples__ss:PreTrained__sss:Results}

As seen in Figure \ref{fig:transfomer_overlaid}, predictions leveraging the Nash game produce significantly better modelling than those without it. Qualitatively, this improvement is especially notable in high-frequency segments of the time series and following sudden changes in the trajectory of the signals. Figure \ref{fig:transfomer_relative} shows the squared-error ratio of the non-game setting to that of the Nash game, as described in Section \ref{s:Examples}, for transformer agents over different signals. Points above the red line indicate time steps for which the Nash game is outperforming the baseline. A high density of such points above the red line reflects the performance advantages of Nash-game synchronization.

In the Periodic and Logistic time series, predictions produced by the transformer models without Nash synchronization tend towards the overall series mean. In contrast, models with Nash synchronization more effectively capture the amplitude of the underlying oscillations. While the non-game predictions for the Periodic signal may appear accurate at the beginning and end of the series, the ratio of squared errors in Figure \ref{fig:transfomer_relative}\subref{fig:transformer_periodic_squared_errors} shows that the error remains consistently and significantly higher than that of the Nash game predictions across the entire time span.
The Logistic Map dataset is quite challenging, especially with only a single lagged input from which to make predictions. This series poses difficulties for the transformer models in both settings due to its oscillatory and occasionally unstable nature. However, as shown in Figures \ref{fig:transfomer_overlaid}\subref{fig:transformer_logistic_overlaid_predictions} and \ref{fig:transfomer_relative}\subref{fig:transformer_logistic_squared_errors}, predictions with Nash-game synchronization capture the series' behavior significantly better. Interestingly, for some time steps, the non-game settings tendency to follow the mean can reduce the risk of sharp deviations that lead to larger errors in the game-based model during changes in dynamics.

The Concept Drift dataset incorporates multi-dimensional inputs and targets. In this series, a smooth shift in the relationship between the input and target variables occurs halfway through. As shown in Figure \ref{fig:transfomer_overlaid}\subref{fig:transformer_concept_overlaid_predictions}, the Nash game predictions more cleanly adapt to this change. In contrast, predictions without Nash synchronization struggle to adjust to the transition, resulting in a delayed alignment with the target signal and degraded fidelity. 

\begin{figure}[ht!]
    \centering
    \begin{subfigure}[b]{0.99\linewidth}
    \centering
    \begin{minipage}[c]{0.05\linewidth}
    \caption{}
    \end{minipage}
    \begin{minipage}[c]{0.7\linewidth}
        \centering
        \includegraphics[width=\linewidth]{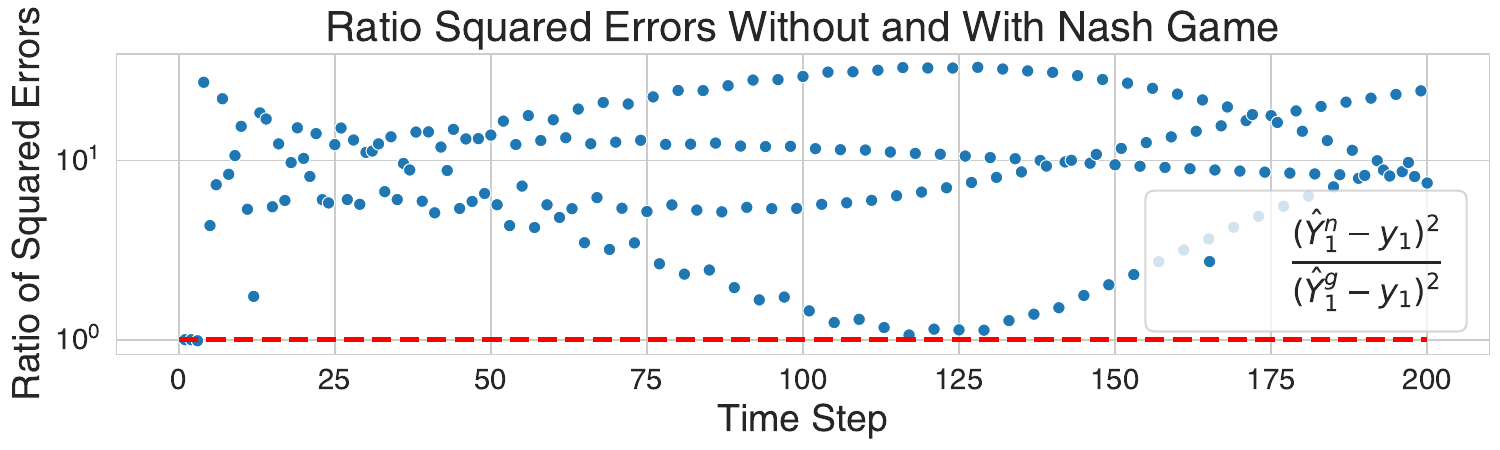}
    \end{minipage}
    \label{fig:transformer_periodic_squared_errors}
    \end{subfigure}

    \begin{subfigure}[b]{0.99\linewidth}
    \centering
    \begin{minipage}[c]{0.05\linewidth}
    \caption{}
    \end{minipage}
    \begin{minipage}[c]{0.7\linewidth}
        \centering
        \includegraphics[width=\linewidth]{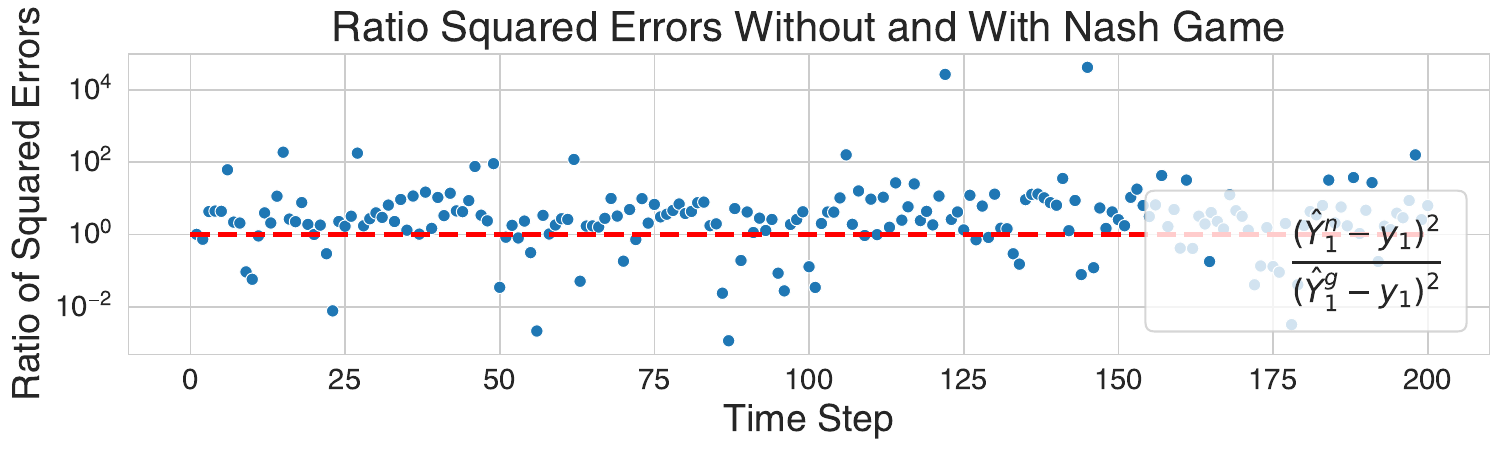}
    \end{minipage}
    \label{fig:transformer_logistic_squared_errors}
    \end{subfigure}
    
    \begin{subfigure}[b]{0.99\linewidth}
    \centering
    \begin{minipage}[c]{0.05\linewidth}
    \caption{}
    \end{minipage}
    \begin{minipage}[c]{0.7\linewidth}
        \centering
        \includegraphics[width=\linewidth]{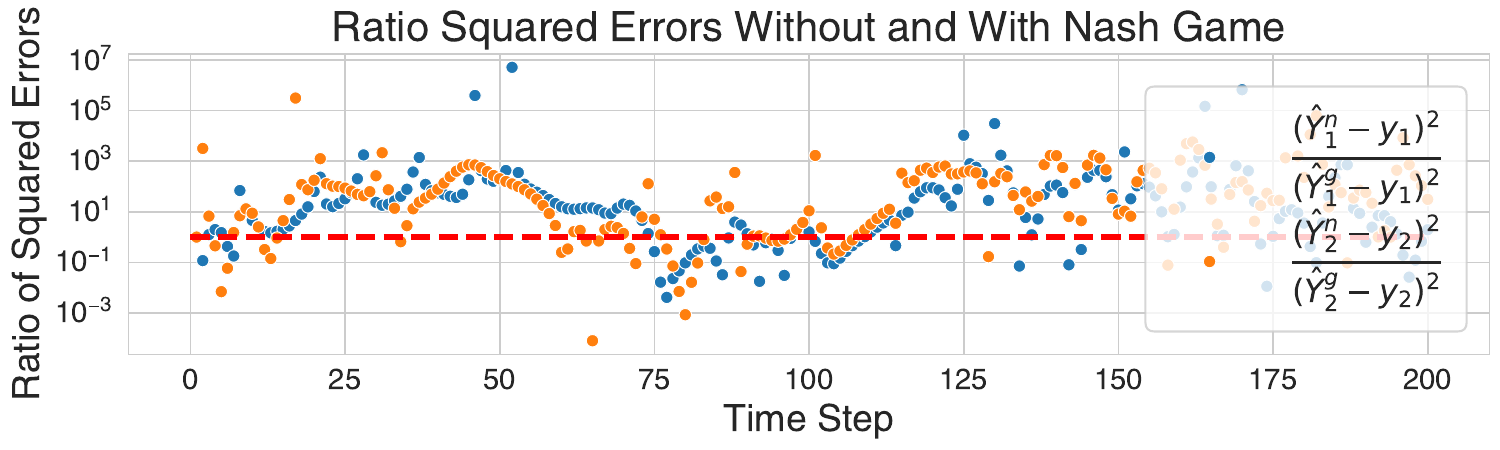}
    \end{minipage}
    \label{fig:transformer_concept_squared_errors}
    \end{subfigure}

    \begin{subfigure}[b]{0.99\linewidth}
    \centering
    \begin{minipage}[c]{0.05\linewidth}
    \caption{}
    \end{minipage}
    \begin{minipage}[c]{0.7\linewidth}
        \centering
        \includegraphics[width=\linewidth]{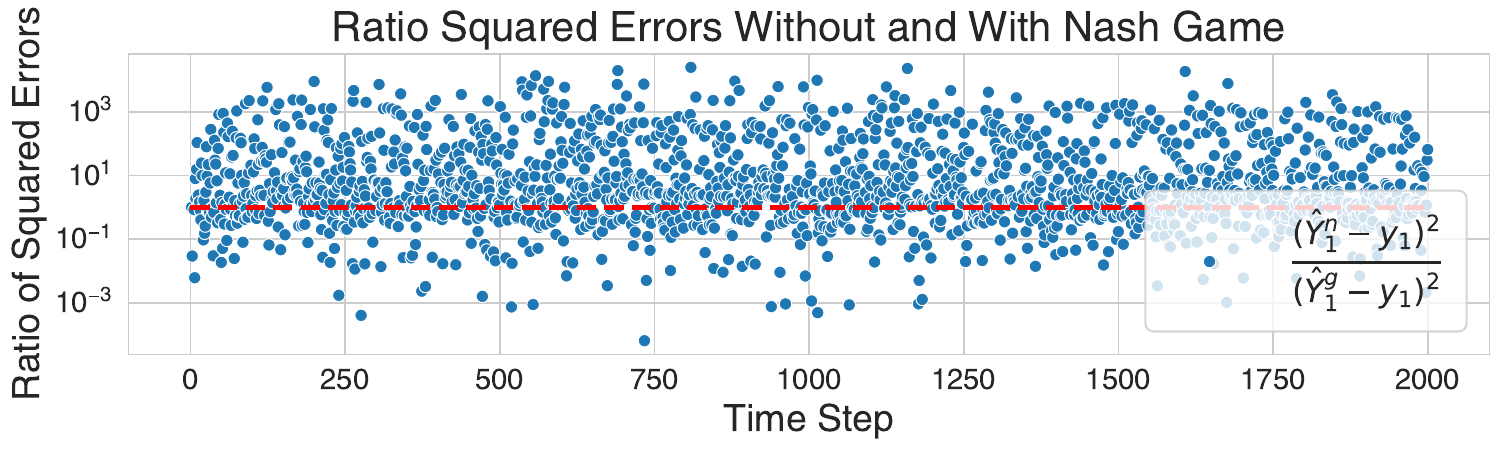}
    \end{minipage}
    \label{fig:transformer_boc_squared_errors}
    \end{subfigure}

    \begin{subfigure}[b]{0.99\linewidth}
    \centering
    \begin{minipage}[c]{0.05\linewidth}
    \caption{}
    \end{minipage}
    \begin{minipage}[c]{0.7\linewidth}
        \centering
        \includegraphics[width=\linewidth]{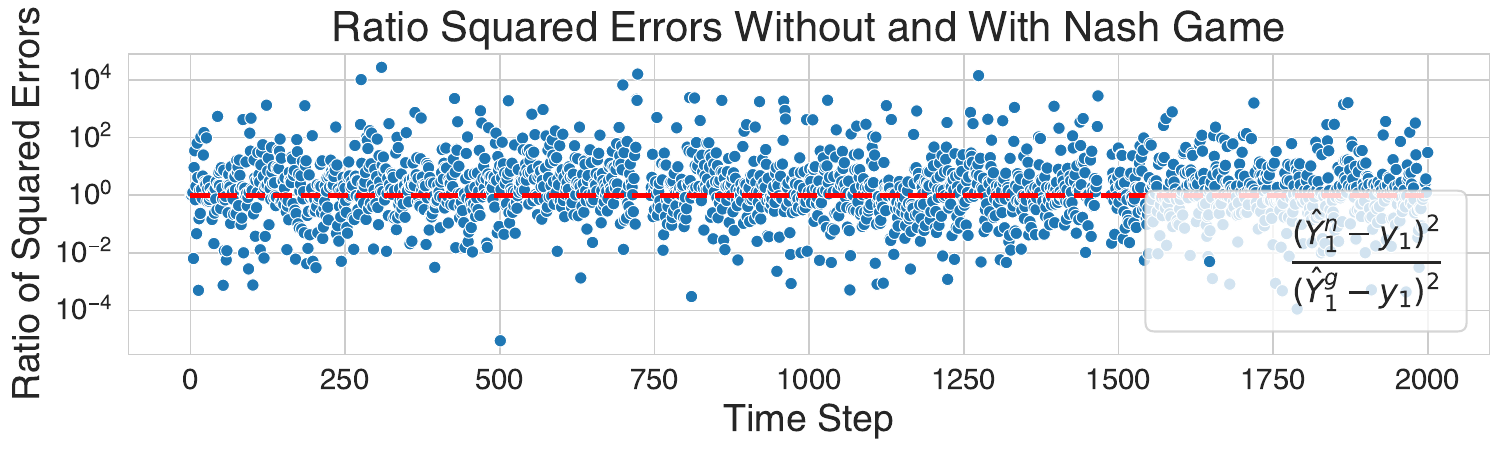}
    \end{minipage}
    \label{fig:transformer_ett_squared_errors}
    \end{subfigure} 

    \caption{Comparison of relative squared errors for encoder model predictions with and without Nash-game synchronization for the Periodic (\subref{fig:transformer_periodic_squared_errors}), Logistic (\subref{fig:transformer_logistic_squared_errors}), Concept Drift (\subref{fig:transformer_concept_squared_errors}), BoC Exchange Rates (\subref{fig:transformer_boc_squared_errors}), and ETT (\subref{fig:transformer_ett_squared_errors}) time series.}
    \label{fig:transfomer_relative}
\end{figure}

Predictions for the BoC Exchange Rate and ETT datasets are generally of higher quality likely due, in part, to presence of more informative features as detailed in Appendices \ref{s:ExperimentDetials__ss::datasets__sss:::BoC} and \ref{s:ExperimentDetials__ss::datasets__sss:::ETT}. They are also much longer time series than the preceding three datasets. The BoC series incorporates a number of lagged covariate exchange rates. Similarly, the ETT series leverages several informative input variables and lagged values of the target oil temperature. With these additional inputs, the server predictions of the Nash and non-Nash settings closely follow the targets for both datasets. However, as seen in Figures \ref{fig:transfomer_overlaid}\subref{fig:transformer_boc_overlaid_predictions} and \ref{fig:transfomer_overlaid}\subref{fig:transformer_ett_overlaid_predictions}, the non-game model predictions still fail to match the target signal in several parts of the time series. These shortcomings are effectively addressed by incorporating the Nash game. This improvement is clearly evident in Figures \ref{fig:transfomer_relative}\subref{fig:transformer_boc_squared_errors} and \ref{fig:transfomer_relative}\subref{fig:transformer_ett_squared_errors}, where the bulk of error ratios favor predictions using Nash synchronization.

\subsection{Generative Random Feature Networks (RFNs)}
\label{s:Analysis__ss:ELN}

In this section, the feature encoders of Equation \eqref{eq:ENCODER} are expanded to incorporate re-sampled randomness at every step. This places such encoders within the realm of extreme learning machines such as RFNs \citep{huang2006universal,rahimi2008weighted}, which have attracted significant attention recently due to their theoretical amenability; especially via NTK theory \citep{jacot2018neural,cheng2024characterizing}, or the wide neural network literature; see e.g. \citep{mei2022generalization,gonon2023approximation,dirksen2022separation,frei2023random,simchoni2023integrating,hanin2024random,parhi2025random,maillard2025injectivity}.  

Focusing on the case where the encoder is a random shallow ReLU neural network, we are able to obtain explicit expressions for all matrix-valued processes. Corollary~\ref{cor:Random_NeuralNetwork} presents the results for the case $d_y=1$, while the general case with $d_y>1$ is detailed in Corollary~\ref{cor:Random_NN_dy>1}, located in Appendix~\ref{s:ProofCorollaries}.
\begin{corollary}[Random Feature Network]
\label{cor:Random_NeuralNetwork}
Assume the setting of Theorem \ref{thm:NashEqmExistence} with $d_y=1$. For each $i=1,\dots,N$, fix random matrices, $A^{i} \in \mathbb{R}^{d_y \times d_x}$ and $b^{i} \in \mathbb{R}^{d_y \times d_z}$. Define
\begin{equation}
\hat{Y}^i_{t+1} = \hat{Y}^i_t +
            \big[ \operatorname{ReLU}\bullet\big(
                A^{i} [x_t, \dots, x_t ] + b^{i} + \sigma_t^i \, W_t^i
            \big) \big]  \beta^{i}_{t} , 
\notag 
\end{equation}
where $W_t^i\sim \mathcal{N}_{1\times d_z}(0,1)$ are independent of $A^i$ and $b^i$, 
$[x_t, \dots, x_t] \in \mathbb{R}^{d_x\times d_z}$, $\beta^i_t\in \mathbb{R}^{d_z \times 1}$, $\sigma^i_t \in \mathbb{R}^{d_y \times 1}$, and $\bullet$ denotes the element-wise application of the $\operatorname{ReLU}$ function. 
Here, $\varepsilon_t^i\eqdef \sigma^i_t \, W_t^i$, 
$\varphi^{i}(x_{[0:t]}^i,Z_{t-1}^i, \varepsilon_t^i)\eqdef \operatorname{ReLU}\bullet\big(A^{i} [x_t, \dots, x_t] + b^{i} + \sigma^i_t \, W_t^i\big) = Z_{t}^i$.  
We have the block matrices 
\begin{align*}
& \mathbf{A}(t)
 = 
\big[ A^{(i,j)}(t)\big]_{i,j=1}^N , 
\hspace{0.5cm} 
\widehat{\mathbf{A}}(t) = \big[ \widehat{A}^{(i,j)}(t) \big]_{i, j = 1}^N , 
\notag \\  
 & \mathbf{B}(t)  = 
 \begin{bmatrix} 
  P_1(t+1) \mathbf{e}_1 a(t, 1), \dots, P_N(t+1) \mathbf{e}_N a(t, N) \\ 
 \end{bmatrix}^\top  ,  
 \notag \\ 
 & \mathbf{C}(t)  = 
 \begin{bmatrix} 
  S_1(t+1)^\top \mathbf{e}_1 a(t, 1), \dots, S_N(t+1)^\top  \mathbf{e}_N a(t, N) \\ 
 \end{bmatrix}^\top , 
  \notag \\   
& \mathbf{D}(t) 
  = 
 \begin{bmatrix} 
 \mathbf{e}_1 a(t,1), \dots  , \mathbf{e}_N a(t, N) 
 \end{bmatrix} , 
 \notag \\ 
 & \mathbf{D}_i(t) = \big[ D_i^{(j,k)}(t) \big]_{j,k = 1}^N , \quad i = 1, \dots, N . \notag 
\end{align*}
For $t=1,\dots,N$ and $t\in \mathbb{N}_+$, each $a(t,i)$ is defined as follows
\begin{align} 
a^i_t\eqdef A^{i}[x_t, \dots, x_t] + b^{i} 
\mbox{ and }
a(t,i)
= 
   a^i_t 
        \odot
        \left(
            \bar{1}_{d_z} 
            - 
            \Phi\bullet\left(
                \frac{-a^i_t}{\sigma^i_t}%\cdot 
            \right)
        \right)
        +
            \frac{\sigma_t^i}{\sqrt{2\pi}}
        \,
        \exp\bullet\left(
            \frac{- ( a^i_t \odot a^i_t )}{2\, (\sigma^i_t)^2 }
        \right)  , \notag 
\end{align} 
and $\odot$ and $\Phi$ are as in Lemma \ref{lem:Mean_Cov_ResModel}. 
The sub-matrices of $\mathbf{A}(t)$, $\widehat{\mathbf{A}}(t)$, 
and $\{ \mathbf{D}_i(t) \}_{i=1}^N$, 
are given by
\begin{align} 
& A^{(i,j)}(t) =
\begin{cases} 
  \mathbf{e}_i^\top P_i(t+1) \mathbf{e}_i \mathbb{E} \big[  Z^{i\top}_t  Z^i_t  \big] , \quad & \mbox{if} \,\, i = j , \\
  \mathbf{e}_i^\top P_i(t+1) \mathbf{e}_j a(t, i)^\top a(t, j) , \quad & \mbox{if} \,\, i \neq j  , 
 \end{cases} 
 \notag \\ 
 & \widehat{A}^{(i,j)}(t) = 
 \begin{cases} 
 w^i_t w^i_t  \mathbb{E} \big[ Z^{i\top}_t Z^i_t \big]  , \quad & \mbox{if} \,\, i = j , \\ 
 w^i_t w^j_t a(t, i)^\top   a(t, j) , \quad & \mbox{if} \,\, i \neq j   , 
\end{cases} 
\notag \\ 
& D_i^{(j,k)}(t) = 
\begin{cases}
  \mathbf{e}_j^\top P_i(t+1) \mathbf{e}_j \mathbb{E} \big[  Z^{j\top}_t  Z^j_t  \big]  , \quad & \mbox{if} \,\, j=k , \\ 
 \mathbf{e}_j^\top P_i(t+1) \mathbf{e}_k a(t,j)^\top  a(t, k) , \quad & \mbox{if} \,\, j\neq k , 
\notag 
\end{cases} 
\end{align} 
where the matrices $\mathbb{E}\big[ Z^{i\top}_t Z^i_t \big] = \big( \mathbb{E}\big[ Z^{i\top}_t Z^i_t \big]_{jk}  \big)_{j,k=1}^{d_z}$, $i=1, \dots, N$ are given by 
\begin{align}  
& \mathbb{E} \big[ Z^{i\top}_t Z^i_t \big]_{jk}  
 =  
    \begin{cases}
            \Big( (a^i_t \odot a^i_t )_j + (\sigma^i_t)^2 \Big)
            \cdot
            \left(
                1 
                - 
                \Phi\left(
                    \frac{-(a^i_t)_j}{\sigma_t^i}
                \right)
            \right) 
            +
            (a^i_t)_j\,
                \frac{\sigma_t^i}{\sqrt{2\pi}}
            \,
            \exp\Big( 
                \frac{ - (a^i_t \odot a^i_t )_j }{2\,(\sigma_t^i)^2 }
            \Big) ,
    & 
        \mbox{ if } j=k , 
\\ 
      a(t, i)_j a(t,i)_k , 
    & \mbox{ if }   j \neq k . 
    \end{cases} 
    \notag 
\end{align} 
\end{corollary}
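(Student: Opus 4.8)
The plan is to specialize each of the six matrix-valued processes in Table~\ref{tab:matricesEqnPiSi} to the memoryless random-feature encoder and to observe that, because the encoder carries no latent memory, every entry collapses to a first or second moment of the single random matrix $Z_t^i = \operatorname{ReLU}\bullet\big(a_t^i + \sigma_t^i W_t^i\big)$. I would work throughout conditionally on the frozen weights $(A^i,b^i)_{i=1}^N$, under which each $a_t^i$ is deterministic and every $\mathbb{E}$ is taken only over the fresh noise $W_t^i$. Under this conditioning the backward-computed matrices $P_i(t+1)$, $S_i(t+1)$, the stacked weights $\mathbf{w}_t$, and the selection matrices $\mathbf{e}_i$ are deterministic constants that factor out of the expectations (each $P_i(t+1)$, $S_i(t+1)$ is itself an average over noise at times $\ge t+1$, hence independent of $W_t^i$). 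Inspecting the table then shows that $\mathbf{B}(t)$, $\mathbf{C}(t)$, and $\mathbf{D}(t)$ are linear in the $Z_t^i$ and depend only on the mean $\mathbb{E}[Z_t^i]$, whereas $\mathbf{A}(t)$, $\widehat{\mathbf{A}}(t)$, and each $\mathbf{D}_i(t)$ are quadratic and depend only on the mixed second moments $\mathbb{E}[Z_t^{i\top}Z_t^j]$; no higher moments or cross-time terms ever appear.

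Second, I would compute the two required moment kernels. Since $d_y=1$, each coordinate of $Z_t^i$ is the rectification of an independent Gaussian $\mathcal{N}\big((a_t^i)_k,(\sigma_t^i)^2\big)$, so Lemma~\ref{lem:Mean_Cov_ResModel} yields the componentwise mean $a(t,i)_k = (a_t^i)_k\,\Phi\big((a_t^i)_k/\sigma_t^i\big) + \tfrac{\sigma_t^i}{\sqrt{2\pi}}\exp\big(-(a_t^i)_k^2/(2(\sigma_t^i)^2)\big)$ and the diagonal second moment $\mathbb{E}[(Z_t^i)_k^2] = \big((a_t^i)_k^2+(\sigma_t^i)^2\big)\Phi\big((a_t^i)_k/\sigma_t^i\big) + (a_t^i)_k\,\tfrac{\sigma_t^i}{\sqrt{2\pi}}\exp\big(-(a_t^i)_k^2/(2(\sigma_t^i)^2)\big)$. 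Rewriting $\Phi(z)=1-\Phi(-z)$ to match the sign convention in the statement recovers verbatim the asserted expressions for $a(t,i)$ and for the diagonal entries of $\mathbb{E}[Z_t^{i\top}Z_t^i]$.

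Third, I would use two independence structures to resolve the off-diagonal entries. Across distinct agents $i\neq j$, the triples $(A^i,b^i,W_t^i)$ and $(A^j,b^j,W_t^j)$ are independent, so $\mathbb{E}[Z_t^{i\top}Z_t^j]=\mathbb{E}[Z_t^i]^\top\mathbb{E}[Z_t^j]=a(t,i)^\top a(t,j)$; within a single agent the coordinates of $W_t^i$ are i.i.d., so for $j\neq k$ the $(j,k)$ entry of $\mathbb{E}[Z_t^{i\top}Z_t^i]$ factors as $a(t,i)_j\,a(t,i)_k$. Substituting these into the split expressions from the first step, and reading off the deterministic scalar prefactors $\mathbf{e}_i^\top P_i(t+1)\mathbf{e}_j$ (for $\mathbf{A}(t)$ and $\mathbf{D}_i(t)$) and $w_t^i w_t^j$ (for $\widehat{\mathbf{A}}(t)$), produces exactly the block and sub-block formulas for $A^{(i,j)}(t)$, $\widehat{A}^{(i,j)}(t)$, and $D_i^{(j,k)}(t)$, together with the linear forms $\mathbf{B}(t)$, $\mathbf{C}(t)$, $\mathbf{D}(t)$ obtained by replacing each $Z_t^i$ with its mean $a(t,i)$.

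The genuinely delicate step is not the analysis but the bookkeeping: one must verify the block partition of each Kronecker-structured factor $\mathbf{e}_i Z_t^i$ and track the transposes carefully so that the scalar $\mathbf{e}_i^\top P_i(t+1)\mathbf{e}_j$ is extracted from the correct side of each product and the $d_z\times d_z$ moment block is left over. The only analytic inputs are the two rectified-Gaussian moments, which follow from standard half-line integration against the standard normal density and are packaged in Lemma~\ref{lem:Mean_Cov_ResModel}; the reason this suffices is precisely the memorylessness of the encoder, which makes the quadratic truncation in the first step exact rather than merely an approximation.
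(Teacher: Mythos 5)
Your proposal is correct and follows essentially the same route as the paper's proof: specialize the matrices of Table~\ref{tab:matricesEqnPiSi}, pull out the deterministic factors $P_i(t+1)$, $S_i(t+1)$, $\mathbf{w}_t$, use independence of the per-agent noises (and of the coordinates of $W_t^i$ within an agent) to factor the off-diagonal blocks and entries into products of means, and substitute the rectified-Gaussian mean and second-moment formulas of Lemma~\ref{lem:Mean_Cov_ResModel}. Your explicit conditioning on the frozen weights $(A^i,b^i)$ and the $\Phi(z)=1-\Phi(-z)$ sign bookkeeping are minor clarifications of steps the paper performs implicitly, not a different argument.
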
 

\begin{figure}[ht!]
    \centering
    \begin{subfigure}[b]{0.99\linewidth}
    \centering
    \begin{minipage}[c]{0.05\linewidth}
    \caption{}
    \end{minipage}
    \begin{minipage}[c]{0.8\linewidth}
        \centering
        \includegraphics[width=\linewidth]{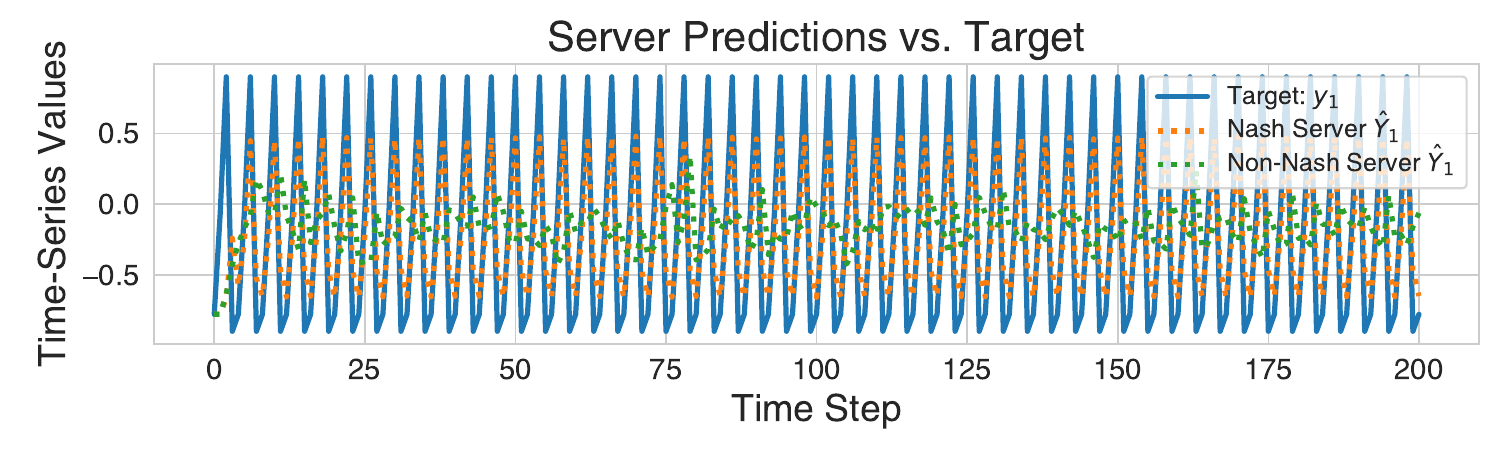}
    \end{minipage}
    \label{fig:rfn_periodic_overlaid_server_predictions}
    \end{subfigure}
    
    \begin{subfigure}[b]{0.99\linewidth}
    \centering
    \begin{minipage}[c]{0.05\linewidth}
    \caption{}
    \end{minipage}
    \begin{minipage}[c]{0.8\linewidth}
        \centering
        \includegraphics[width=\linewidth]{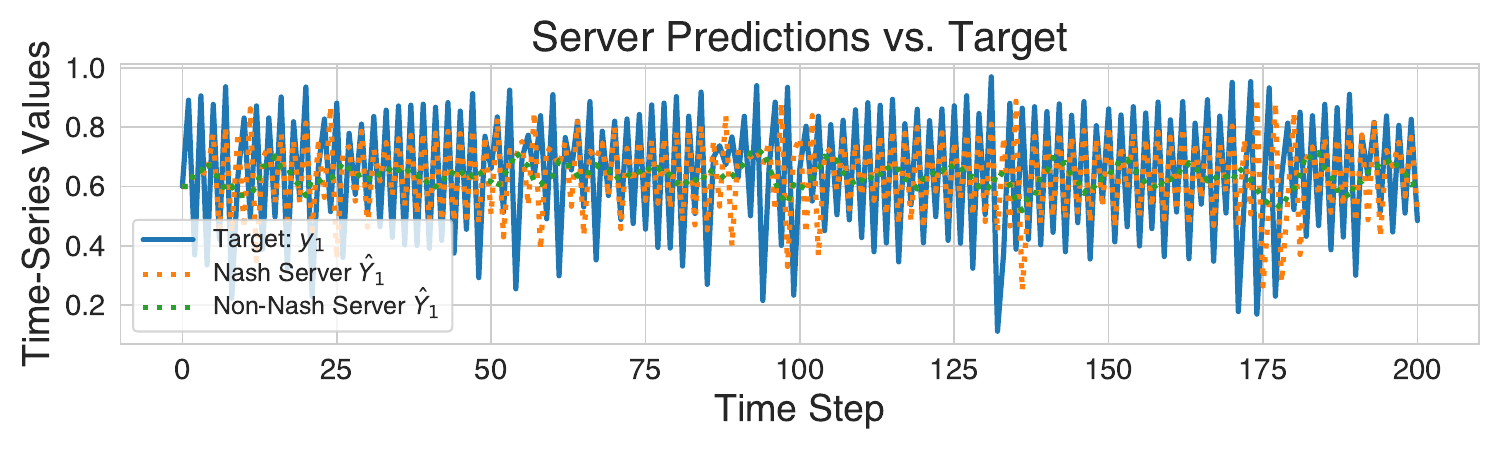}
    \end{minipage}
    \label{fig:rfn_logistic_overlaid_server_predictions}
    \end{subfigure}

    \begin{subfigure}[b]{0.99\linewidth}
    \centering
    \begin{minipage}[c]{0.05\linewidth}
    \caption{}
    \end{minipage}
    \begin{minipage}[c]{0.8\linewidth}
        \centering
        \includegraphics[width=\linewidth]{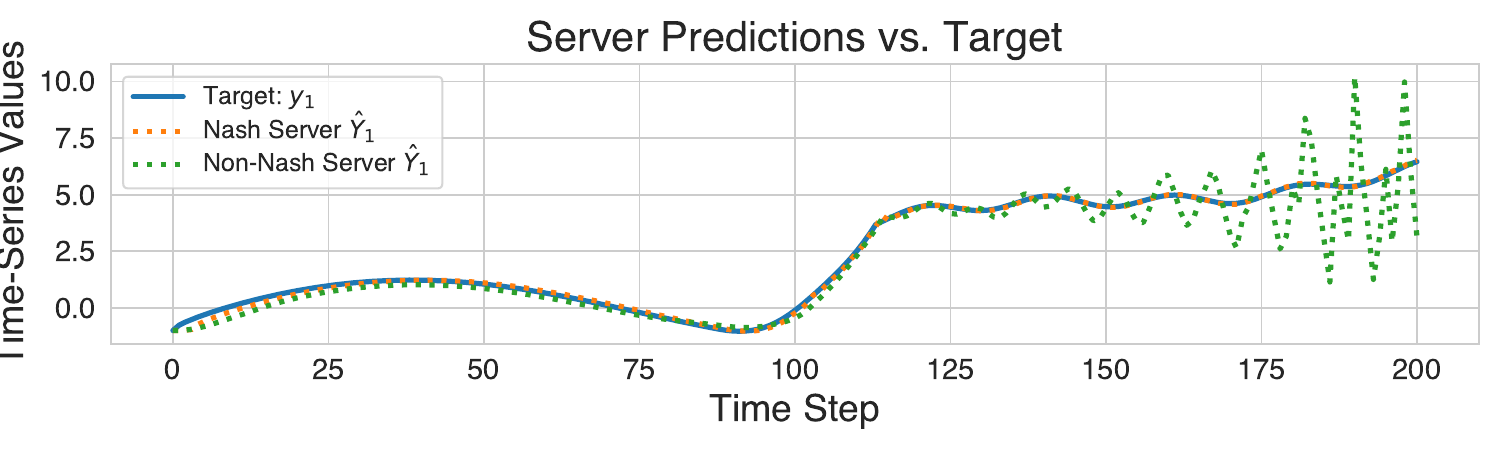}
    \end{minipage}
    \label{fig:rfn_concept_overlaid_server_predictions}
    \end{subfigure}

    \begin{subfigure}[b]{0.99\linewidth}
    \centering
    \begin{minipage}[c]{0.05\linewidth}
    \caption{}
    \end{minipage}
    \begin{minipage}[c]{0.8\linewidth}
        \centering
        \includegraphics[width=\linewidth]{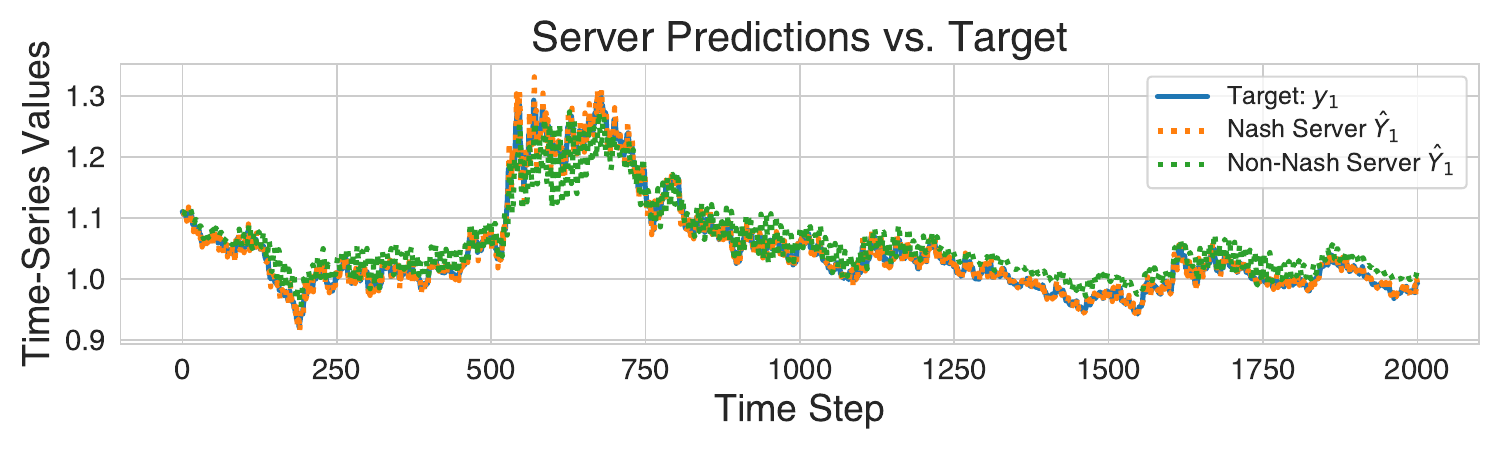}
    \end{minipage}
    \label{fig:rfn_boc_overlaid_server_predictions}
    \end{subfigure}

    \begin{subfigure}[b]{0.99\linewidth}
    \centering
    \begin{minipage}[c]{0.05\linewidth}
    \caption{}
    \end{minipage}
    \begin{minipage}[c]{0.8\linewidth}
        \centering
        \includegraphics[width=\linewidth]{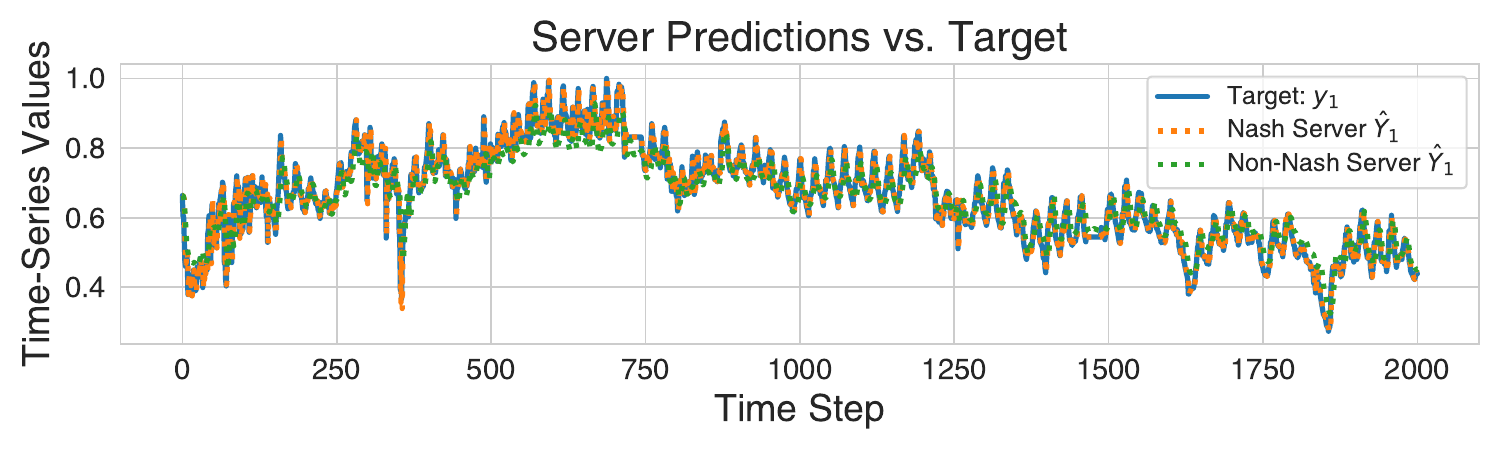}
    \end{minipage}
    \label{fig:rfn_ett_overlaid_predictions}
    \end{subfigure}
    \caption{Server predictions compared with ground truth targets for mixtures of RFN models with and without Nash-game synchronization for the Periodic (\subref{fig:rfn_periodic_overlaid_server_predictions}), Logistic (\subref{fig:rfn_logistic_overlaid_server_predictions}), Concept Drift (\subref{fig:rfn_concept_overlaid_server_predictions}), BoC Exchange Rates (\subref{fig:rfn_boc_overlaid_server_predictions}), and ETT (\subref{fig:rfn_ett_overlaid_predictions}) time series.}
    \label{fig:rfn_overlaid_prediction}
\end{figure}

\subsubsection{RFN Experimental Results}
\label{s:Examples__ss:RFN__sss:Results}

Across all datasets, the predictions incorporating the Nash game remain significantly better. This is qualitatively evident in the various trajectories shown in Figure \ref{fig:rfn_overlaid_prediction}. The RFN models are simpler than the transformers models of Section \ref{s:Analysis__ss:PreTrained} in terms of expressiveness. As such, the predictions in the non-game setting tend to be of somewhat lower quality. This is quantitatively supported by the numerical results in Table \ref{tab:summary_results}. Nonetheless, Figure \ref{fig:rfn_squared_error} illustrates that, for all signals, over the majority of time steps, the Nash game achieves lower squared errors compared to the non-game approach. Moreover, use of Nash synchronization brings the predictions of such models into closer competition with those of the transformers. Note that the best hyperparameter choices are recorded in Table \ref{tab:rfn_hyperparams} of Appendix \ref{s:OptimalParamterChoices}.

\begin{figure}[ht!]
    \centering
    \begin{subfigure}[b]{0.99\linewidth}
    \centering
    \begin{minipage}[c]{0.05\linewidth}
    \caption{}
    \end{minipage}
    \begin{minipage}[c]{0.7\linewidth}
        \centering
        \includegraphics[width=\linewidth]{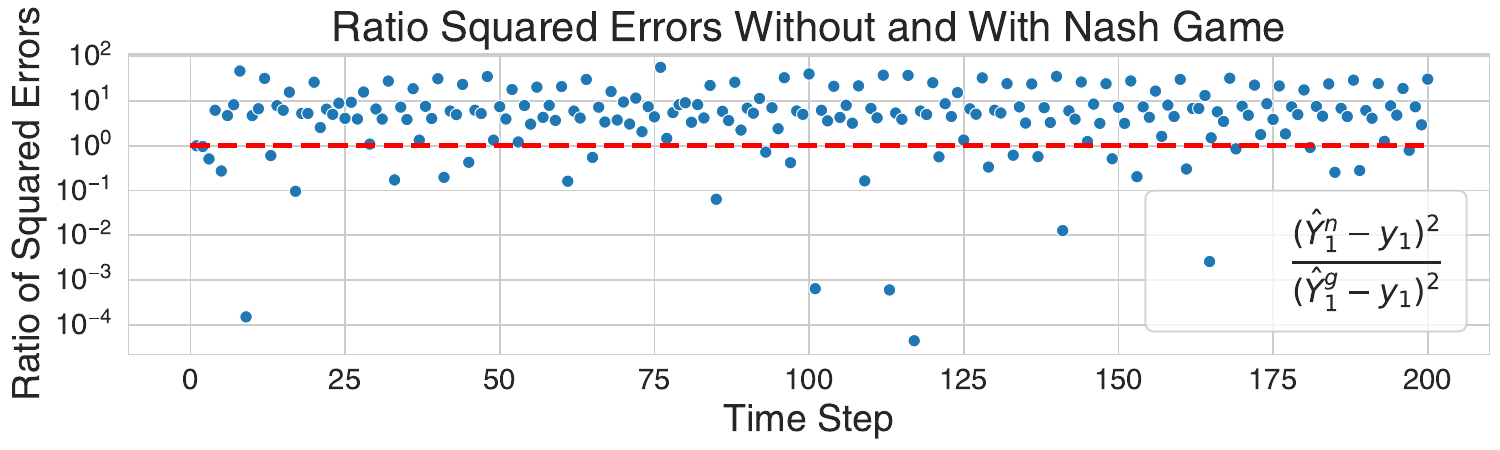}
    \end{minipage}
    \label{fig:rfn_periodic_relative_squared_errors}
    \end{subfigure}

    \begin{subfigure}[b]{0.99\linewidth}
    \centering
    \begin{minipage}[c]{0.05\linewidth}
    \caption{}
    \end{minipage}
    \begin{minipage}[c]{0.7\linewidth}
        \centering
        \includegraphics[width=\linewidth]{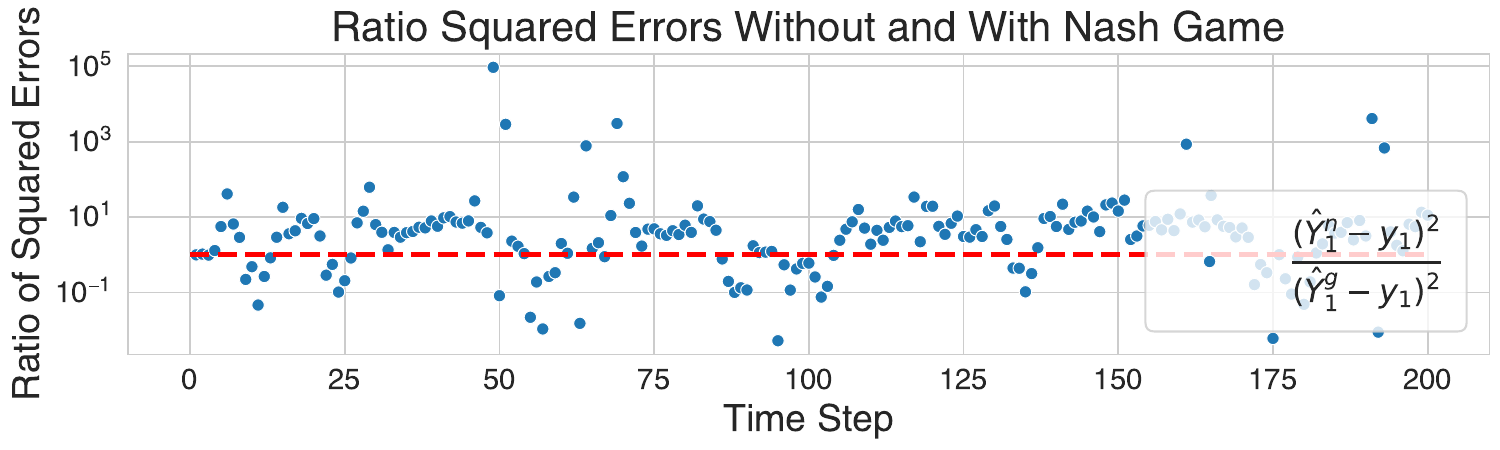}
    \end{minipage}
    \label{fig:rfn_logistic_relative_squared_errors}
    \end{subfigure}

    \begin{subfigure}[b]{0.99\linewidth}
    \centering
    \begin{minipage}[c]{0.05\linewidth}
    \caption{}
    \end{minipage}
    \begin{minipage}[c]{0.7\linewidth}
        \centering
        \includegraphics[width=\linewidth]{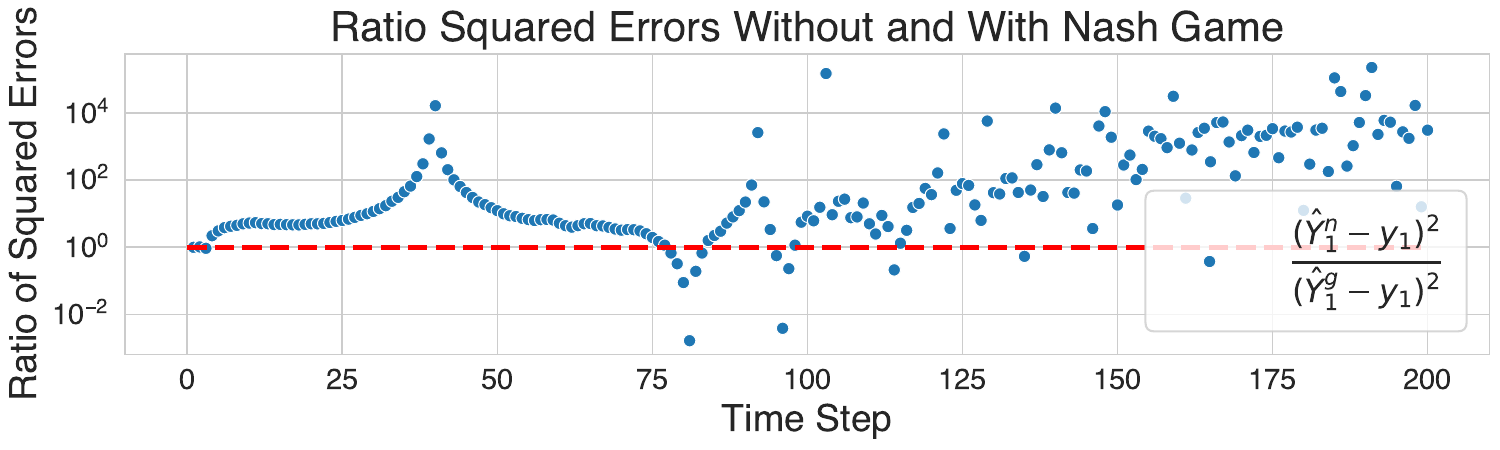}
    \end{minipage}
    \label{fig:rfn_concept_relative_squared_errors}
    \end{subfigure}

    \begin{subfigure}[b]{0.99\linewidth}
    \centering
    \begin{minipage}[c]{0.05\linewidth}
    \caption{}
    \end{minipage}
    \begin{minipage}[c]{0.7\linewidth}
        \centering
        \includegraphics[width=\linewidth]{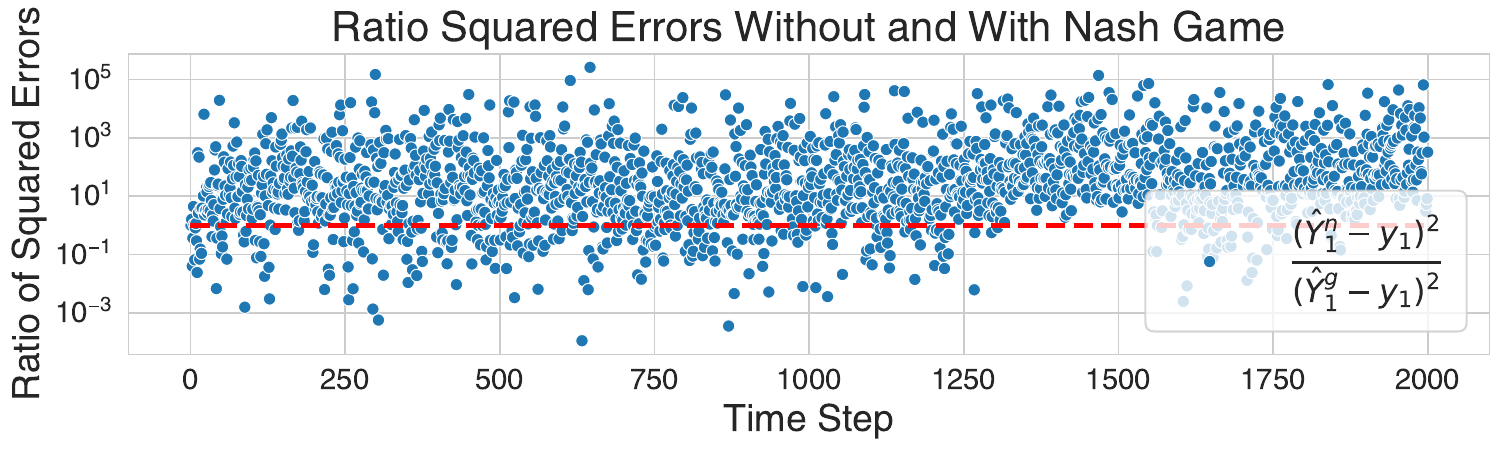}
    \end{minipage}
    \label{fig:rfn_boc_relative_squared_server_rrrors}
    \end{subfigure}

    \begin{subfigure}[b]{0.99\linewidth}
    \centering
    \begin{minipage}[c]{0.05\linewidth}
    \caption{}
    \end{minipage}
    \begin{minipage}[c]{0.7\linewidth}
        \centering
        \includegraphics[width=\linewidth]{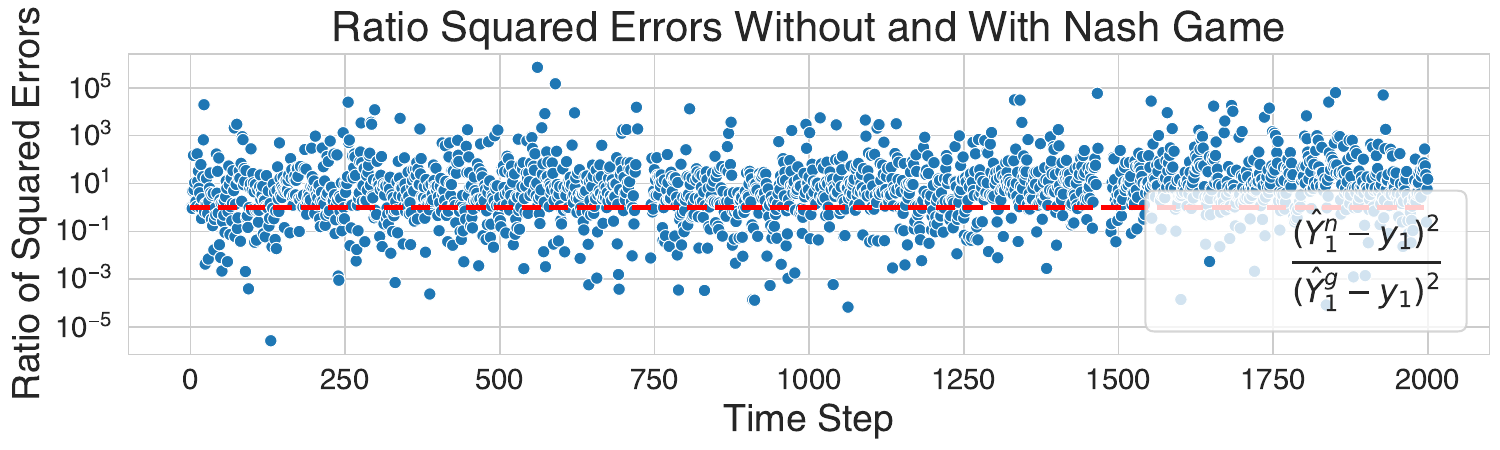}
    \end{minipage}
    \label{fig:rfn_ett_rlative_squared_error}
    \end{subfigure}
    \caption{Comparison of relative squared errors for RFN model predictions with and without Nash-game synchronization for the Periodic (\subref{fig:rfn_periodic_relative_squared_errors}), Logistic (\subref{fig:rfn_logistic_relative_squared_errors}), Concept Drift (\subref{fig:rfn_concept_relative_squared_errors}), BoC Exchange Rates (\subref{fig:rfn_boc_relative_squared_server_rrrors}), and ETT (\subref{fig:rfn_ett_rlative_squared_error}) time series.}
    \label{fig:rfn_squared_error}
\end{figure}

Similar to the transformer results, predictions without Nash synchronization settle near the mean of the Periodic and Logistic series with smaller modulations in the prediction magnitude. During regions of more unstable behavior, occurring, for example, between $t=45$ to $t=65$ in the Logistic time series, this approach can inadvertently produce a better fit due to its conservative nature, as seen in Figure  \ref{fig:rfn_squared_error}\subref{fig:rfn_logistic_relative_squared_errors}. However, over the course of the series, the Nash-game synchronization process provides much more accurate predictions for both datasets.

For simplicity in the RFN experiments, the Concept Drift time series target dimensionality is reduced such that $d_y = 1$. In this case, the $y_2$ series is retained. Predictions made with Nash-game synchronization outperform those without by a large margin. As seen in Figures  \ref{fig:rfn_overlaid_prediction}\subref{fig:rfn_concept_overlaid_server_predictions} and \ref{fig:rfn_squared_error}\subref{fig:rfn_concept_relative_squared_errors}, at the beginning of the time series, both approaches perform quite well. Predictions with the Nash game are of relatively higher quality, but the non-Nash game setting is competitive. However, after the concept transition, and especially later in the series, prediction errors appear to compound for the non-Nash setting causing heavy oscillations and poor predictions. Given that the series transitions from a lower to higher frequency pattern in its oscillations, it's possible that the hyperparameters for the non-Nash setting are well-fitted to the lower frequency setting but less so for the higher regime, and it fails to adapt.

The RFN models produce fairly accurate predictions for the BoC Exchange Rate series. Considering the squared errors in Figure \ref{fig:rfn_squared_error}\subref{fig:rfn_boc_relative_squared_server_rrrors}, the vast majority of predictions throughout the time series are above the threshold of $1.0$. This is especially true in the later stages of the series. For many of the time steps, predictions based on the Nash game are three to five orders of magnitude better fits. Likewise, the RFN models produce relatively good predictions for the ETT dataset. For the server predictions, displayed in Figure \ref{fig:rfn_overlaid_prediction}\subref{fig:rfn_ett_overlaid_predictions}, those with the Nash-game synchronization qualitatively do a better job matching the magnitudes of the time series. This is also borne out in the ratio of squared errors plots of Figure \ref{fig:rfn_squared_error}\subref{fig:rfn_ett_rlative_squared_error}. The bulk of predictions errors favor the Nash-game setting. Further, it appears that the predictions for the RFN models are more heavily distributed in favor of the Nash synchronization process than in the transformer model settings, see Figure \ref{fig:transfomer_overlaid}\subref{fig:transformer_ett_squared_errors}.

\subsection{Echo-State Agents - The Classical Reservoir Computers}
\label{s:Examples_ReservoirComputer}
Lastly, the framework is applied to reservoir computers. The setting of Corollary \ref{cor:Random_NeuralNetwork} is augmented by considering an additional random matrix, $B^i \in \mathbb{R}^{d_y \times d_y}$, independent of the $W_t^i$, for all $t$ and $i$.  
If we allow for the hidden state, $Z_t$, generated at time $t$ to be remembered, then we obtain an ESN, which is a special case of a general reservoir computer, with structure
\begin{equation}
\label{eq:reservoir}
\begin{aligned}
Z_{t}^i\eqdef & \text{Hard Sigmoid} \bullet \big(
    A^i [x_t, \dots , x_t] + B^i Z_{t-1}^i + b^i + \sigma^i \, W_t^i
\big) , 
\\
\hat{Y}^i_{t+1} \eqdef & \hat{Y}^i_t +  Z_t^i \beta^{i}_t
.
\end{aligned}
\end{equation}
There are two subtle differences between Equation \eqref{eq:reservoir} and standard ESN formulations. First, the final layer of the ESN, $\beta$, is allowed to be dynamically updated, which is not usually the case in reservoir computing. Second, the residual updates in Equation \eqref{eq:reservoir} differ from standard ESNs. Unlike the previous types of agents, namely pre-trained encoders and RFNs, no simple closed forms are available for the expectation matrices in Theorem \ref{thm:NashEqmExistence}.  Rather, these quantities are estimated in the experiments using standard Monte-Carlo sampling. While the approach can be computationally heavy, the process is also quite effective, as seen in the results to follow.

\begin{figure}[ht!]
    \centering
    \begin{subfigure}[b]{0.99\linewidth}
    \centering
    \begin{minipage}[c]{0.05\linewidth}
    \caption{}
    \end{minipage}
    \begin{minipage}[c]{0.8\linewidth}
        \centering
        \includegraphics[width=\linewidth]{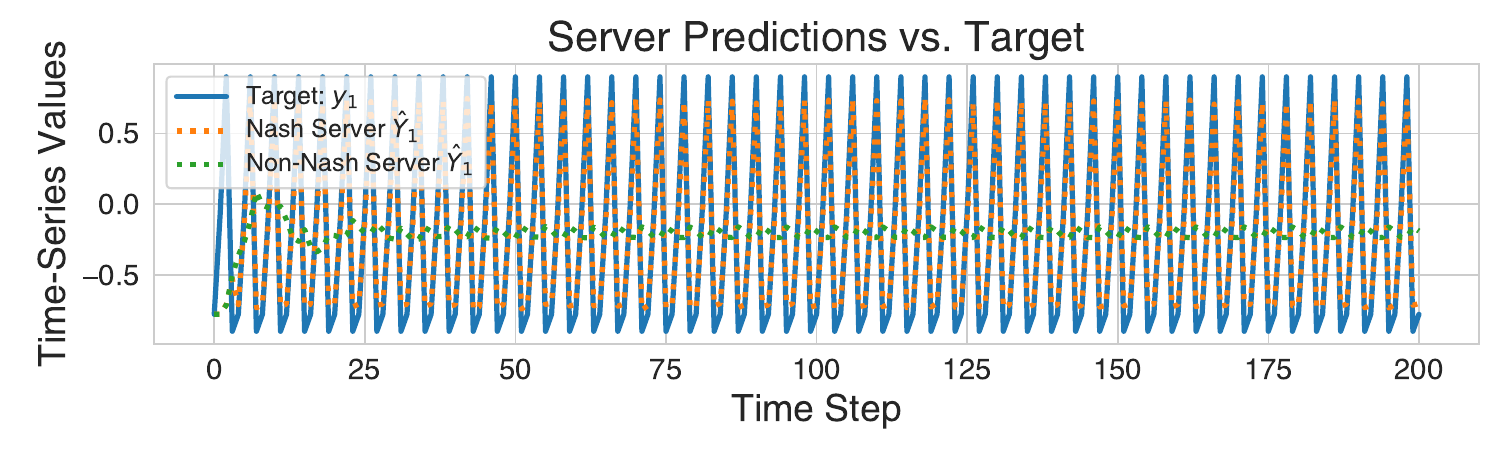}
    \end{minipage}
    \label{fig:esn_periodic_overlaid_server_predictions}
    \end{subfigure}

    \begin{subfigure}[b]{0.99\linewidth}
    \centering
    \begin{minipage}[c]{0.05\linewidth}
    \caption{}
    \end{minipage}
    \begin{minipage}[c]{0.8\linewidth}
        \centering
        \includegraphics[width=\linewidth]{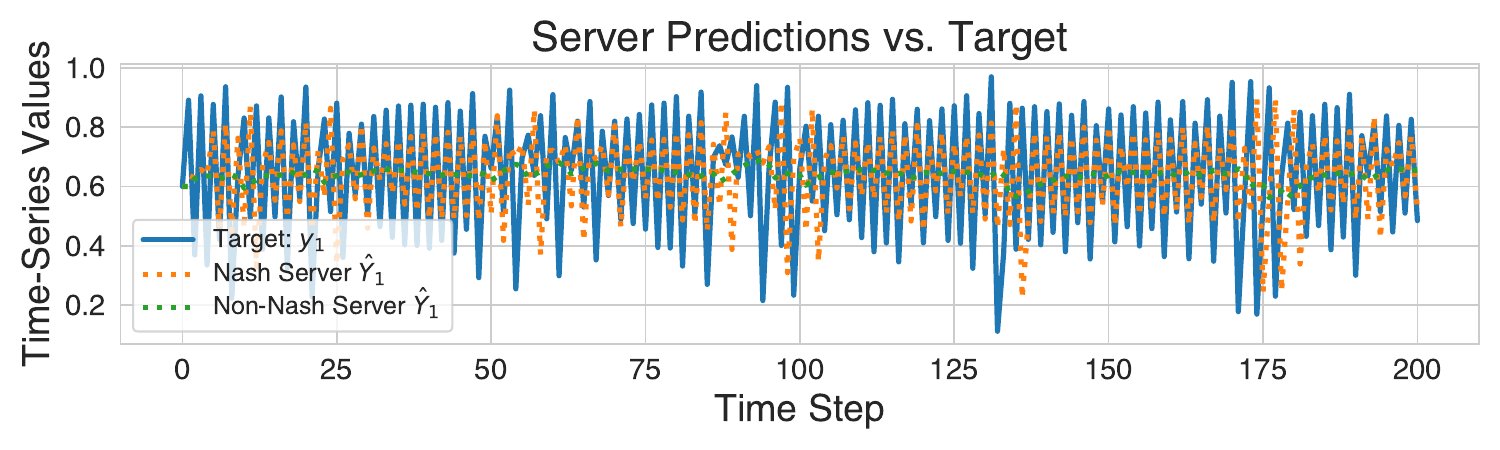}
    \end{minipage}
    \label{fig:esn_logistic_overlaid_server_predictions}
    \end{subfigure} 

    \begin{subfigure}[b]{0.99\linewidth}
    \centering
    \begin{minipage}[c]{0.05\linewidth}
    \caption{}
    \end{minipage}
    \begin{minipage}[c]{0.8\linewidth}
        \centering
        \includegraphics[width=\linewidth]{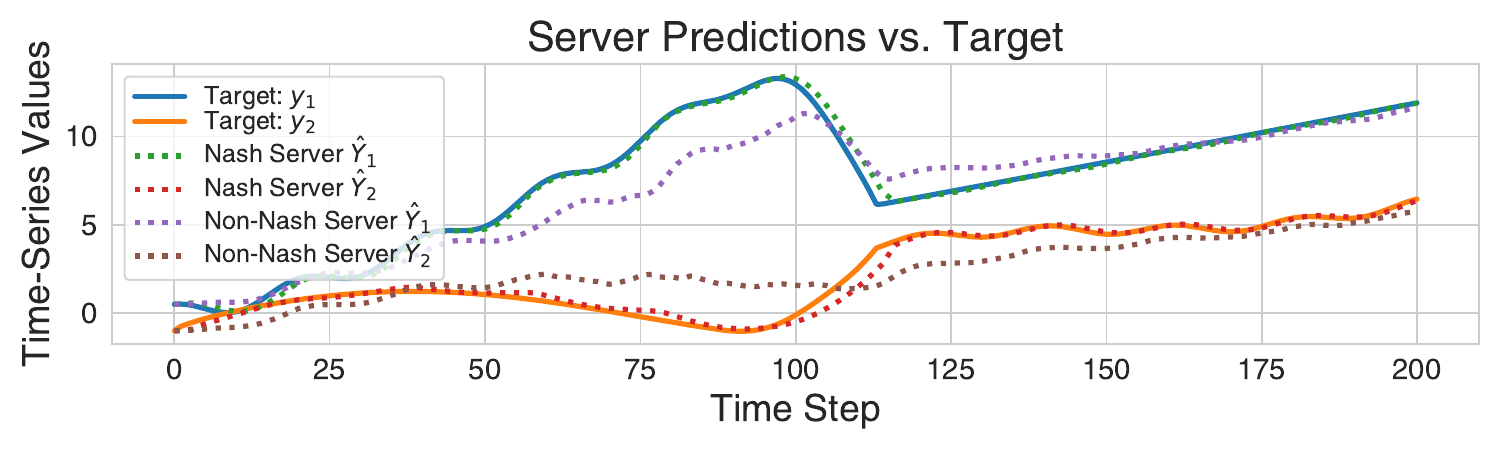}
    \end{minipage}
    \label{fig:esn_concept_overlaid_server_predictions}
    \end{subfigure}

    \begin{subfigure}[b]{0.99\linewidth}
    \centering
    \begin{minipage}[c]{0.05\linewidth}
    \caption{}
    \end{minipage}
    \begin{minipage}[c]{0.8\linewidth}
        \centering
        \includegraphics[width=\linewidth]{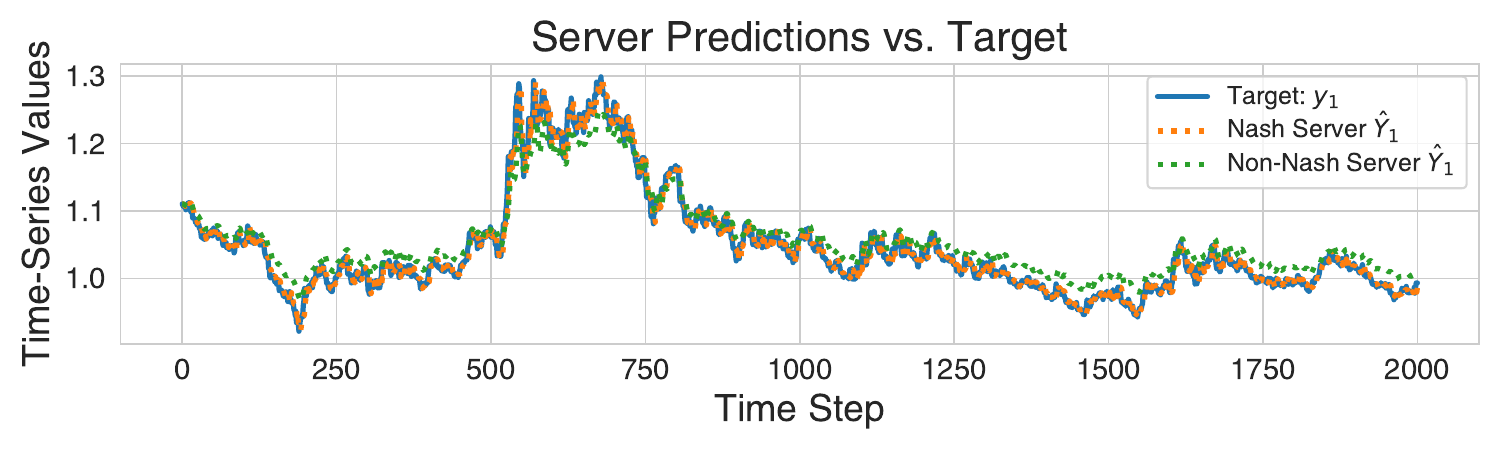}
    \end{minipage}
    \label{fig:esn_boc_overlaid_server_predictions}
    \end{subfigure}

    \begin{subfigure}[b]{0.99\linewidth}
    \centering
    \begin{minipage}[c]{0.05\linewidth}
    \caption{}
    \end{minipage}
    \begin{minipage}[c]{0.8\linewidth}
        \centering
        \includegraphics[width=\linewidth]{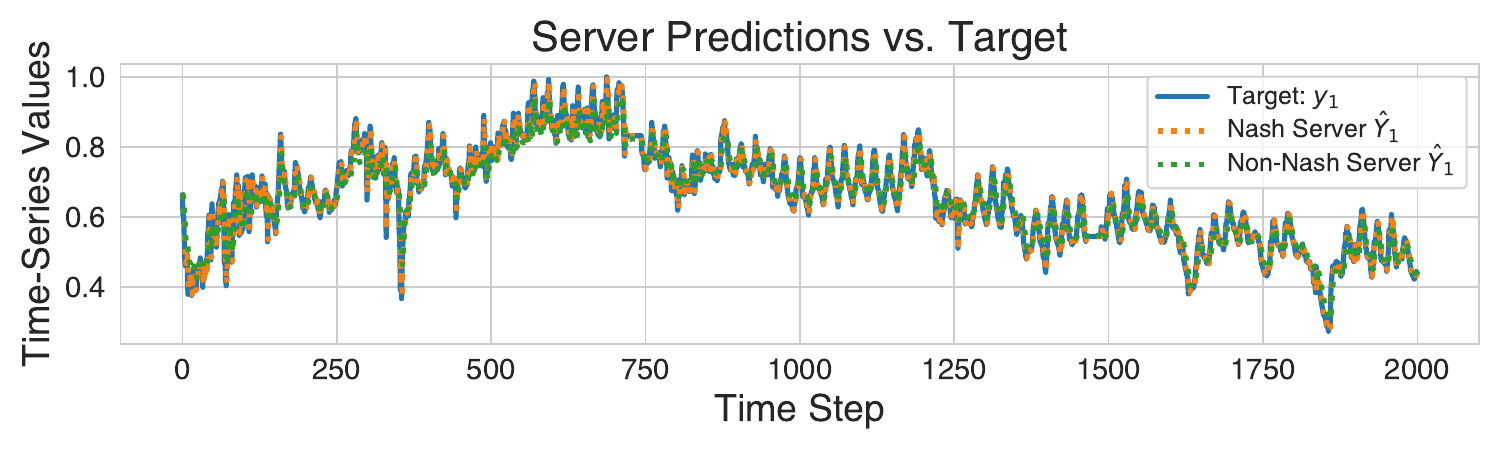}
    \end{minipage}
    \label{fig:esn_ett_overlaid_server_predictions}
    \end{subfigure}
    \caption{Server predictions compared with ground truth targets for mixtures of ESN models with and without Nash-game synchronization for the Periodic (\subref{fig:esn_periodic_overlaid_server_predictions}), Logistic (\subref{fig:esn_logistic_overlaid_server_predictions}), Concept Drift (\subref{fig:esn_concept_overlaid_server_predictions}), BoC Exchange Rates (\subref{fig:esn_boc_overlaid_server_predictions}), and ETT (\subref{fig:esn_ett_overlaid_server_predictions}) time series.}
    \label{fig:esn_overlaid_prediction_comparison}
\end{figure}

\subsubsection{ESN Experimental Results}
\label{s:Examples__ss:ESN__sss:Results}

As in previous experiments, predictions incorporating the Nash synchronization procedure significantly outperform those without. This is seen both qualitatively, in Figure \ref{fig:esn_overlaid_prediction_comparison}, and quantitatively in Figure \ref{fig:esn_relative_squared_errors}. Consistent with previous results, predictions without the Nash game are largely damped towards the average of both the Periodic and Logistic time series. On the other hand, while not quite reaching the full amplitude, predictions incorporating the game match provide much better replication of the series dynamics. However, during periods of instability in the Logistic Map series, the damped predictions produced without Nash synchronization temporarily outperform those including such synchronization. This phenomenon can be seen, for example, between $t=50$ and $t=65$ or $t=85$ to $t=105$. In spite of this, Nash-game synchronization produces notable reductions in prediction error. In Figures \ref{fig:esn_relative_squared_errors}\subref{fig:esn_periodic_relative_squared_errors} and \ref{fig:esn_relative_squared_errors}\subref{fig:esn_logistic_relative_squared_errors}, the majority error ratios are above $1.0$.

For the Concept Drift dataset, non-Nash predictions actually struggle to accurately model the target values in both phases of the series, as shown in Figure \ref{fig:esn_overlaid_prediction_comparison}\subref{fig:esn_concept_overlaid_server_predictions}. The fit produced by the ESN model in this setting for the second stage is generally more accurate than seen in preceding experiments, while predictions for the first half are worse. Nonetheless, as in previous results, the predictions produced by the server when leveraging Nash synchronization are much more tightly bound to the target curves. This is even more evident when examining the squared-error ratios in Figure \ref{fig:esn_relative_squared_errors}\subref{fig:esn_concept_relative_squared_error}. 

Similar to the results from other model types, both Nash and non-Nash game predictions follow the target signal fairly closely in the BoC Exchange Rate and ETT datasets, due in part to the higher quality of input features. In the BoC Exchange Rate dataset, the non-game server prediction fails to match the target signal during several time periods, such as between $t=600$ to $t=700$. This is, overall, not the case for Nash-game predictions, which demonstrate better alignment with the target. For the ETT dataset, predictions incorporating Nash synchronization more accurately capture the magnitudes of various spikes in the oil temperature throughout the series. As shown in Figures \ref{fig:esn_relative_squared_errors}\subref{fig:esn_boc_relative_squared_errors} and \ref{fig:esn_relative_squared_errors}\subref{fig:esn_ett_rlative_squared_errors}, the bulk of time steps have squared-error ratios above $1.0$, indicating sustained performance improvements from predictions leveraging Nash-game synchronization than those without.

\begin{figure}[ht!]
    \centering
    \begin{subfigure}[b]{0.99\linewidth}
    \centering
    \begin{minipage}[c]{0.05\linewidth}
    \caption{}
    \end{minipage}
    \begin{minipage}[c]{0.7\linewidth}
        \centering
        \includegraphics[width=\linewidth]{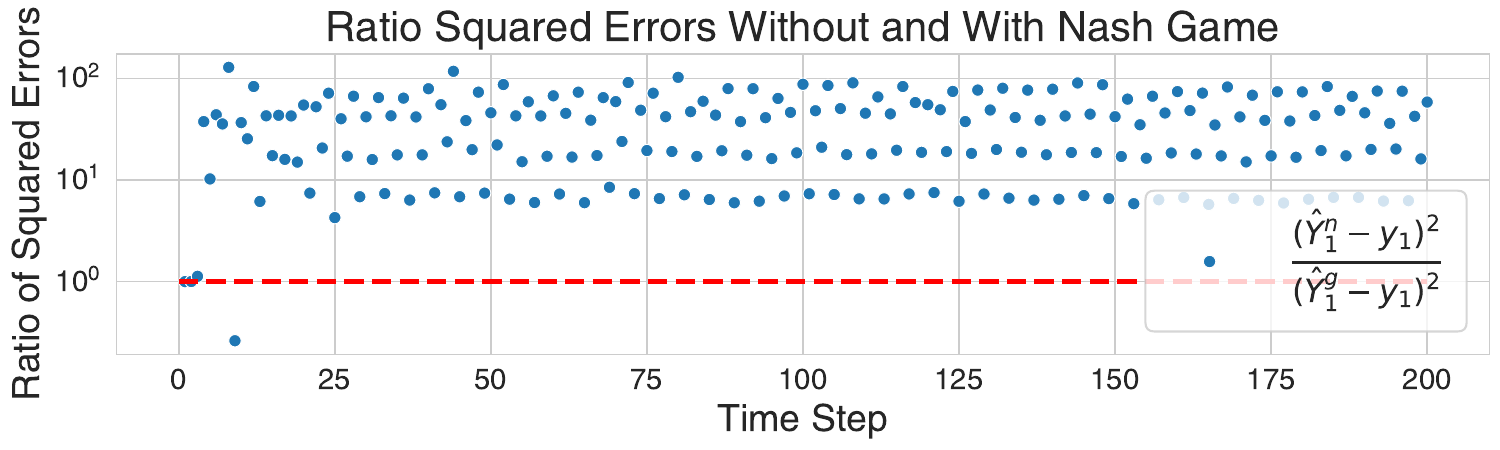}
    \end{minipage}
    \label{fig:esn_periodic_relative_squared_errors}
    \end{subfigure}

    \begin{subfigure}[b]{0.99\linewidth}
    \centering
    \begin{minipage}[c]{0.05\linewidth}
    \caption{}
    \end{minipage}
    \begin{minipage}[c]{0.7\linewidth}
        \centering
        \includegraphics[width=\linewidth]{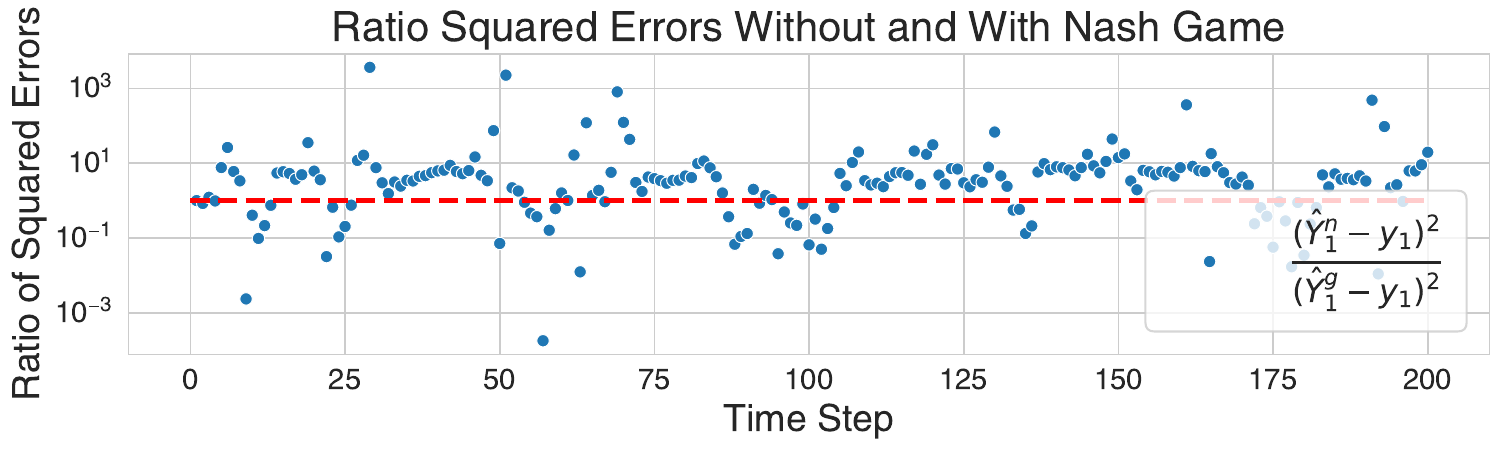}
    \end{minipage}
    \label{fig:esn_logistic_relative_squared_errors}
    \end{subfigure}

    \begin{subfigure}[b]{0.99\linewidth}
    \centering
    \begin{minipage}[c]{0.05\linewidth}
    \caption{}
    \end{minipage}
    \begin{minipage}[c]{0.7\linewidth}
        \centering
        \includegraphics[width=\linewidth]{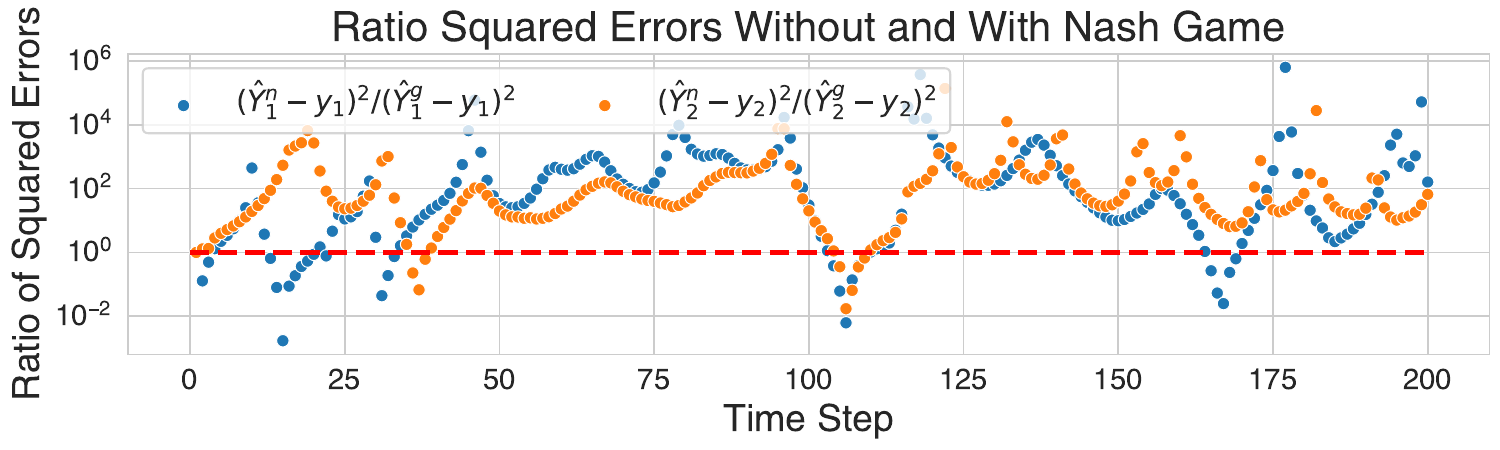}
    \end{minipage}
    \label{fig:esn_concept_relative_squared_error}
    \end{subfigure}

    \begin{subfigure}[b]{0.99\linewidth}
    \centering
    \begin{minipage}[c]{0.05\linewidth}
    \caption{}
    \end{minipage}
    \begin{minipage}[c]{0.7\linewidth}
        \centering
        \includegraphics[width=\linewidth]{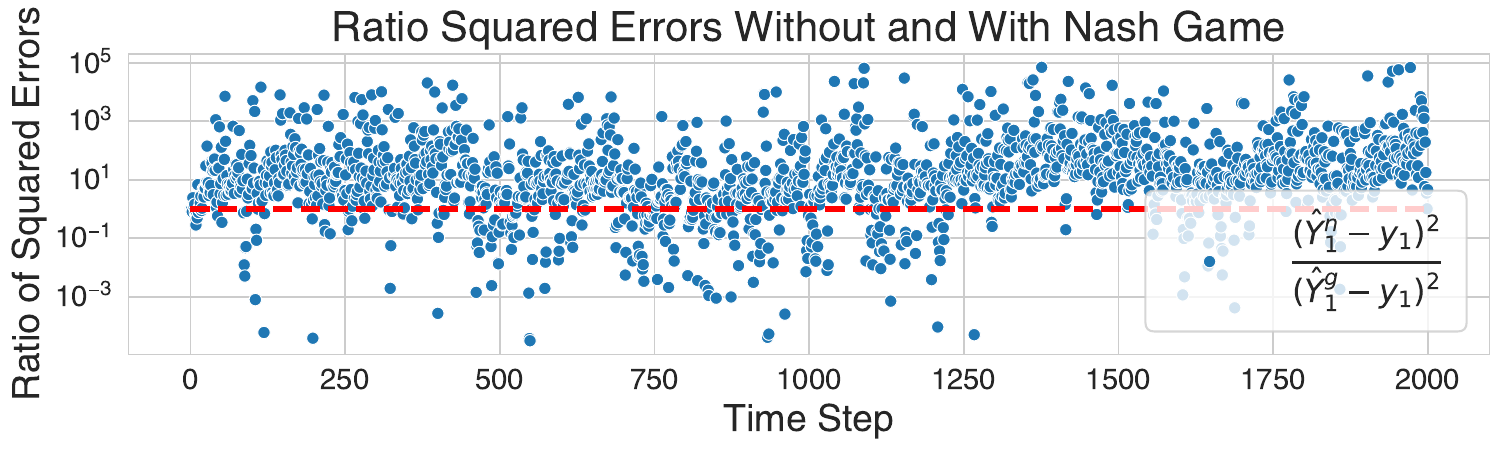}
    \end{minipage}
    \label{fig:esn_boc_relative_squared_errors}
    \end{subfigure}

    \begin{subfigure}[b]{0.99\linewidth}
    \centering
    \begin{minipage}[c]{0.05\linewidth}
    \caption{}
    \end{minipage}
    \begin{minipage}[c]{0.7\linewidth}
        \centering
        \includegraphics[width=\linewidth]{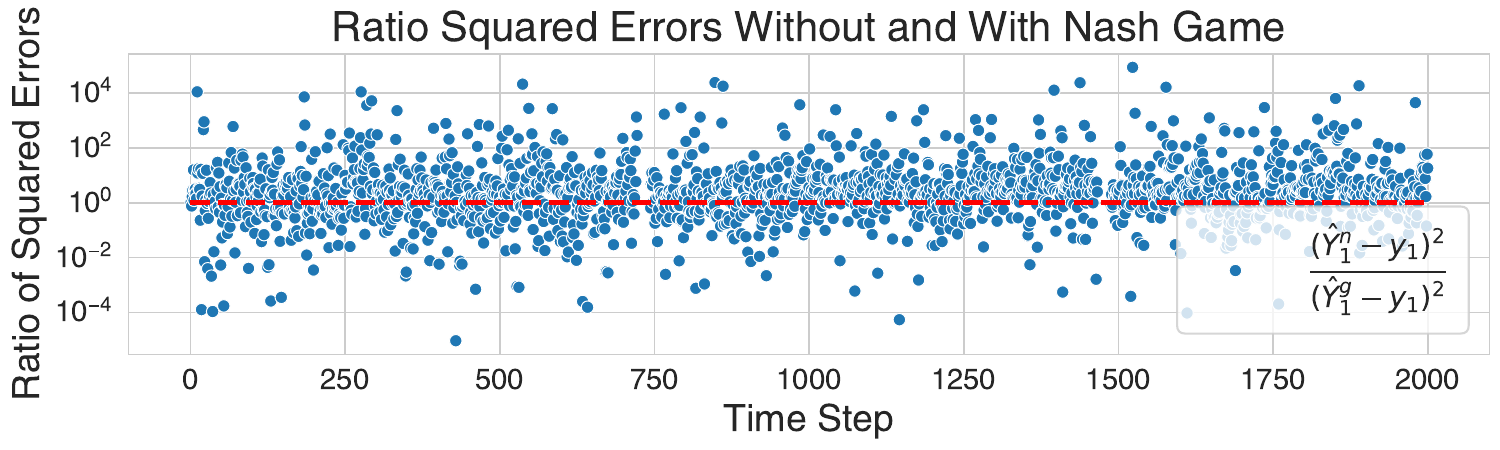}
    \end{minipage}
    \label{fig:esn_ett_rlative_squared_errors}
    \end{subfigure}
    \caption{Comparison of relative squared errors for ESN model predictions with and without Nash-game synchronization for the Periodic (\subref{fig:esn_periodic_relative_squared_errors}), Logistic (\subref{fig:esn_logistic_relative_squared_errors}), Concept Drift (\subref{fig:esn_concept_relative_squared_error}), BoC Exchange Rates (\subref{fig:esn_boc_relative_squared_errors}), and ETT (\subref{fig:esn_ett_rlative_squared_errors}) time series.}
    \label{fig:esn_relative_squared_errors}
\end{figure}

\subsection{Ablation Studies}
\label{s:Experiments__ss:Ablation}
In this section, the effects of adjusting the principal knobs and hyperparameters driving the behaviour of the proposed FL algorithm are examined. Specifically, we analyze the impact of the Nash game on the mixture weights optimized by the server, the look-back parameter ($T$) in the Nash games, and the influence of varying the frequency with which Nash synchronization occurs ($\tau$), with special focus on the trade-off between runtime of the FL algorithm and its predictive power.

\subsubsection{Mixture weights Analysis}
\label{s:Experiments__ss:Ablation___sss:MixtureWeights}

The Concept Drift dataset goes through an abrupt, but smooth, transition in the relationship between the input and output variables approximately halfway through the series. More details are provided in Appendix \ref{concept_drift_description}. This shift occurs between time steps $88$ and $112$. The onset of this transition is evident in the mixture weights computations of both methods in Figure \ref{fig:mixture_weights_concept_drift}. During this period, the dynamics of these weights changes significantly. The transition also appears to briefly impact the efficacy of Nash synchronization, evidenced by a degradation and eventual recovery of the ratio of errors in Figure \ref{fig:esn_relative_squared_errors}\subref{fig:esn_concept_relative_squared_error} between the non-game and game predictions during this time-frame. Of note for this dataset, there are clearly agents in the ensemble that struggle to produce helpful predictions. For example, two of the clients in the Nash game setting hover around zero weight in the ensemble, while three experts in the non-game setting are assigned weights near zero at various points in the series. 

\begin{figure}[ht!]
    \centering
    \begin{tabular}{cc}
        \begin{subfigure}[b]{0.45\linewidth}
            \centering
            \includegraphics[width=\linewidth]{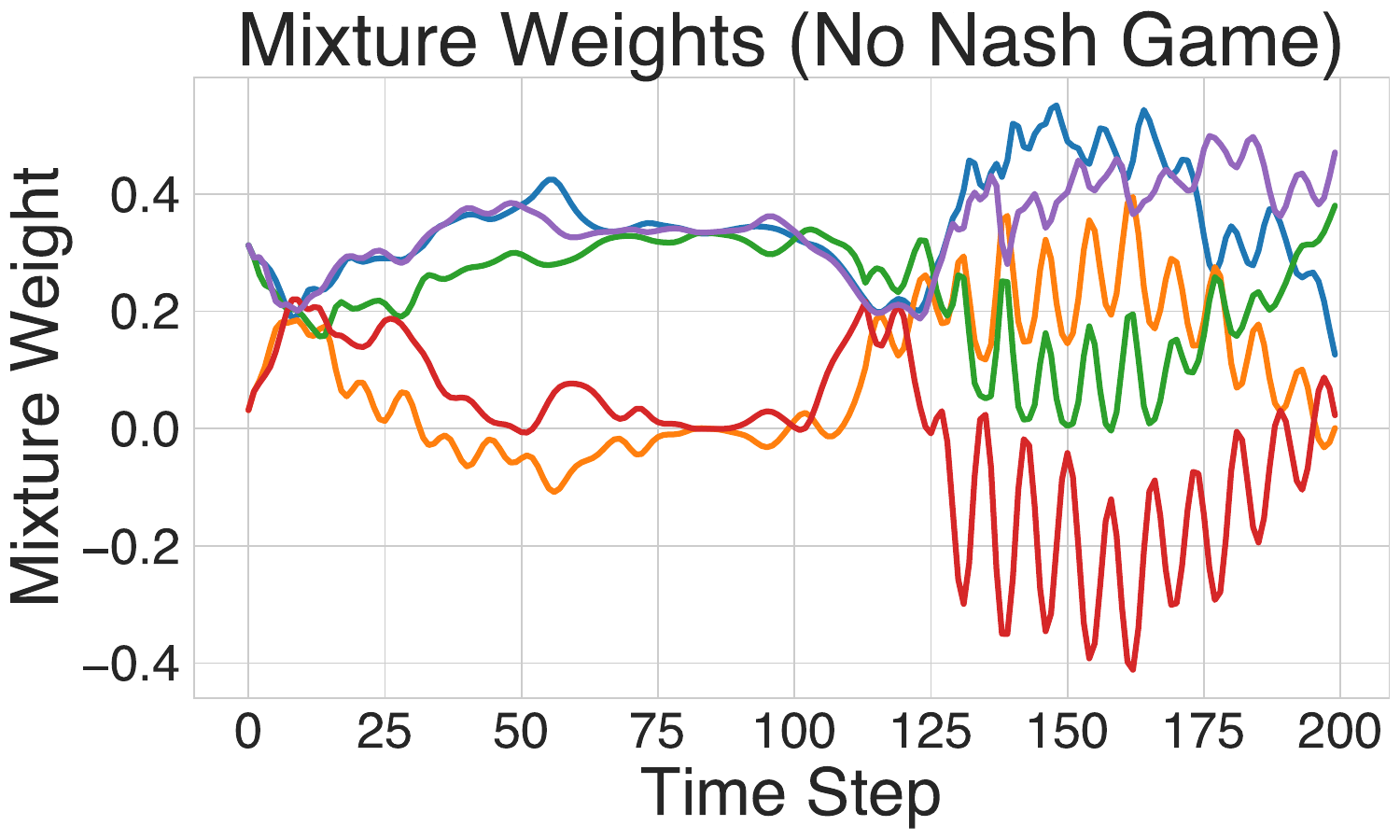}
            \caption{Mixture weights with transformer models.}
            \label{transformer_concept_mixtures_no_nash}
        \end{subfigure} &
        \begin{subfigure}[b]{0.45\linewidth}
            \centering
            \includegraphics[width=\linewidth]{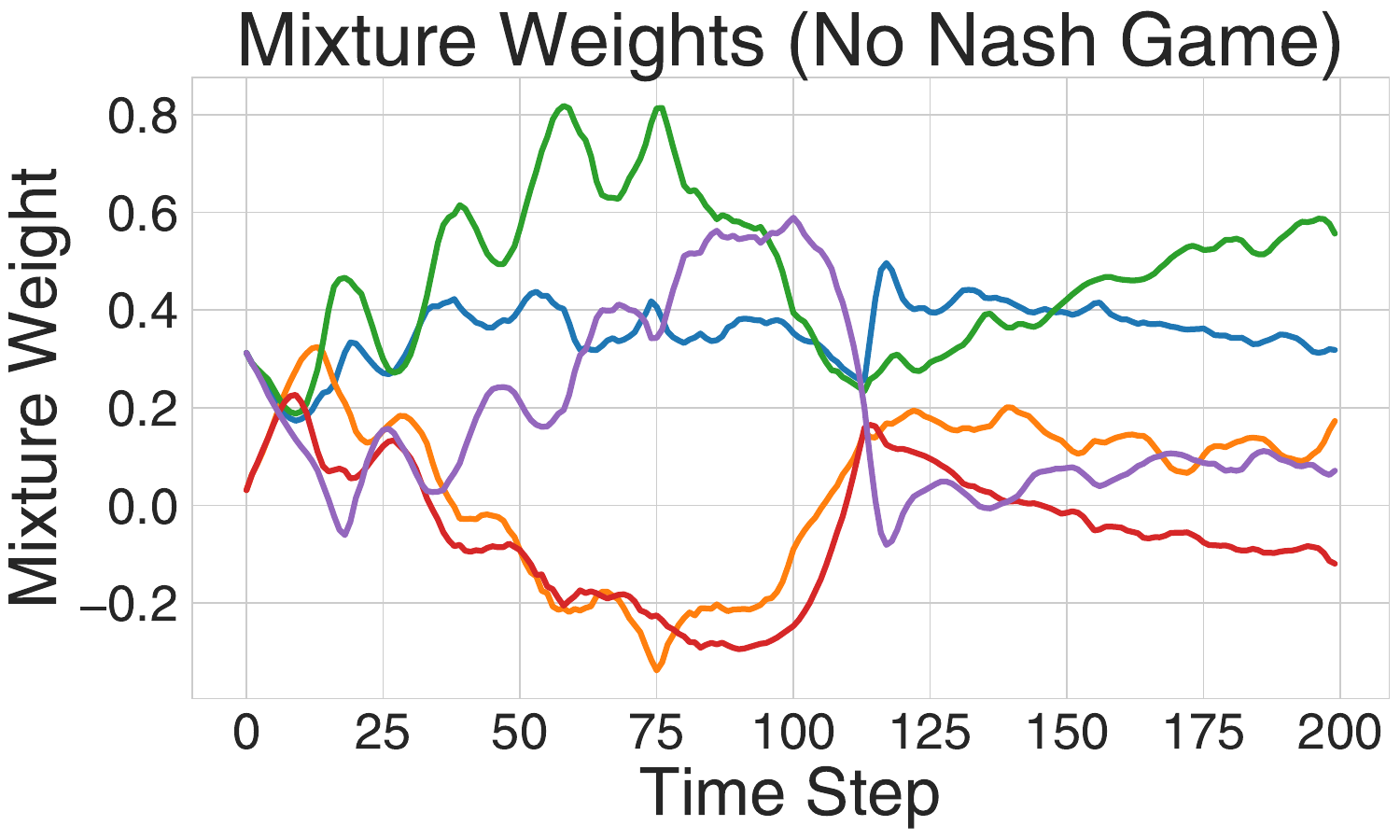}
            \caption{Mixture weights with ESN models.}
        \end{subfigure} \\
        \begin{subfigure}[b]{0.45\linewidth}
            \centering
            \includegraphics[width=\linewidth]{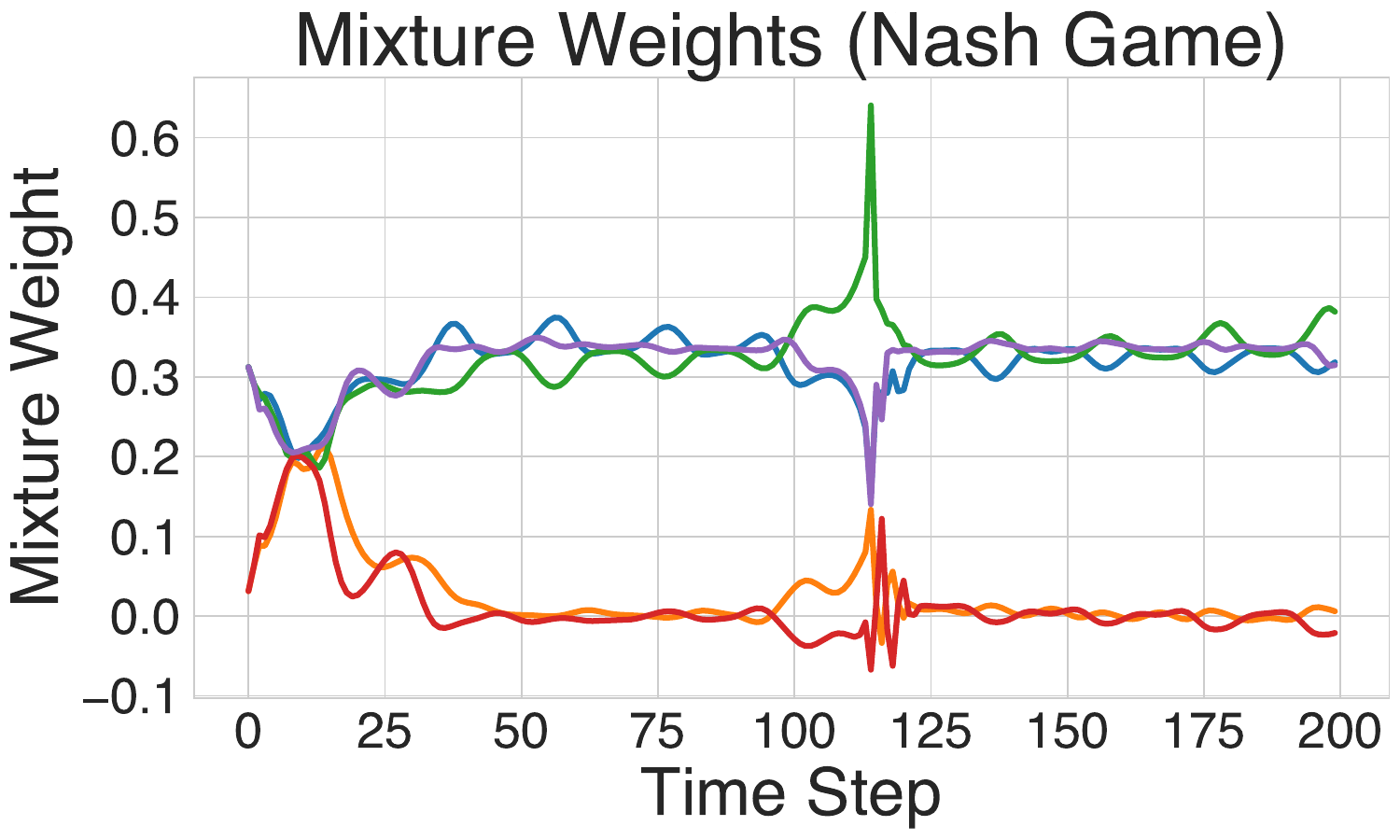}
            \caption{Mixture weights with transformer models.}
            \label{transformer_concept_mixtures_nash}
        \end{subfigure} &
        \begin{subfigure}[b]{0.45\linewidth}
            \centering
            \includegraphics[width=\linewidth]{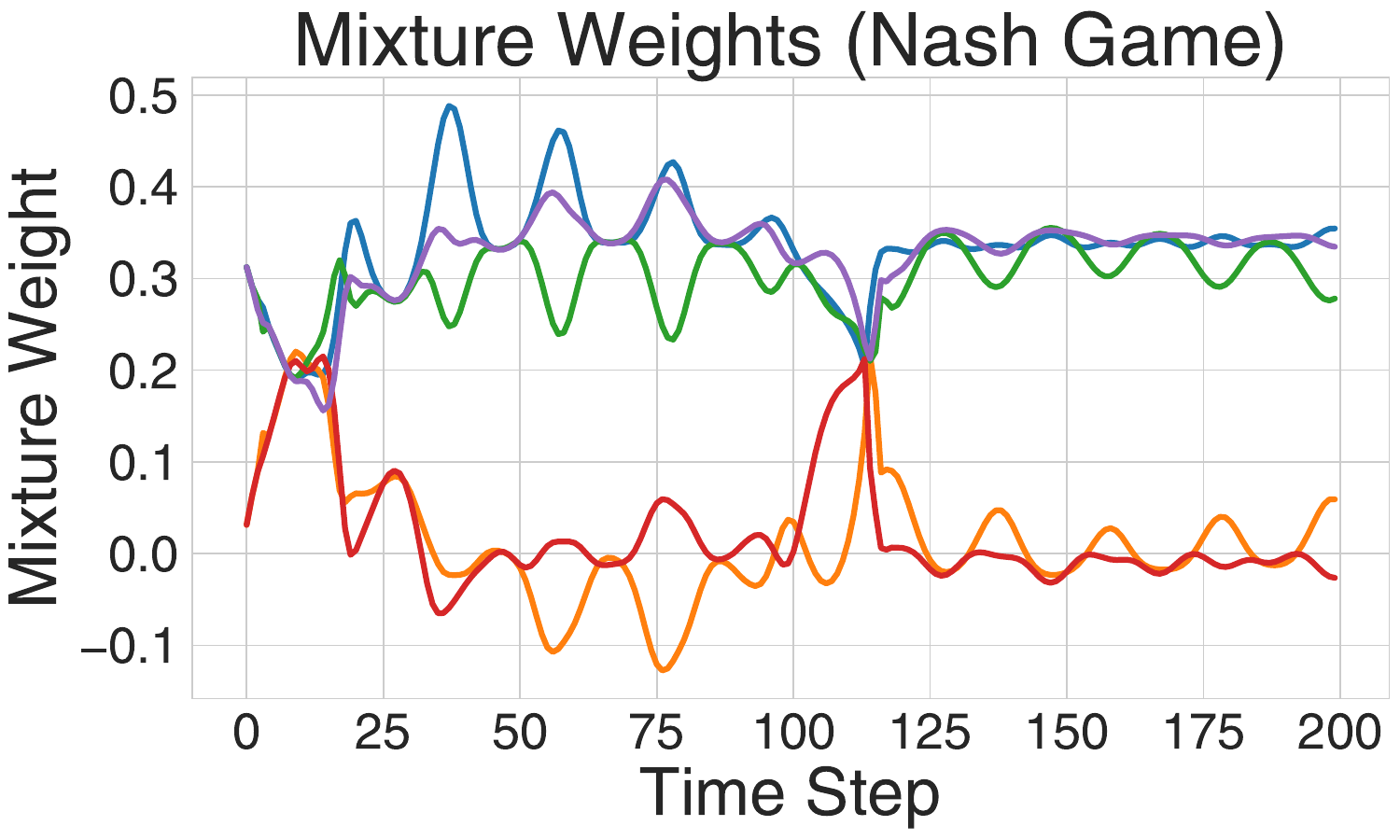}
            \caption{Mixture weights with ESN models.}
        \end{subfigure}
    \end{tabular}
\caption{Concept Drift Dataset Mixture Weights}
\label{fig:mixture_weights_concept_drift}
\end{figure}

As seen in the mixture weights graphs of Figure \ref{fig:mixture_weights_periodic_signal} for the periodic signal, the non-game ESN server settles into a stable and repeating set of weights after a warm-up period. On the same dataset, the non-game transformer server assigns highly volatile mixture weights with rapid changes between positive and negative weights assigned to a single expert. Meanwhile, the weights with the Nash game are somewhat more adaptive while maintaining stability for both agent types. 

For both datasets, application of Nash synchronization appears to yield more stable mixture weight trajectories and shorter periods of instability. For example, comparing the weights of Figures \ref{fig:mixture_weights_concept_drift}\subref{transformer_concept_mixtures_no_nash} and \subref{transformer_concept_mixtures_nash}, those associated with the Nash game maintain steady states during periods of stable relationships between the input and target variables, while those without the Nash game exhibit larger fluctuations. Similarly, between Figures \ref{fig:mixture_weights_concept_drift}\subref{fig:mixture_weights_periodic_esn_non_game} and \subref{fig:mixture_weights_periodic_esn_nash_game}, the Nash game weights reach their periodic state more quickly and with lower oscillation amplitude than those without Nash synchronization.

\begin{figure}[ht!]
    \centering
    \begin{tabular}{cc}
        \begin{subfigure}[b]{0.45\linewidth}
            \centering
            \includegraphics[width=\linewidth]{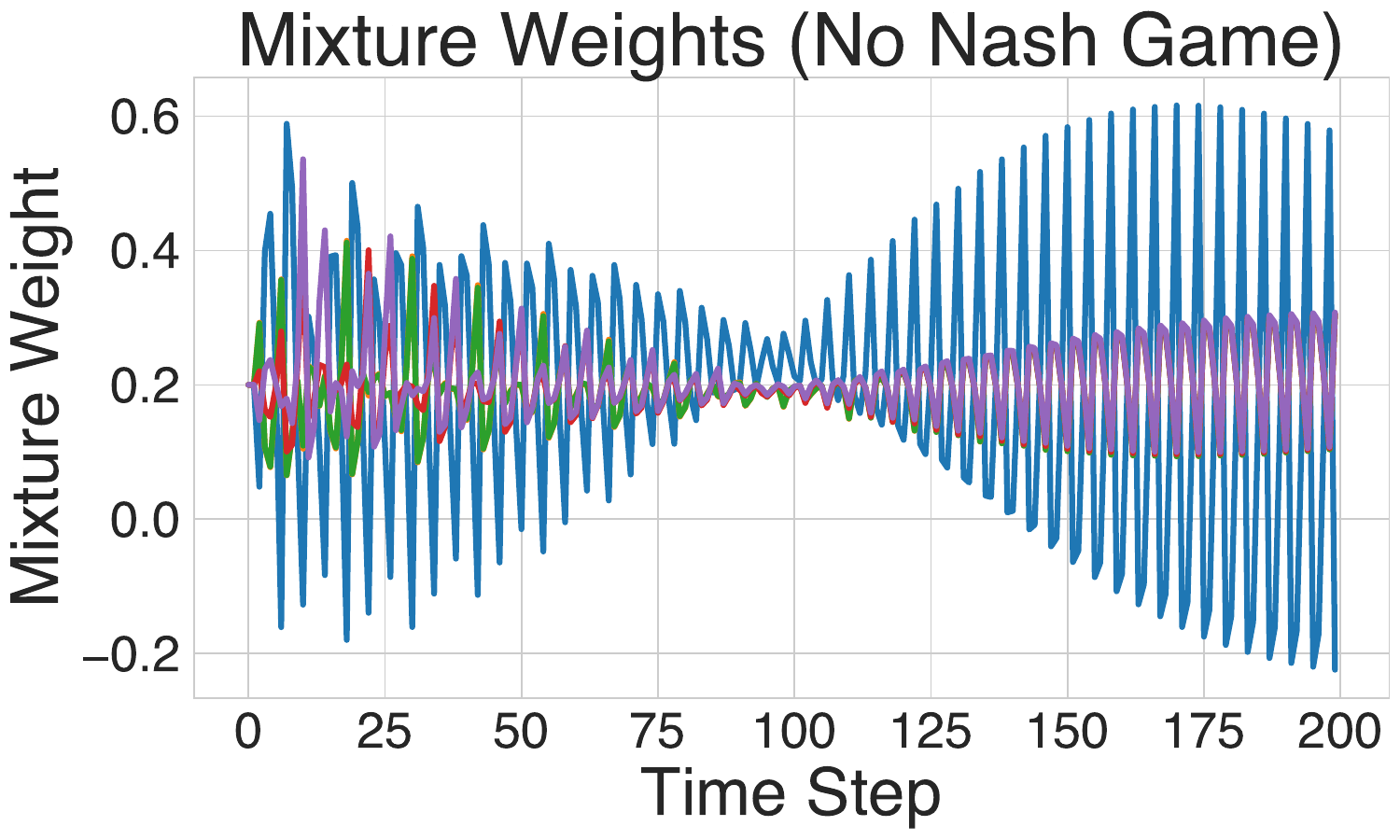}
            \caption{Mixture weights with transformer models.}
        \end{subfigure} &
        \begin{subfigure}[b]{0.45\linewidth}
            \centering
            \includegraphics[width=\linewidth]{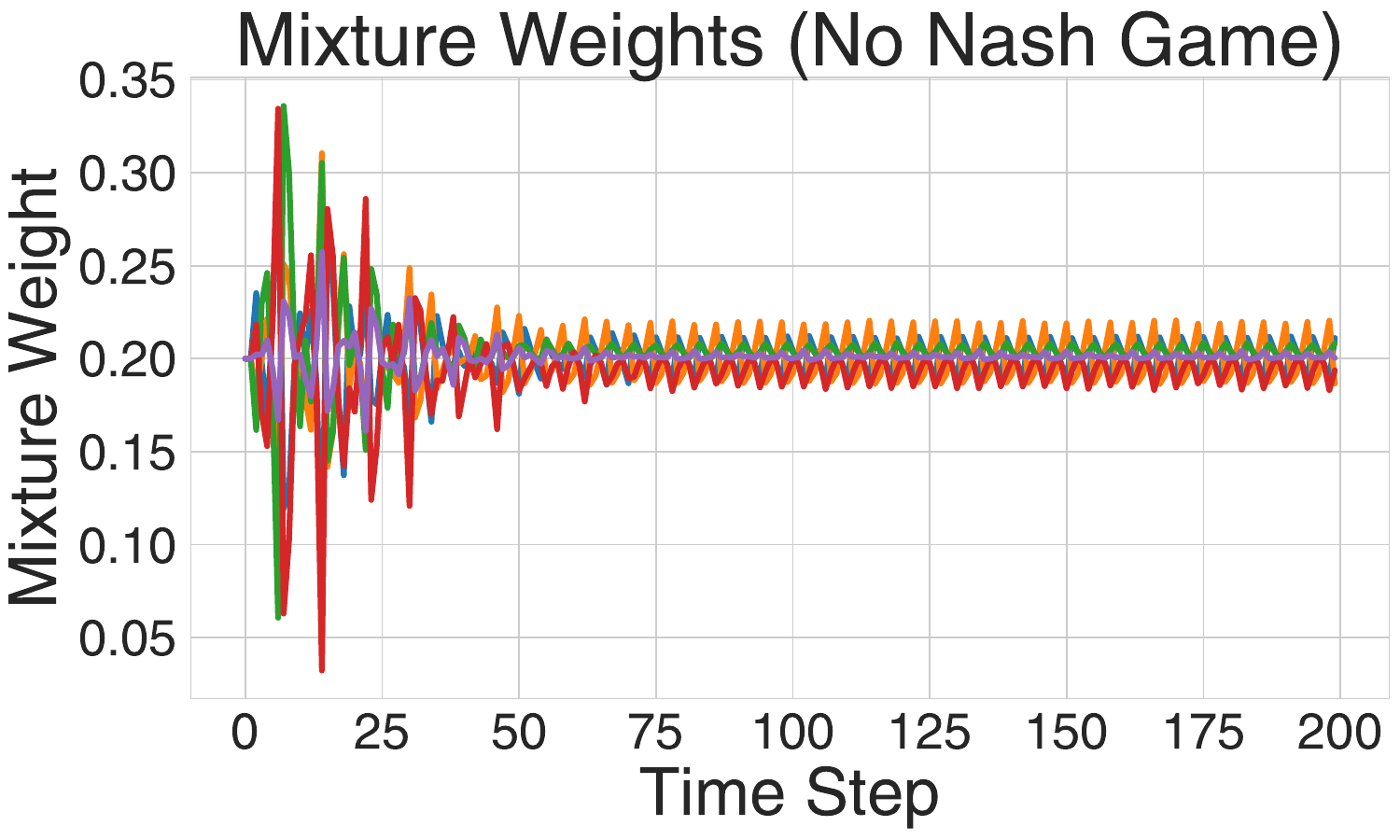}
            \caption{Mixture weights with ESN models.}
            \label{fig:mixture_weights_periodic_esn_non_game}
        \end{subfigure} \\
        \begin{subfigure}[b]{0.45\linewidth}
            \centering
            \includegraphics[width=\linewidth]{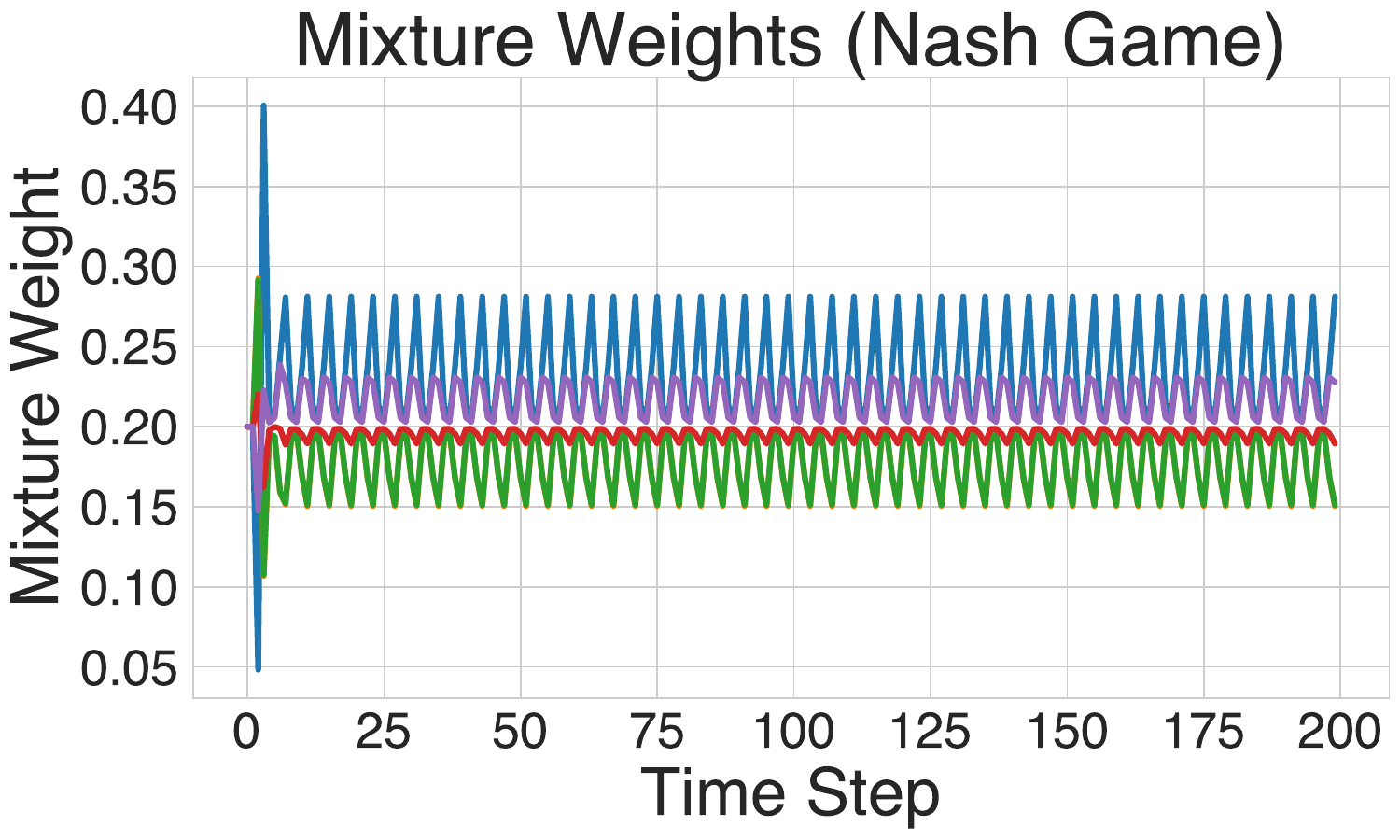}
            \caption{Mixture weights with transformer models.}
        \end{subfigure} &
        \begin{subfigure}[b]{0.45\linewidth}
            \centering
            \includegraphics[width=\linewidth]{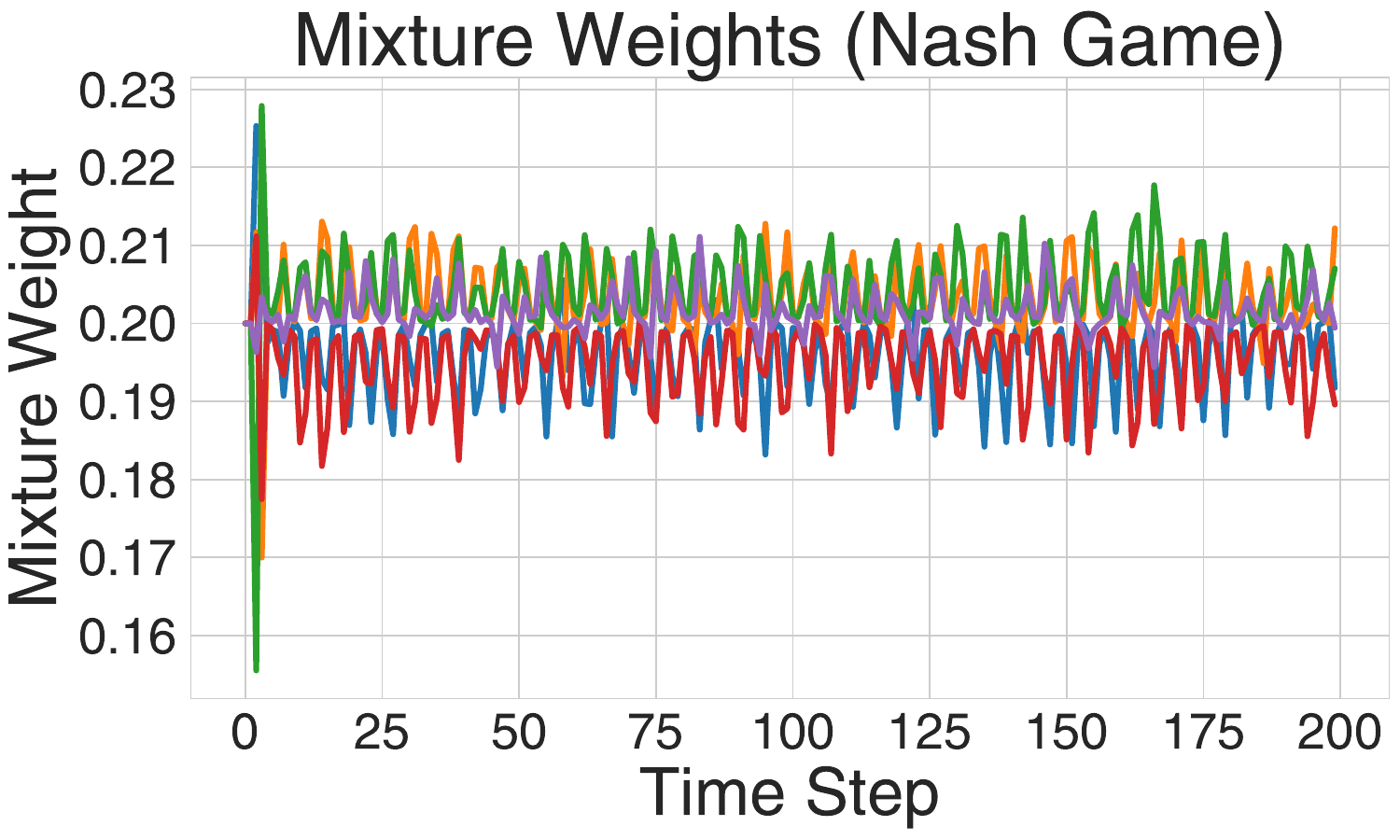}
            \caption{Mixture weights with ESN models.}
            \label{fig:mixture_weights_periodic_esn_nash_game}
        \end{subfigure}
    \end{tabular}
\caption{Periodic Signal Mixture Weights}
\label{fig:mixture_weights_periodic_signal}
\end{figure}

\subsubsection{Effect of Nash game look-back Length}
\label{s:Experiments__ss:Ablation___sss:look-back}

The ``look-back'' length, or number of steps backwards in time considered in the Nash game calculations, is an important hyperparameter, denoted by $T$. This length not only affects prediction accuracy but also the computational cost, as the longer the look-back, the more calculations are necessary in the backward and forward processes of the Nash system. In this section, the effect of increasing look-back length is examined using RFN models applied to the BoC Exchange Rate and ETT datasets in the 5-client setting. The optimal hyperparameters discussed in Appendix \ref{s:OptimalParamterChoices} are applied with the exception of variation in the Game $T$, which takes values in the set $\{2, 3, 4, 5, 6, 7, 8, 9, 10, 15, 25\}$.

\begin{figure}[ht!]
    \centering
    \includegraphics[width=0.49\linewidth]{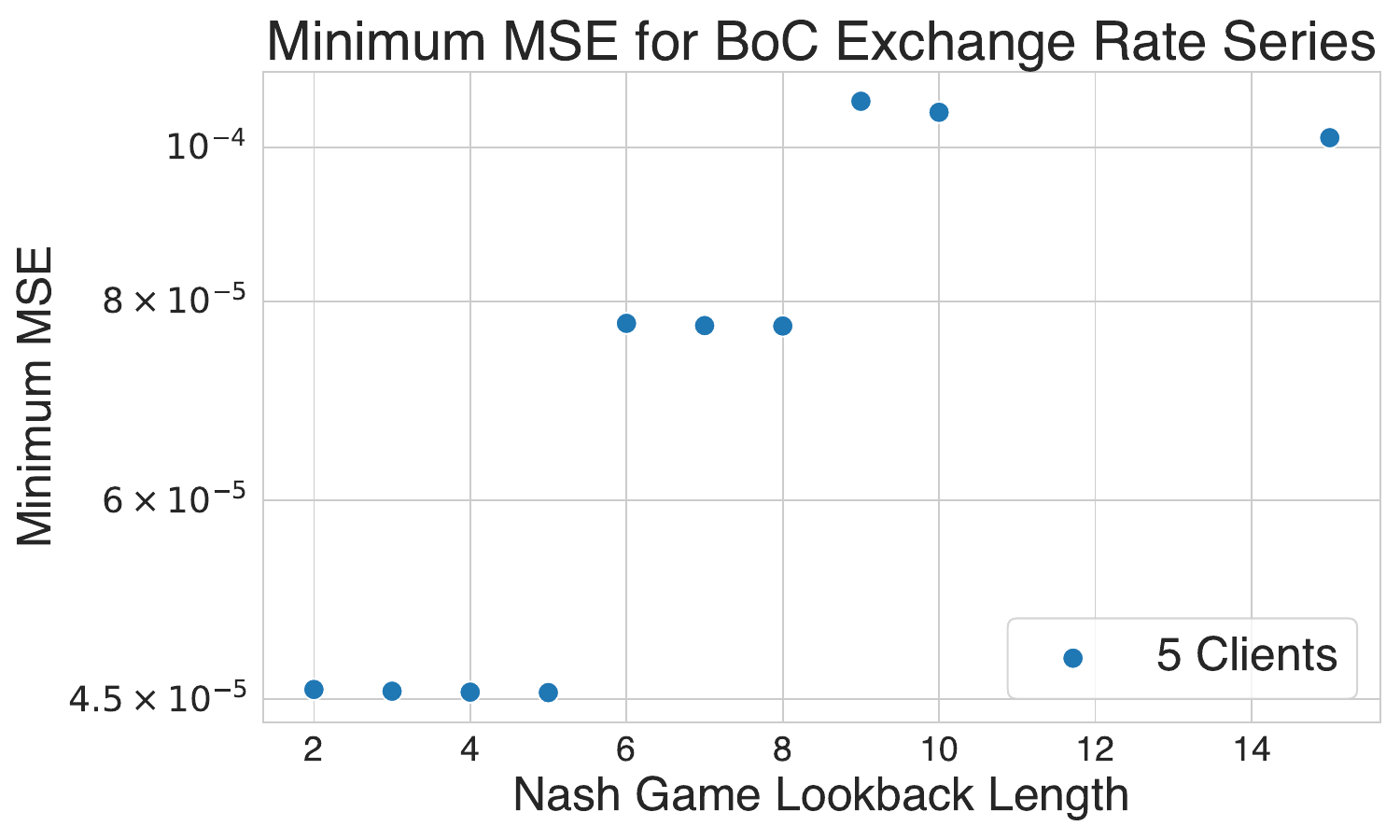}
    \includegraphics[width=0.49\linewidth]{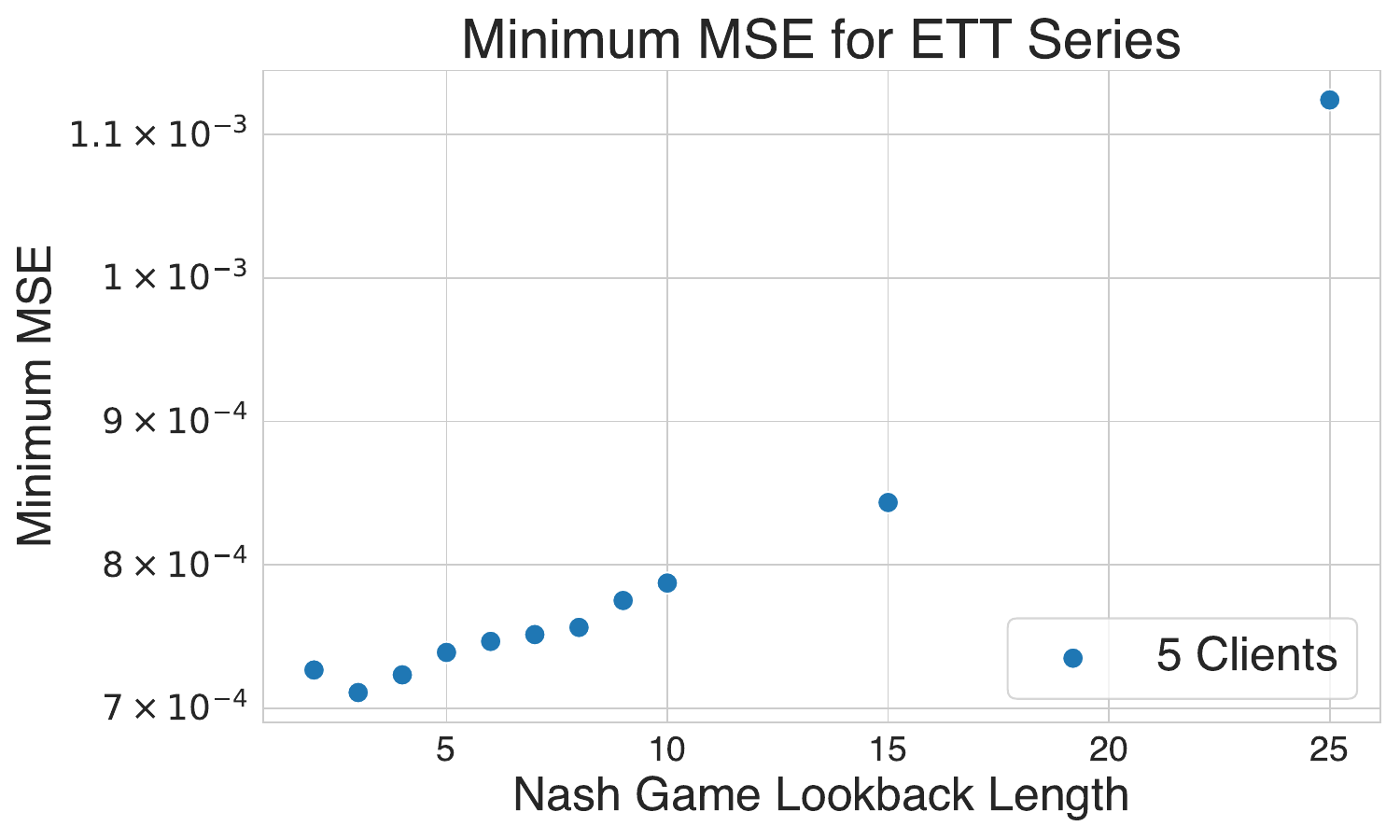}
    \caption{Variations in Minimum MSE for RFN models with varying Nash game look-back lengths for the BoC Exchange Rate (left) and ETT (right) time series using 5 experts.}
    \label{fig:look-back_results}
\end{figure}

Similar to the Client $T$ value for local ridge regression, the optimal look-back length is dataset specific, dependent on the decay parameter $\alpha$, and non-monotonic in nature. Results of the experiments are displayed in Figure \ref{fig:look-back_results}. Departing slightly from previous results, the figure reports the minimum MSE seen across three separate RFN models with different seeds. As look-back length increases, it is likely that the optimal hyperparameters differ significantly from those reported in Appendix \ref{s:OptimalParamterChoices}. As such, the time-series predictions sometimes become unstable with longer look-back lengths. If none of the three runs produced an MSE less than $1.0$, they are excluded from the figure. The optimal look-back length for the BoC Exchange Rate and ETT time series are $5$ and $3$, respectively, with the minimum MSE growing steadily thereafter. Encouragingly, there are other look-back values near the optimum with good performance.

\subsubsection{Nash game Frequency and Runtime/Performance Trade-off}
\label{s:Experiments__ss:Ablation___sss:Runtime}

Given the computational overhead associated with computing the Nash equilibrium, increasing the frequency with which the Nash computations are performed increases prediction latency. In this section, the performance drop induced by playing the Nash game less frequently is investigated, along with the runtime overhead imposed by each synchronization round. As discussed above, synchronizing predictions at every step is optimal in the experiments, but this comes at a cost of increased runtime. Runtime measurements for each round of Nash synchronization with five agents are shown in Table \ref{tab:runtime}, as well as total runtime.

MSE results when running Nash synchronization every $\tau$ steps are shown in Figure \ref{fig:nash_frequency} with a fixed set of hyperparameters matching those in Appendix \ref{s:OptimalParamterChoices}. Such hyperparameters are likely no longer optimal for different $\tau$ steps, where $\tau \in \{1, 2, 3, 4, 5, 6, 7, 8, 9, 10, 15, 25\}$. As such they no longer outperform the hyperparameter optimized non-Nash predictions. Nonetheless, the relationship between $\tau$ and performance is informative, even in this regime. The divergence from optimal hyperparameters is acutely felt in the 5-client setting for the BoC Exchange Rate dataset. When synchronization happens less often than every four steps, predictions tend towards instability. As such, the results for these experiments are not reported in the figure. A potential way to reduce the reliance of frequent synchronization is to replace previous client predictions with the ideal trajectory computed by the Nash game. Investigation of this improvement is left to future work.

\begin{figure}[ht!]
    \centering
    \includegraphics[width=0.49\linewidth]{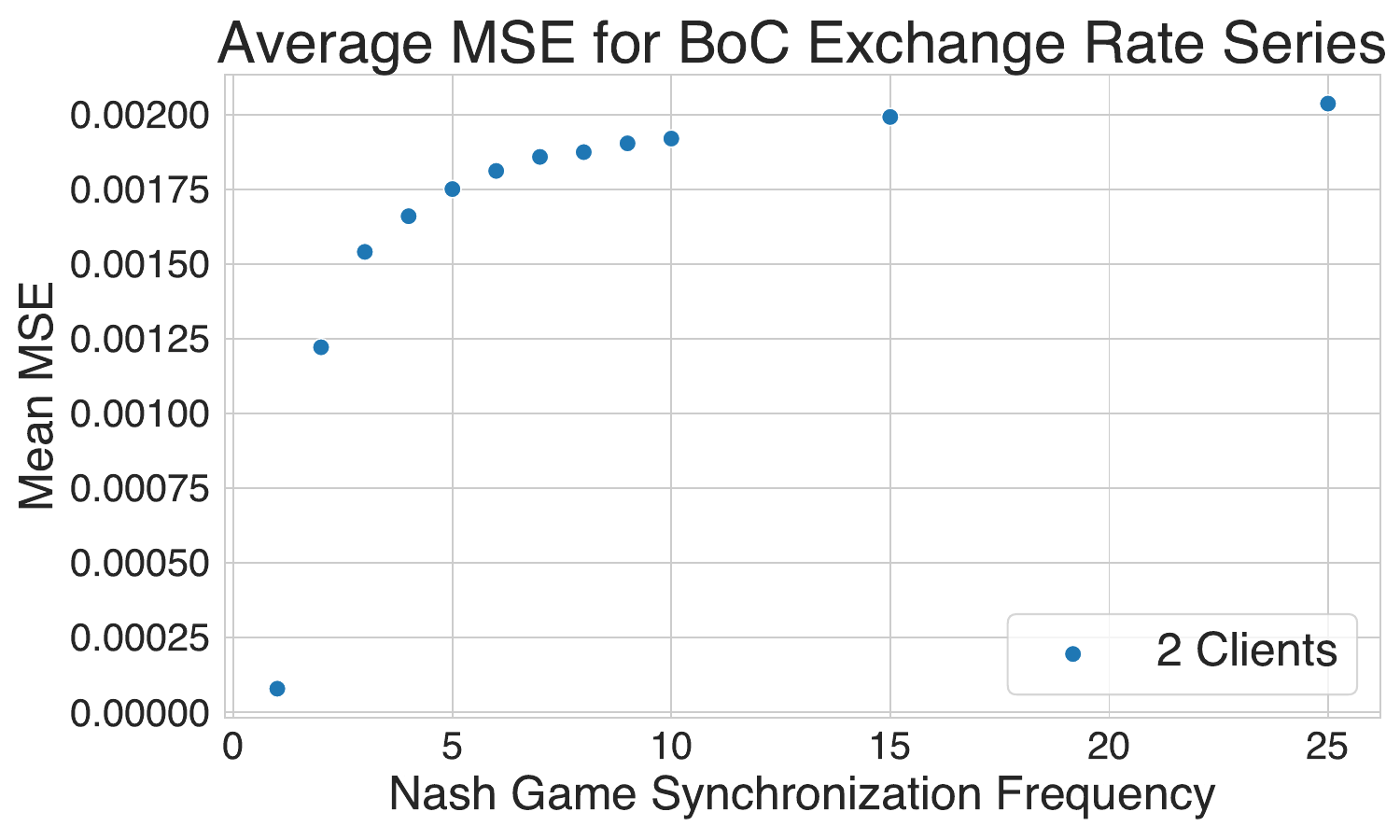}
    \includegraphics[width=0.49\linewidth]{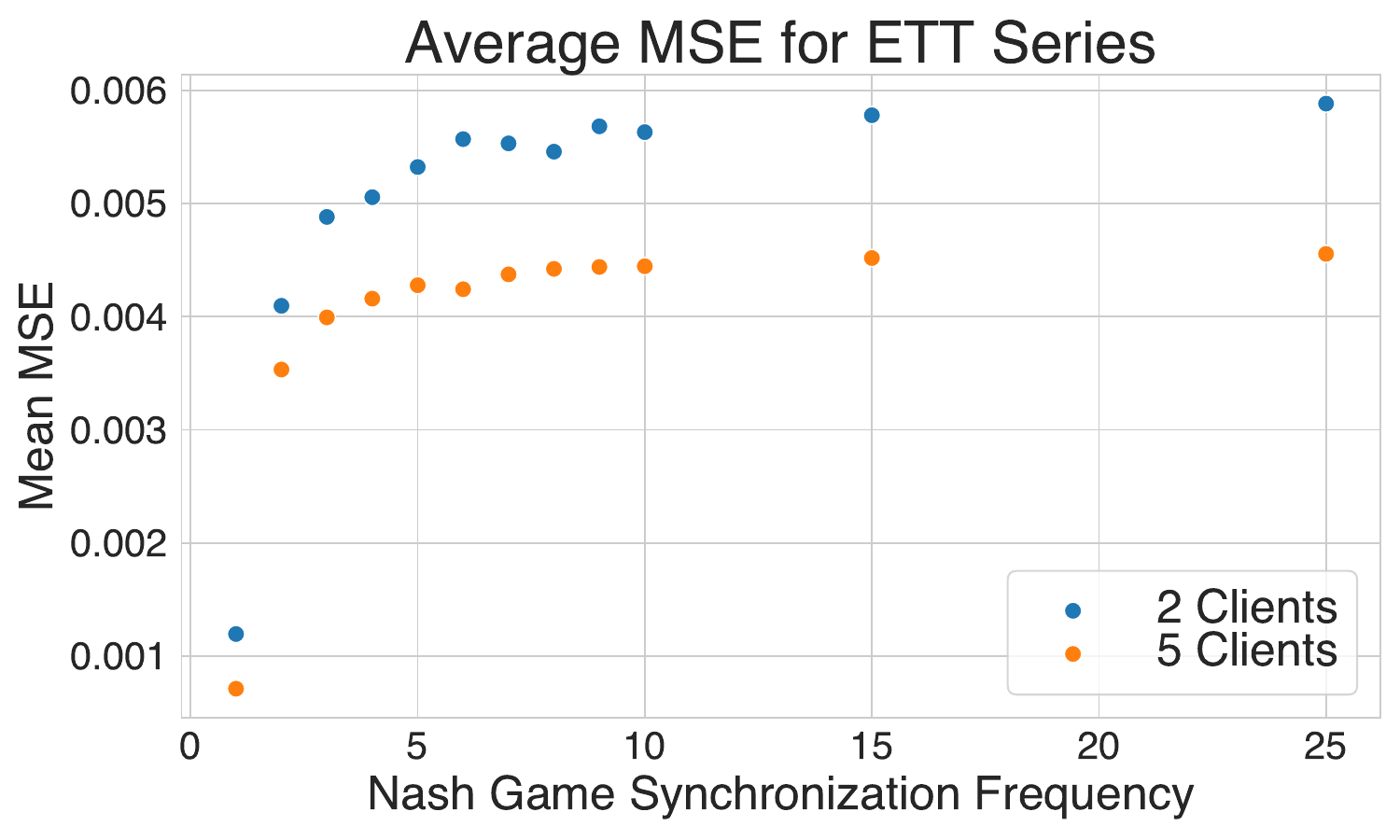}
    \caption{Variations in Mean MSE for RFN models with increasing $\tau$ for the BoC Exchange Rate (left) and ETT (right) time series. Note that the the 5-client setting for the BoC Exchange Rate dataset produces significant instabilities for the chosen hyperparameters when synchronization happens less than every four steps. Thus, it is excluded.}
    \label{fig:nash_frequency}
\end{figure}
 
Comparing the runtime and performance differences between the Nash game and non-game settings provides an estimation of the extra time imposed by the algorithm per round of playing the game. ESN models are expected to incur the largest overhead, as solutions to the Nash system require the approximation of a number of expectations. These approximations are computed with Monte-Carlo simulations using $100$ samples for each expectation when performing synchronization. The computational cost is considerably less for RFN models due to the analytical expressions for such expectations. Pretrained models impose the least runtime overhead when playing the Nash game. Table \ref{tab:runtime} reports the mean runtimes over three random seeds for the Periodic and BoC Exchange Rate time series using ESN, RFN, and transformer models with and without Nash synchronization. 

Total time for RFN model predictions is shorter than that of ESN models by a factor of approximately $40$. Subsequently, transformer models are faster than RFNs by roughly a factor of $2$, making these models a good choice when prediction latency is important. On the other hand, without the Nash game, transformers have the longest runtime compared to other models due to their computational overhead. It should be noted that significant computation optimizations have not been pursued in solving the Nash systems. For example, the Monte Carlo simulations are done sequentially but are embarrassingly parallelizable in practice.

\begin{table}[ht!]
    \centering
    \caption{Runtime measurements in seconds for simulations with five experts. Nash synchronization occurs each step when the technique is applied, and all optimal hyperparameters are used.}
    \begin{tabular}{llcccc}
    \toprule
     & & \multicolumn{2}{c}{Nash game} & \multicolumn{2}{c}{No Nash game} \\
    Model & Dataset & Total (s) & Per Step (s) & Total (s) & Per Step (s) \\
    \midrule
     \multirow{2}{*}{ESN} & Periodic & $1.88\mathrm{e}{\text{+}3}$s & $9.41\mathrm{e}{\text{-}0}$s & $4.24\mathrm{e}{\text{-}1}$s & $ 2.12\mathrm{e}{\text{-}3}$s\\
     & BoC & $2.00\mathrm{e}{\text{+}4}$s & $1.00\mathrm{e}{\text{-}1}$s & $4.24\mathrm{e}{\text{-}0}$s & $2.11\mathrm{e}{\text{-}3}$s\\
     \midrule
     \multirow{2}{*}{RFN} & Periodic & $5.28\mathrm{e}{\text{+}1}$s & $2.64\mathrm{e}{\text{-}1}$s & $3.68\mathrm{e}{\text{-}1}$s & $1.84\mathrm{e}{\text{-}3}$s\\
     & BoC & $5.08\mathrm{e}{\text{+}2}$s & $2.54\mathrm{e}{\text{-}1}$s & $4.07\mathrm{e}{\text{-}0}$s & $2.03\mathrm{e}{\text{-}3}$s \\
     \midrule
     \multirow{2}{*}{Transformer} & Periodic & $1.96\mathrm{e}{\text{+}1}$s & $9.80\mathrm{e}{\text{-}2}$s & $1.67\mathrm{e}{\text{-}0}$s & $8.34\mathrm{e}{\text{-}3}$s\\
     & BoC & $1.59\mathrm{e}{\text{+}2}$s & $7.93\mathrm{e}{\text{-}2}$s & $1.81\mathrm{e}{\text{+}1}$s & $9.03\mathrm{e}{\text{-}3}$s \\
     \bottomrule
    \end{tabular}
    \label{tab:runtime}
\end{table}

It is important to note that a considerable performance advantage when synchronizing with the Nash game may be preserved by performing such synchronizations less frequently if using optimal parameters. Consider the BoC Exchange Rate dataset and an RFN model. With hyperparameters tuned for the specific synchronization frequency, if Nash synchronization occurs every $10$ steps, rather than every step, the runtime is reduced by a factor of more than $7.5$, and the resulting MSE remains well below that of the non-game setting. Specifically, a synchronization frequency of $10$ results in an average MSE of $5.82080\mathrm{e}{\text{-}4}$ over $67.027$s compared with the non-game mean MSE of $3.15237\mathrm{e}\text{-}3$ and average runtime of $4.069$s.

\section{Conclusion and Future Work}

In this work, the proprietary federated learning problem is viewed through a non-cooperative game-theoretic lens (Theorem \ref{thm:NashEqmExistence})
and an efficient and decentralized algorithm (Algorithm \ref{alg:sync_subroutine}) enabling collaboration among black-box agents without revealing their internal structure is derived.  
The method is successfully applied to transformers (Corollary \ref{cor:PreTrainedTransformer}), random feature networks (Corollary \ref{cor:Random_NeuralNetwork}), and echo-state networks (Equation \eqref{eq:reservoir}). Numerical experiments are conducted on challenging real-world and synthetic time series, demonstrating significant improvements in predictive accuracy over existing baselines. 

This work opens as many possible future research questions as it answers. Two such questions which we emphasize are how to solve the static and non-linear settings, beyond the kernelized setup we currently have where the agents only control their linear readout layer in Equation \eqref{eq:DECODER_RESIDUALS}.  We also mention a possible asymptotically large-population version of this problems which we imagine can be approached using the mean-field game toolbox, in a vague analogy with the neural tangent kernel \citep{jacot2018neural}.

\bibliography{Refs}

\appendix
\section{Proofs of Propositions and Theorems}
\label{s:Proofs}

This section contains the proofs of our main results.

\subsection{{Proof of Proposition \ref{prop:localgreedyupdates}}}
\begin{proof}
Observe that the loss function in Equation \eqref{eq:ridge_local} can be re-written as
\allowdisplaybreaks
\begin{align}
\label{eq:redux_ridge}
\begin{aligned}
        & \sum_{s=0}^t\,
            e^{ -\alpha_i(t-s)}
            \,\big\|(y_s-\hat{Y}_s^i) - Z_s^i \beta_t^i \big\|^2
            +
            \gamma_i\big\|\beta_t^i\big\|^2
    \\
    & =
        \sum_{s=0}^t\,
            \,\big\| e^{-\alpha_i(t-s)/2}
                (y_s-\hat{Y}_s^i)
             - (e^{-\alpha_i(t-s)/2} Z_s^i) \beta^i_t \big\|^2
     +
            \gamma_i\big\|\beta_t^i\big\|^2  \\ 
    & = \big( \bar{y}^i_t - X^i_t \beta^i_t \big)^\top \big( \bar{y}^i_t - X^i_t \beta^i_t \big) 
     + \gamma_i \beta^{i\top}_t \beta^i_t,
\end{aligned}
\end{align}
and the last line of Equation \eqref{eq:redux_ridge} is simply a standard ridge-regression problem.  Since this is a strictly convex problem, there is a unique $\beta_t^i\in \mathbb{R}^{d_z}$ minimizing the last line of Equation\eqref{eq:redux_ridge}.  The desired closed-form expression is given in \citep[Equation (3.44)]{HastieTibshiraniFriedman_2001ESLBook}.
\end{proof}

\subsection{{Proof of Theorem \ref{thm:OptimalWeights}}}

\begin{proof}
Define $A\eqdef 2\,
 ( \hat{\mathbf{Y}}^\top_t \hat{\mathbf{Y}}_t + \kappa I_N)$, where $I_N$ is the $N\times N$ identity matrix, and set $b\eqdef 2  \big(
y^\top_t  \hat{\mathbf{Y}}_t\big)^{\top}$. 
The matrix $\hat{\mathbf{Y}}^\top_t \hat{\mathbf{Y}}_t$ is positive semi-definite.  Since $\kappa>0$ then $\det(\kappa I_N)=\kappa^N>0$ and thus, the matrix $A$ is positive definite.  

Let $B\eqdef \mathbf{1}_N\in \mathbb{R}^{N\times 1}$, where for $i=1,\dots,N$ we have $B_i=1$.  
Note that $B$ has rank $1\le N$.

Observe that the loss function $\Big\| y_t  - \sum_{j=1}^N w^j_t \hat{Y}^j_t \Big\|^2 + \sum_{j=1}^N \kappa\, | w^j_t|^2 $ can be written as
\allowdisplaybreaks
\begin{align}
    \Big\| y_t  - \sum_{j=1}^N w^j_t \hat{Y}^j_t \Big\|^2 
    + 
    \sum_{j=1}^N \kappa\, | w^j_t|^2 
= &
    \frac1{2}\, 
    w^\top A w 
  - b^{\top} w 
   + y_t^\top y_t
\eqdef Q(w)
.
\end{align}
Therefore, the server's optimization problem in Equation \eqref{eq:server} can be written as a quadratic program with linear constraint 
\begin{equation}
\label{eq:server__QPLC}
\begin{aligned}
    \min_{w\in \mathbb{R}^N}\,
    &
    \frac1{2} \,
    w^{\top} A w - b^{\top} w
\\
\mbox{s.t.} & \,\,
B^{\top} w = \eta .%1.
\end{aligned}
\end{equation}
Since $A$ is positive definite and since $\operatorname{rank}(B)=1\le N$ then the quadratic program in Equation \eqref{eq:server__QPLC} is a non-degenerate quadratic program with a feasible linear constraint set.
By \citep[Proposition 6.3]{gallier2020linear}, there exists a unique solution to the constrained quadratic program and it is given by the Karush-Kuhn-Tucker (KKT) system
\begin{align}
\label{eq:KKT_1}
A\,w + B\lambda & = b
\\
\label{eq:KKT_2}
B^{\top} w & = \eta .%1.
\end{align}
Since $A$ is positive definite, it is invertible.  Therefore, Equation \eqref{eq:KKT_1} allows us to isolate $w$
\begin{align}
\label{eq:KKT_1__manip1}
w  = A^{-1}\,(b - B\lambda)
.
\end{align}
Plugging the expression for $w$, which we just obtained in Equation \eqref{eq:KKT_1__manip1}, back into Equation \eqref{eq:KKT_2} yields
\allowdisplaybreaks
\begin{align}
\nonumber
\eta
= & B^{\top} \big(A^{-1}\,(b - B\lambda)\big)
\\
\nonumber
\therefore \,
\eta
= & B^{\top} \big(A^{-1}\,b - A^{-1}B\lambda \big) 
= B^{\top}A^{-1}\,b - \lambda B^{\top}A^{-1}B
\\
\nonumber
\therefore \,
    \lambda B^{\top}A^{-1}B
= & 
    B^{\top}A^{-1}\,b-
    \eta%1
\\
\label{eq:KKT_2__manip1}
\therefore \,
    \lambda 
= & 
    \frac{
        B^{\top}A^{-1}\,b-
        \eta%1
    }{
        B^{\top}A^{-1}B
    }
.
\end{align}
Plugging the value for $\lambda$, just obtained in Equation \eqref{eq:KKT_2__manip1}, back into Equation \eqref{eq:KKT_1__manip1} yields the optimizer $w^{\star}\in \mathbb{R}^N$ to Equation \eqref{eq:server__QPLC}
\allowdisplaybreaks
\begin{align}
\nonumber
w^{\star}   & = A^{-1}\,\big(b - B\lambda\big)
\\
\nonumber
& = 
 A^{-1}\,b -  \lambda\, A^{-1}B
\\
\nonumber
& = 
 A^{-1}\,b -  
 \biggl(
    \frac{
        B^{\top}A^{-1}\,b-
        \eta%1
    }{
        B^{\top}A^{-1}B
    }
 \biggr)
 \, A^{-1}B
\\
\label{eq:KKT_1__manip2}
\therefore \,
w^{\star} & = 
 A^{-1}\,
 \Biggl(
     b -  
     \biggl(
        \frac{
            B^{\top}A^{-1}\,b-
            \eta%1
        }{
            B^{\top}A^{-1}B
        }
     \biggr)B
 \Biggr)
.
\end{align}
Plugging in our definitions for $A$, $b$, and $B$ back into Equation \eqref{eq:KKT_1__manip2} yields the conclusion.
\end{proof}

\subsection{{Proof of Theorem \ref{thm:NashEqmExistence}}}
\label{s:Proofs__ss:thm:NashEqmExistence}
\begin{proof} 
We employ dynamic programming to establish the result via backward induction~\cite[Corollary 6.1]{BO1998}.  
Let $\hat{Y}_t \eqdef [\hat{Y}^{1\top}_t, \dots, \hat{Y}^{N\top}_t]^\top$ denote the concatenated prediction vector of the $N$ agents. 
Define $V_i(t,\hat{\mathbf{y}})$ as the value function of agent $i$ at time $t$, given $\hat{Y}_t=\hat{\mathbf{y}}$, corresponding to the Nash game~\eqref{eq:ENCODER}, 
\eqref{eq:DECODER_RESIDUALS}, and~\eqref{eq:Objective_nth_expert}. 
From \eqref{eq:Objective_nth_expert}, the value functions at time $t=T$ with $\hat{Y}_T=\hat{\mathbf{y}}$ satisfy 
$V_i(T, \hat{\mathbf{y}}) = 0$ and thus take the form 
\begin{align} 
 V_i(T, \hat{\mathbf{y}}) = \hat{\mathbf{y}}^\top  P_i(T) \hat{\mathbf{y}}  
  + 2 S_i(T)^\top \hat{\mathbf{y}} + r_i(T) , \quad i = 1, \dots, N , 
\end{align} 
with $P_i(T)=0$ and $S_i(T)=0$ determined by \eqref{eqn:Pi} and \eqref{eqn:Si}.

Assume by induction that for some $t\in \{ 1, \dots, T-1\}$ and for all $s\in\{t +1, \dots , T\}$, the value functions $\{V_i(s, \hat{\mathbf{y}})\}_{i=1}^N$ at time $s$ with $\hat{Y}_s = \hat{\mathbf{y}}$ take the affine-quadratic form 
\begin{align} 
  V_i(s, \hat{\mathbf{y}} ) =  \hat{\mathbf{y}}^\top  P_i(s) \hat{\mathbf{y}}  
  + 2 S_i(s)^\top \hat{\mathbf{y}} + r_i(s) , \quad i = 1, \dots, N , \notag  
 \end{align} 
with $P_i(s)$ and $S_i(s)$ determined by \eqref{eqn:Pi} and \eqref{eqn:Si} on the time span $s\in \{t+1, \dots, T\}$. 
We show that at time $t$ with $\hat{Y}_t=\hat{\mathbf{y}}$, the equilibrium strategy is given by $\boldsymbol{\beta}_t=[\beta^{1\top}_t, ..., \beta^{N\top}_t ]^\top=\mathbf{G}(t)\hat{\mathbf{y}} + \mathbf{H}(t)$, 
and the value functions $\{V_i(t, \hat{\mathbf{y}})\}_{i=1}^N$ take the affine-quadratic form 
\begin{align} 
  V_i(t, \hat{\mathbf{y}} ) =  \hat{\mathbf{y}}^\top  P_i(t) \hat{\mathbf{y}}  
  + 2 S_i(t)^\top \hat{\mathbf{y}} + r_i(t) , \quad i = 1, \dots, N , 
  \label{vi(t,y)ansatz}
 \end{align} 
with $P_i(t)$ and $S_i(t)$ determined by \eqref{eqn:Pi} and \eqref{eqn:Si}, respectively.

By the dynamic programming principle and the induction hypothesis, 
$\{V_i(t, \hat{\mathbf{y}})\}_{i=1}^N$ satisfy the system of dynamic programming equations:  
\allowdisplaybreaks
\begin{align} 
V_i(t, \hat{\mathbf{y}}) 
= & \min_{\beta^i} \mathbb{E} \Big\{  e^{-\alpha_i (T-1-t)} 
 \Big[ \Big\| y_{t+1} - \mathbf{w}_t^\top \big( \hat{\mathbf{y}} + \sum_{j=1}^N \mathbf{e}_j Z^j_t \beta^j \big)  \Big\|^2 
  + \gamma_i \| \beta^i \|^2   \Big]   \notag \\ 
  & \hspace{5.5cm} 
  +   V_i(t+1, \hat{\mathbf{y}} + \sum_{j=1}^N  \mathbf{e}_j Z^j_t \beta^j ) 
   \Big| \hat{Y}_t = \mathbf{\hat{y}}  \Big\} \notag \\ 
  = & \min_{\beta^i} \mathbb{E} 
  \Big\{   e^{-\alpha_i (T-1-t)} 
 \Big[ \Big\| y_{t+1} - \mathbf{w}_t^\top \big( \hat{\mathbf{y}} + \sum_{j=1}^N \mathbf{e}_j Z^j_t \beta^j \big)  \Big\|^2 
  + \gamma_i \| \beta^i \|^2   \Big]    \notag \\ 
 & \hspace{2cm}  +   \Big[ \hat{\mathbf{y}} + \sum_{j=1}^N \mathbf{e}_j Z^j_t  \beta^j  \Big]^\top P_i(t+1) 
  \Big[ \hat{\mathbf{y}} + \sum_{j=1}^N \mathbf{e}_j Z^j_t  \beta^j  \Big] \notag \\ 
 &   \hspace{3.5cm}  + 2    S_i(t+1)^\top \Big[ \hat{\mathbf{y}} + \sum_{j=1}^N \mathbf{e}_j Z^j_t  \beta^j  \Big] 
  +   r_i(t+1)      \Big\} , \quad 
   i = 1, \dots, N , 
  \label{DPeqnVi(t)}  
\end{align} 
where the condition $\hat{Y}_{t}=\hat{\mathbf{y}}$ is removed from the expectation because $\{Z^j_{t}\}_{j=1}^N$ are independent of $\hat{Y}_{t}$. 
Since the objective~\eqref{DPeqnVi(t)} to be minimized is strictly convex in $\boldsymbol{\beta}=[\beta^{1 \top}, \dots, \beta^{N \top} ]^\top$, the first-order necessary conditions for minimization are also sufficient 
and therefore we have the unique set of equations  
\begin{align} 
& \mathbb{E} \Big\{   - e^{-\alpha_i (T-1-t)} Z^{i\top}_t \mathbf{e}_i^\top \mathbf{w}_t 
 \Big[ y_{t+1} - \mathbf{w}_t^\top \big( \hat{\mathbf{y}} + \sum_{j=1}^N \mathbf{e}_j Z^j_t \beta^j \big)   \Big] 
 + e^{-\alpha_i (T-1-t)} \gamma_i \beta^i    \notag \\  
&  \hspace{3cm} +  Z^{i\top}_t \mathbf{e}_i^\top P_i(t+1) \Big( \hat{\mathbf{y}} + \sum_{j=1}^N  \mathbf{e}_j Z^j_t \beta^j \Big) 
  + Z^{i\top}_t \mathbf{e}_i^\top S_i(t+1) \Big\}  =0 , 
  \quad i = 1, \dots, N , \notag 
\end{align} 
which can be written in the compact form  
\begin{align} 
 &  \big[ e^{-\boldsymbol{\alpha}(T -1 - t)}  \boldsymbol{\Gamma}  
  + e^{-\boldsymbol{\alpha}(T -1 - t)} \widehat{\mathbf{A}}(t) 
   + \mathbf{A}(t) \big] \boldsymbol{\beta}_t  \nonumber \\ 
   & \hspace{10ex}+ \mathbf{B} (t) \hat{\mathbf{y}} + \mathbf{C}(t) 
  - e^{-\boldsymbol{\alpha}(T-1-t)} \mathbf{D}^\top(t) \mathbf{w}_t (y_{t+1} - \mathbf{w}_t^\top  \hat{\mathbf{y}} ) 
 = 0. 
\end{align} 
Since \eqref{eqn:Pi} admits a solution on $\{t+1, \dots, T\}$ by the hypothesis of the theorem, it follows from the definitions of $\boldsymbol{\Gamma}$, $\mathbf{A}(\cdot)$ and $\widehat{\mathbf{A}}(\cdot)$ in Table \ref{tab:matricesEqnPiSi} that 
$ e^{-\boldsymbol{\alpha} (T-1-t) } \boldsymbol{\Gamma} 
  + e^{-\boldsymbol{\alpha}(T -1 - t)} \widehat{\mathbf{A}}(t)  + \mathbf{A}(t)$ is positive definite. 
Consequently, the equilibrium strategy $\boldsymbol{\beta}_t=[\beta^{1 \top}_t, \dots, \beta^{N \top}_t ]^\top$ at time $t$ is then given by 
\begin{align} 
 \boldsymbol{\beta}_t = &  \big[ e^{-\boldsymbol{\alpha} (T-1-t) } \Gamma 
  + e^{-\boldsymbol{\alpha}(T -1 - t)} \widehat{\mathbf{A}}(t)  + \mathbf{A}(t) \big]^{-1} \cdot \notag \\  
& \qquad \big[  e^{-\boldsymbol{\alpha}(T-1-t)} \mathbf{D}^\top(t) \mathbf{w}_t (y_{t+1} - \mathbf{w}_t^\top  \hat{\mathbf{y}} ) 
- \mathbf{B}(t) \hat{\mathbf{y}} - \mathbf{C}(t) 
  \big]  \notag \\ 
= & \mathbf{G}(t) \hat{\mathbf{y}} + \mathbf{H}(t).   
\label{eqmbetat}   
\end{align} 
By substituting \eqref{eqmbetat} into \eqref{DPeqnVi(t)}, we obtain that $\{ V_i(t, \hat{\mathbf{y}}) \}_{i=1}^N$ take the affine-quadratic form \eqref{vi(t,y)ansatz}; that is, for $i=1,\dots,N$:
\begin{align} 
 & \hat{\mathbf{y}}^\top  P_i(t) \hat{\mathbf{y}}  
  + 2 S_i(t)^\top \hat{\mathbf{y}} + r_i(t) \notag \\ 
  \qquad&= \big[ \mathbf{G}(t) \hat{\mathbf{y}} + \mathbf{H}(t) \big]^\top  
 \big[ \widehat{\mathbf{A}}(t) + \mathbf{D}_i(t) + e^{-\alpha_i(T-1-t)} 
  \gamma_i \widehat{\mathbf{e}}_i \widehat{\mathbf{e}}_i^\top \big]  \big[ \mathbf{G}(t) \hat{\mathbf{y}} + \mathbf{H}(t) \big]  
 \notag \\ 
 & \qquad+ 2 \big[  - e^{-\alpha_i(T-1-t)} (y_{t+1} - \mathbf{w}_t^\top \hat{\mathbf{y}} )^\top \mathbf{w}_t^\top  
  + \hat{\mathbf{y}}^\top P_i(t+1)  + S_i^\top(t+1)  \big] 
  \mathbf{D}(t) 
   \big[ \mathbf{G}(t) \hat{\mathbf{y}} + \mathbf{H}(t) \big] 
 \notag \\ 
 & \qquad+ e^{-\alpha_i(T-1-t)} \big\| y_{t+1} - \mathbf{w}_t^\top \hat{\mathbf{y}} \big\|^2 + \hat{\mathbf{y}}^\top P_i(t+1) \hat{\mathbf{y}} 
 + 2 S_i^\top(t+1) \hat{\mathbf{y}} + r_i(t+1) . 
 \notag 
\end{align} 
Since the above equations hold for all possible $\hat{\mathbf{y}}$, 
by matching the coefficients of the quadratic and linear terms of $\hat{\mathbf{y}}$, we obtain $P_i(t)$ and $S_i(t)$ as determined by \eqref{eqn:Pi} and \eqref{eqn:Si}. 
By induction, we have shown that the equilibrium strategy takes the form \eqref{eqmvecbeta} with the matrix-valued coefficients determined by \eqref{eqn:Pi} and \eqref{eqn:Si}.  
\end{proof} 

\subsection{Proofs of Corollaries} 
\label{s:ProofCorollaries}

\textbf{Corollary \ref{cor:Random_NeuralNetwork}}

\begin{proof}
From Theorem \ref{thm:NashEqmExistence}, the sub-matrices of $\mathbf{A}(t)=\big[ A^{(i,j)}(t) \big]_{1\leq i,j \leq N}$ 
are 
\begin{align} 
A^{(i,j)}(t) 
=
 & \mathbb{E} \big[ Z^{i\top}_t \mathbf{e}_i^\top P_i(t+1) \mathbf{e}_j Z^j_t \big] 
 \notag \\ 
 = & 
 \mathbf{e}_i^\top P_i(t+1) \mathbf{e}_j \mathbb{E} \big[ Z^{i\top}_t  Z^j_t \big] ,
 \quad 1 \leq i, j \leq N .  
  \notag 
\end{align} 
The diagonal sub-matrices 
$A^{(i,i)}(t) = \big[ ( A^{(i, i)}(t) )_{jk} \big]_{1\leq j, k\leq d_z}$ are given by  
\begin{align} 
 ( A^{(i,i)}(t) )_{jk}  
 = &
  \mathbf{e}_i^\top  P_i(t+1) \mathbf{e}_i
  \mathbb{E} \big[ (Z_{t+1}^{i\top} Z_{t+1}^i)_{jk} \big]  
    \notag \\ 
= &
  \mathbf{e}_i^\top  P_i(t+1) \mathbf{e}_i
  \mathbb{E} \big[ (Z_{t+1}^i)_j (Z_{t+1}^i)_k \big]  , 
    \notag 
\end{align} 
where $\mathbb{E}\big[ (Z_{t+1}^i)_j (Z_{t+1}^i)_k \big]$ are given by \eqref{E[(Zij)b(Zik)]}. 

Since the white noises $\{W^i_t\}_{i, t}$, are independent,  
$\{ Z^i_t \}_{i, t}$ are independent. 
The off-diagonal sub-matrices of $\mathbf{A}(t)$ are written as   
\begin{align} 
A^{(i,j)}(t) 
 =&  \mathbb{E} \big[ Z^{i\top}_t \mathbf{e}_i^\top P_i(t+1) \mathbf{e}_j Z^j_t \big]  \notag \\ 
 = & \mathbb{E} \big[  Z^{i\top}_t \big]  \mathbf{e}_i^\top P_i(t+1) \mathbf{e}_j  \mathbb{E} \big[ Z^j_t \big] , 
\notag  
\end{align} 
where the explicit form of $\mathbb{E} \big[ Z^{i}_t \big]$ is given by \eqref{E[Zi]}. 
The explicit forms of the matrices $\widehat{\mathbf{A}}(t)=\big[ \widehat{A}^{(i,j)}(t) \big]_{i,j =1}^N$ 
and $\mathbf{D}_i(t) = \big[ D_i^{(j,k)}(t) \big]_{j,k = 1}^N$,  $i = 1, ..., N$ are determined in a similar manner. 
The explicit forms of $\mathbf{B}(t)$, $\mathbf{C}(t)$ and $\mathbf{D}(t)$ are determined by \eqref{E[Zi]} and 
$a(t, i) = \mathbb{E}[Z^i_{t+1}]$. 
\end{proof}

The following lemma gives enough information to compute the relevant matrices in Theorem \ref{thm:NashEqmExistence}.
\begin{lemma}[Mean and Covariance of $Z_{\cdot}^i$ in Corollary \ref{cor:Random_NeuralNetwork} for $d_y=1$]
\label{lem:Mean_Cov_ResModel} 
Under the conditions \\specified in Corollary \ref{cor:Random_NeuralNetwork}, and for each $t \in \mathbb{N}$, $i = 1, \dots, N$, and $j, k = 1, \dots, d_z$, the following hold:
\begin{align} 
 &       \mathbb{E}[Z_{t+1}^i] 
    = 
        a^i_t 
        \odot
        \Big(
            \bar{1}_{d_z} 
            - 
            \Phi\bullet \big(
                \frac{-a^i_t}{\sigma_t^i}
            \big)
        \Big)
        +
            \frac{\sigma_t^i}{\sqrt{2\pi}}
        \,
        \exp\bullet\Big(
            \frac{-(a^i_t \odot a^i_t )}{2\, (\sigma_t^i)^2 }
        \Big) , 
        \label{E[Zi]}
\\ 
&
\begin{aligned} 
     \mathbb{E}\big[(Z_{t+1}^i)_j  (Z_{t+1}^i)_k\big] 
    = 
    \begin{cases}
             ( (a^i_t \odot a^i_t )_j + (\sigma^i_t)^2 )
            \cdot
            \Big(
                1 
                - 
                \Phi(
                    \frac{-(a^i_t)_j}{\sigma_t^i}
                )
            \Big) 
            \\
            \hspace{2cm}
            +
            (a^i_t)_j\,
                \frac{\sigma_t^i}{\sqrt{2\pi}}
            \,
            \exp\Big( 
                \frac{ - (a^i_t \odot a^i_t )_j }{2\,(\sigma_t^i)^2 }
            \Big) ,   
        \quad & \mbox{ if } j=k , 
\\
   \mathbb{E}\big[ (Z_{t+1}^i)_j \big] 
    \mathbb{E} \big[ (Z_{t+1}^i)_k \big]  , 
\quad & \mbox{ if } j\neq k, 
\end{cases}
\end{aligned} 
\label{E[(Zij)b(Zik)]}
\end{align} 
where $\odot$ denotes the Hadamard product,\footnote{The Hadamard product of any pair of vectors $a,b\in \mathbb{R}^K$ and any $K\in \mathbb{N}$ is given by
$a\odot b\eqdef (a_kb_k)_{k=1}^K$.}
and $\Phi$ is the standard normal CDF. 
\end{lemma}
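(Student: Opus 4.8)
The plan is to reduce the entire statement to two one-dimensional rectified-Gaussian moment computations. Since $d_y=1$, the scale $\sigma_t^i$ is a scalar, and I would condition on the frozen random weights $A^i,b^i$ (equivalently, treat the pre-activation mean $a_t^i = A^i[x_t,\dots,x_t]+b^i$ as a given constant vector). Under this conditioning the pre-activation $a_t^i + \sigma_t^i W_t^i$ has independent coordinates, its $j$-th coordinate being distributed as $\mathcal{N}\big((a_t^i)_j,(\sigma_t^i)^2\big)$ because the entries $(W_t^i)_j$ are i.i.d.\ $\mathcal{N}(0,1)$. Hence $(Z_{t+1}^i)_j=\operatorname{ReLU}\big((a_t^i)_j+\sigma_t^i(W_t^i)_j\big)$, and the whole lemma follows once I compute, for a single $Y\sim\mathcal{N}(\mu,\sigma^2)$ with $\sigma>0$, the quantities $\mathbb{E}[\operatorname{ReLU}(Y)]$ and $\mathbb{E}[\operatorname{ReLU}(Y)^2]$, together with the factorization of cross terms across distinct coordinates. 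Throughout I write $g(z)\eqdef\tfrac{1}{\sqrt{2\pi}}e^{-z^2/2}$ for the standard normal density, so that $g'(z)=-z\,g(z)$.

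For the mean, I would write $Y=\mu+\sigma Z$ with $Z\sim\mathcal{N}(0,1)$, so that $\{Y>0\}=\{Z>-\mu/\sigma\}$ and $\mathbb{E}[\operatorname{ReLU}(Y)]=\mu\,\mathbb{P}(Z>-\mu/\sigma)+\sigma\int_{-\mu/\sigma}^{\infty}z\,g(z)\,dz$. The first term is $\mu\big(1-\Phi(-\mu/\sigma)\big)$, and the tail integral evaluates to $\big[-g(z)\big]_{-\mu/\sigma}^{\infty}=g(-\mu/\sigma)=g(\mu/\sigma)$ by evenness of $g$; this contributes $\sigma\,g(\mu/\sigma)=\tfrac{\sigma}{\sqrt{2\pi}}e^{-\mu^2/(2\sigma^2)}$. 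Reading this coordinatewise with $\mu=(a_t^i)_j$ and $\sigma=\sigma_t^i$ reproduces \eqref{E[Zi]} exactly.

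For the second moment I would expand $\operatorname{ReLU}(Y)^2=(\mu+\sigma Z)^2\mathbf{1}_{\{Z>-\mu/\sigma\}}$ into the three pieces $\mu^2$, $2\mu\sigma Z$, and $\sigma^2 Z^2$. The first two are handled exactly as above; the one genuinely new ingredient is $\mathbb{E}[Z^2\mathbf{1}_{\{Z>c\}}]=c\,g(c)+\big(1-\Phi(c)\big)$ with $c=-\mu/\sigma$, obtained by integration by parts (using $z\,g(z)\,dz=-dg(z)$). Collecting the three contributions, the cross term supplies $2\mu\sigma\,g(\mu/\sigma)$ while the $Z^2$ term supplies $-\mu\sigma\,g(\mu/\sigma)+\sigma^2\big(1-\Phi(-\mu/\sigma)\big)$, so the two $\mu\sigma\,g$ contributions collapse to a single copy, yielding $(\mu^2+\sigma^2)\big(1-\Phi(-\mu/\sigma)\big)+\mu\sigma\,g(\mu/\sigma)$; this is precisely the $j=k$ branch of \eqref{E[(Zij)b(Zik)]}. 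Finally, for $j\neq k$ the entries $(W_t^i)_j$ and $(W_t^i)_k$ are independent, hence so are $(Z_{t+1}^i)_j$ and $(Z_{t+1}^i)_k$, and the product expectation factors into the product of the means already computed, giving the $j\neq k$ branch.

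These are all standard Gaussian integrals, so I do not anticipate a deep obstacle; the points demanding the most care are (i) the integration-by-parts evaluation of $\mathbb{E}[Z^2\mathbf{1}_{\{Z>c\}}]$ and the subsequent bookkeeping that reduces two $\mu\sigma\,g$ terms to one, and (ii) being explicit that the expectation ranges only over the freshly resampled noise $W_t^i$, with the frozen weights $A^i,b^i$ held fixed, so that $a_t^i$ and $\sigma_t^i$ behave as constants and the events $\{Y>0\}$ translate cleanly into half-lines for $Z$. I would also note the implicit nondegeneracy assumption $\sigma_t^i>0$; the edge case $\sigma_t^i=0$ merely collapses each coordinate to $\operatorname{ReLU}$ of a constant and is consistent with the stated formulas in the limit.
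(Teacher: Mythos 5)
Your proposal is correct and takes essentially the same route as the paper: both reduce the lemma to coordinate-wise moments of a rectified Gaussian, exploiting the independence of the entries of $W_t^i$ (conditional on the frozen weights $A^i,b^i$) to get the $j\neq k$ factorization, the only difference being that you derive the one-dimensional formulas $\mathbb{E}[\operatorname{ReLU}(Y)]=\mu\big(1-\Phi(-\mu/\sigma)\big)+\sigma g(\mu/\sigma)$ and $\mathbb{E}[\operatorname{ReLU}(Y)^2]=(\mu^2+\sigma^2)\big(1-\Phi(-\mu/\sigma)\big)+\mu\sigma g(\mu/\sigma)$ by direct Gaussian integration, whereas the paper cites them from the rectified-Gaussian literature (Beauchamp, 2018). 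Your integration-by-parts bookkeeping and the handling of the conditioning are both accurate, so your write-up is simply a self-contained version of the paper's citation-based proof.
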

\begin{proof}
In the setting of Corollary \ref{cor:Random_NeuralNetwork}, for each $t\in \mathbb{N}$, every $i=1,\dots,N$, and $j=1,\dots,d_z$, we have that
$(Z_{t+1}^i)_j$ is a normal random variable with mean $
(a_t)_j$ and variance $(\sigma_t^i)_j$; where we have used the fact that the components of $W_t^i$ are independent whenever their indices differ. 
Consequentially, the computation of the mean and variance of the rectified Gaussian distribution, computed in \citep[page 90 and Appendix A]{Beauchamp_Distribution_RectifiedGaussian2018}, yields the conclusion. 
\end{proof}

The multi-dimensional version of Corollary \ref{cor:Random_NeuralNetwork} is proven here. Overall it is of a similar form with some additional technicalities.
\begin{corollary}[Random Feature Network with $d_y>1$] 
\label{cor:Random_NN_dy>1}
In the setting of Corollary \ref{cor:Random_NeuralNetwork} with $d_y>1$, we have the block matrices
\begin{align*}
& \mathbf{A}(t)
 = 
\big[ A^{(i,j)}(t) \big]_{i,j=1}^N , 
\hspace{0.5cm} 
\widehat{\mathbf{A}}(t) = \big[ \widehat{A}^{(i,j)}(t) \big]_{i, j = 1}^N , 
\notag \\  
 & \mathbf{B}(t)  = 
 \begin{bmatrix} 
  P_1(t+1) \mathbf{e}_1 a(t, 1), \dots, P_N(t+1) \mathbf{e}_N a(t, N) \\ 
 \end{bmatrix}^\top  ,  
 \notag \\ 
 & \mathbf{C}(t)  = 
 \begin{bmatrix} 
  S_1(t+1)^\top \mathbf{e}_1 a(t, 1), \dots, S_N(t+1)^\top  \mathbf{e}_N a(t, N) \\ 
 \end{bmatrix}^\top , 
  \notag \\   
& \mathbf{D}(t) 
  = 
 \begin{bmatrix} 
 \mathbf{e}_1 a(t,1), \dots  , \mathbf{e}_N a(t, N) 
 \end{bmatrix} , 
 \quad 
  \mathbf{D}_i(t) = [ D_i^{(j,k)}(t) ]_{j,k = 1}^N , \quad i = 1, \dots, N . \notag 
\end{align*}
For $t=1,\dots,N$ and $t\in \mathbb{N}_+$, each $a(t,i)$ is defined as follows
\begin{align} 
& a^i_t\eqdef A^{i} [x_t,\dots, x_t ] + b^{i} = \big[ (a^i_t)_1^\top , \dots, (a^i_t)_{d_y}^\top \big]^\top \in \mathbb{R}^{d_y \times d_z} , \notag \\ 
& a(t,i)
\eqdef 
   \big[ a(t, i)_1^\top , \dots, a(t, i)_{d_y}^\top \big]^\top
   \in \mathbb{R}^{d_y \times d_z} 
   , \notag \\ 
& a(t, i)_k = (a^i_t)_k 
        \odot
        \left(
            \bar{1}_{d_z} 
            - 
            \Phi\bullet \left(
                \frac{- (a^i_t)_k}{(\sigma_t^i)_k}%\cdot 
            \right)
        \right)
    \notag \\     
    & \hspace{4cm}+
            \frac{(\sigma_t^i)_k}{\sqrt{2\pi}}
        \,
        \exp\bullet\left(
            \frac{-( (a^i_t)_k \odot (a^i_t)_k )}{2\, ( (\sigma_t^i)_k )^2 }
            % \,
        \right)  , \quad 
        k = 1, \dots, d_y , 
        \label{a(t,i)k}
\end{align} 
and $\odot$ and $\Phi$ are defined in Lemma \ref{lem:Mean_Cov_ResModel}. 
The sub-matrices of $\mathbf{A}(t)$, $\widehat{\mathbf{A}}(t)$, 
and $\{ \mathbf{D}_i(t) \}_{i=1}^N$,   
are given by 
\begin{align} 
 & A^{(i,j)}(t) =
\begin{cases} 
\sum_{k, l = 1 }^{d_y} 
( \mathbf{e}_i^\top P_i(t+1) \mathbf{e}_i )_{kl}
\mathbb{E} \big[  (Z^i_t)_k^\top  (Z^i_t)_l \big] , \quad & \mbox{if} \,\, i = j , \\
  a(t, i)^\top \mathbf{e}_i^\top  P_i(t+1) \mathbf{e}_j a(t, j) , \quad & \mbox{if} \,\, i \neq j  , 
 \end{cases} 
 \notag \\ 
 & \widehat{A}^{(i,j)}(t) = 
 \begin{cases} 
 \sum_{k, l = 1 }^{d_y} w^i_t w^i_t \mathbb{E} \big[  (Z^i_t)_k^\top 
   (Z^i_t)_l \big]    , \quad & \mbox{if} \,\, 
  i = j , \\ 
 a(t, i)^\top  w^i_t w^j_t a(t, j) , \quad & \mbox{if} \,\, i \neq j   , 
\end{cases} 
\notag \\ 
& D_i^{(r,s)}(t) = 
\begin{cases}
 \sum_{k,l = 1}^{d_y} ( \mathbf{e}_r^\top P_i(t+1) \mathbf{e}_r )_{kl} 
 \mathbb{E} \big[ (Z^r_t)_k^\top 
  (Z^r_t)_l \big]  , \quad & \mbox{if} \,\, r=s , \\ 
a(t,r)^\top \mathbf{e}_r^\top P_i(t+1) \mathbf{e}_s a(t, s) , \quad & \mbox{if} 
\,\, r \neq s .  
\notag 
\end{cases}
\end{align} 
The matrices 
$\mathbb{E} \big[ (Z^i_t)_k^\top (Z^i_t)_l \big]
= \big( \mathbb{E} \big[ (Z^i_t)_k^\top (Z^i_t)_l \big]_{pq} \big)_{p,q=1}^{d_z}$ for  
$k$, $l=1, \dots, d_y$ are given as follows: 
if $k$ and $l$ are distinct then
\begin{align} 
  \mathbb{E} \big[  (Z^i_t)_k^\top  (Z^i_t)_l \big]_{pq}   
 =  
 a(t, i)_{kp}  
  a(t, i)_{lq} ;   
\end{align} 
otherwise $\mathbb{E} \big[ (Z^i_t)_k^\top (Z^i_t)_k \big]_{pq} 
 = \mathbb{E} \big[ (Z^i_t)_{kp} (Z^i_t)_{kq} \big]$ and 
\begin{align} 
 \mathbb{E} \big[ (Z^i_t)_{kp} (Z^i_t)_{kq} \big] 
= 
\begin{cases}
            \big[ ( (a^i_t)_k \odot (a^i_t)_k )_p + (\sigma^i_t)_k^2 \big]  
            \cdot
            \Big(
                1 
                - 
                \Phi(
                    \frac{-(a^i_t)_{kp}}{ (\sigma_t^i)_k }
                )
            \Big) 
             \\
            \hspace{2cm}
            +
            (a^i_t)_{kp}   \,
                \frac{ (\sigma_t^i)_k }{\sqrt{2\pi}}
            \,
            \exp\Big(
                \frac{ - ( (a^i_t)_k \odot (a^i_t)_k )_p}{2\, (\sigma_t^i)_k^2 }
            \Big) , 
            \quad 
        & \mbox{ if } p = q , 
\\
    \mathbb{E}[(Z_{t+1}^i)_{kp}]
    \mathbb{E}[(Z_{t+1}^i)_{kq}] 
     = a(t, i)_{kp} a(t,i)_{kq} ,  
    \quad 
     & \mbox{ if }   p \neq q . 
    \end{cases}   
 \label{E[(Zi)kp(Zi)kq]} 
\end{align} 
\end{corollary}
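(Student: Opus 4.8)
The plan is to follow the same route as the proof of Corollary~\ref{cor:Random_NeuralNetwork}, reducing every matrix in Table~\ref{tab:matricesEqnPiSi} to the two elementary expectations $\mathbb{E}[Z_t^i]$ and $\mathbb{E}[(Z_t^i)_k^\top (Z_t^i)_l]$, while carefully tracking the additional $d_y\times d_y$ block structure introduced by the Kronecker factors $\mathbf{e}_i = e_i\otimes I_{d_y}$. The only genuinely new ingredient relative to the scalar-output case is that each $Z_t^i$ is now a $d_y\times d_z$ matrix rather than a row vector, so the quadratic and bilinear forms appearing in $\mathbf{A}(t)$, $\widehat{\mathbf{A}}(t)$, and $\mathbf{D}_i(t)$ must be expanded over its $d_y$ rows $(Z_t^i)_1,\dots,(Z_t^i)_{d_y}$, with the row indices supplied by the entries of $\mathbf{e}_i^\top P_i(t+1)\mathbf{e}_j$ and the column indices by $Z_t^i$ itself.

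First I would record the independence structure. Since the noise matrices $\{W_t^i\}_{i,t}$ are mutually independent and independent of $(A^i,b^i)$, the encoders $\{Z_t^i\}_i$ are mutually independent; moreover, within a fixed $Z_t^i$ each row $k$ is driven by the scaled Gaussian block $(\sigma_t^i)_k (W_t^i)_k$ with iid entries, so distinct rows are independent and, within a row, distinct columns are independent. For the off-diagonal blocks ($i\neq j$) of $\mathbf{A}(t)$, $\widehat{\mathbf{A}}(t)$, and $\mathbf{D}_i(t)$, this independence lets me pull the expectation through the deterministic matrix $\mathbf{e}_i^\top P_i(t+1)\mathbf{e}_j$ (resp.\ through $w^i_t w^j_t$) and factor it as $\mathbb{E}[Z_t^i]^\top(\cdot)\,\mathbb{E}[Z_t^j] = a(t,i)^\top(\cdot)\,a(t,j)$, where $a(t,i)=\mathbb{E}[Z_t^i]$ is the row-wise rectified-Gaussian mean defined in~\eqref{a(t,i)k}. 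The expressions for $\mathbf{B}(t)$, $\mathbf{C}(t)$, and $\mathbf{D}(t)$ follow immediately from the single identity $a(t,i)=\mathbb{E}[Z_t^i]$. For the diagonal blocks ($i=j$) the expectation does not factor, so instead I would expand the quadratic form as $\sum_{k,l=1}^{d_y}\big(\mathbf{e}_i^\top P_i(t+1)\mathbf{e}_i\big)_{kl}\,\mathbb{E}\big[(Z_t^i)_k^\top (Z_t^i)_l\big]$, reducing the computation to the row-pair second-moment matrices.

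It then remains to evaluate $\mathbb{E}[(Z_t^i)_k^\top (Z_t^i)_l]$ entrywise, which splits into three cases. When $k\neq l$, row independence gives $\mathbb{E}[(Z_t^i)_{kp}(Z_t^i)_{lq}] = a(t,i)_{kp}\,a(t,i)_{lq}$; when $k=l$ but $p\neq q$, column independence likewise yields $a(t,i)_{kp}\,a(t,i)_{kq}$; and when $k=l$, $p=q$ the term is the second moment of a single rectified Gaussian, which I would read off from the rectified-Gaussian moment formulas already used in Lemma~\ref{lem:Mean_Cov_ResModel}, i.e.\ those of \citep{Beauchamp_Distribution_RectifiedGaussian2018}, applied to the scalar $(Z_t^i)_{kp}=\operatorname{ReLU}\!\big((a_t^i)_{kp}+(\sigma_t^i)_k (W_t^i)_{kp}\big)$. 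Assembling these three cases produces~\eqref{E[(Zi)kp(Zi)kq]}, and substituting back into the block expansions of the previous paragraph yields the stated closed forms for all six matrices. I expect the main obstacle to be essentially bookkeeping rather than analysis: correctly propagating the $e_i\otimes I_{d_y}$ structure so that the row indices $k,l$ (coming from $P_i(t+1)$) and the column indices $p,q$ (coming from $Z_t^i$) are paired consistently, and verifying that the only non-factorizable contributions are precisely the $k=l$, $p=q$ diagonal second moments.
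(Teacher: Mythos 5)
Your proposal follows the same route as the paper's proof: off-diagonal ($i\neq j$) blocks of $\mathbf{A}(t)$, $\widehat{\mathbf{A}}(t)$, and $\mathbf{D}_i(t)$ are factored through the independence of $Z^i_t$ and $Z^j_t$ across agents; the diagonal blocks are expanded as $\sum_{k,l=1}^{d_y}(\mathbf{e}_i^\top P_i(t+1)\mathbf{e}_i)_{kl}\,\mathbb{E}\big[(Z^i_t)_k^\top(Z^i_t)_l\big]$; $\mathbf{B}(t)$, $\mathbf{C}(t)$, and $\mathbf{D}(t)$ are read off from $a(t,i)=\mathbb{E}[Z^i_t]$; and the residual second moments are evaluated with the rectified-Gaussian formulas behind Lemma \ref{lem:Mean_Cov_ResModel}.

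There is, however, one concrete error in your independence bookkeeping, and it sits at the only genuinely delicate point of the $d_y>1$ case. In Corollary \ref{cor:Random_NeuralNetwork}, $W^i_t\sim\mathcal{N}_{1\times d_z}(0,1)$ is a single row vector and $\sigma^i_t\in\mathbb{R}^{d_y\times 1}$, so $\sigma^i_t W^i_t$ is an outer product: row $k$ of $Z^i_t$ is $\operatorname{ReLU}\bullet\big((a^i_t)_k+(\sigma^i_t)_k W^i_t\big)$, with the \emph{same} Gaussian vector $W^i_t$ appearing in every row. There is no row-indexed noise block ``$(W^i_t)_k$'' as your proof asserts, and distinct rows of $Z^i_t$ are not independent: for $k\neq l$ and a common column $p$, the entries $(Z^i_t)_{kp}$ and $(Z^i_t)_{lp}$ are nondecreasing functions of the single scalar Gaussian $(W^i_t)_p$, hence positively correlated, so $\mathbb{E}\big[(Z^i_t)_{kp}(Z^i_t)_{lp}\big]$ does not factor as $a(t,i)_{kp}\,a(t,i)_{lp}$ in general. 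Your argument implicitly replaces $W^i_t$ by a $d_y\times d_z$ matrix of iid Gaussians, which is a different model. In fairness, the paper's own proof makes the same leap -- it justifies the $k\neq l$ factorization only by invoking the independence of the entries of $W^i_t$, which covers the $p\neq q$ entries but not $p=q$ -- so your formulas agree with the printed corollary; but as a derivation under the stated noise model, the step you label ``row independence'' is exactly where the argument fails, and it could only be repaired by either adopting the iid-matrix noise model explicitly or by computing the same-column cross moment $\mathbb{E}\big[\operatorname{ReLU}\big((a^i_t)_{kp}+(\sigma^i_t)_k X\big)\operatorname{ReLU}\big((a^i_t)_{lp}+(\sigma^i_t)_l X\big)\big]$ for a single $X\sim\mathcal{N}(0,1)$, which does not reduce to a product of rectified-Gaussian means.
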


\begin{proof}
If $d_y>1$, we have for each $i=1, \dots, N$ that   
\begin{align} 
 Z^i_t = \operatorname{ReLU}  \bullet \big( A^i [x_t, \dots, x_t] + b^i + \sigma^i_t W^i_t \big)  
 = \big[ (Z^{i}_t)_1^\top , \dots , (Z^{i}_t)_{d_y}^\top \big]^\top 
 \in \mathbb{R}^{d_y \times d_z} 
 \label{Zi=[Zi1T,...,ZidyT]T}
\end{align} 
with $(Z^{i}_t)_k = (a^i_t)_k + (\sigma^i_t)_k W^i_t$, $k=1,\dots, d_y$. 

Denote
\begin{align}  
& \mathbb{E} \big[ Z^{i}_{t} \big] =  a(t, i)  
  =  \big[ a(t, i)_1^\top , \cdots, a(t, i)_{d_y}^\top \big]^\top \in \mathbb{R}^{d_y \times d_z} . 
  \notag 
\end{align} 
By \eqref{E[Zi]}, we obtain \eqref{a(t,i)k} for each $\mathbb{E} \big[ (Z^i_{t})_k \big] = a(t, i)_k$.

For $i\neq j$, $Z^i_t$ and $Z^j_t$ are independent since the random noises $W^i_t$ and $W^j_t$ are independent, 
then $A^{(i,j)}(t)$ can be represented as 
\begin{align} 
  A^{(i,j)}(t) = &
 \mathbb{E} [ Z^{i\top}_t \mathbf{e}_i^\top P_i(t+1) \mathbf{e}_j Z^j_t ] \notag \\ 
 =& \mathbb{E} [ Z^{i\top}_t ] \mathbf{e}_i^\top P_i(t+1) \mathbf{e}_j \mathbb{E} [ Z^j_t ] \notag \\ 
 =& a(t, i)^\top \mathbf{e}_i^\top P_i(t+1) \mathbf{e}_j a(t,j) . 
 \notag 
\end{align} 

For $i=j$, by \eqref{Zi=[Zi1T,...,ZidyT]T} the diagonal sub-matrices $A^{(i,i)}(t)$ can be writen as  
\begin{align} 
 A^{(i,i)}(t) = &
 \mathbb{E} \big[ Z^{i\top}_t \mathbf{e}_i^\top P_i(t+1) \mathbf{e}_i Z^i_t \big]    \notag  \\ 
  = & \sum_{k, l = 1 }^{d_y} 
  ( \mathbf{e}_i^\top P_i(t+1) \mathbf{e}_i )_{kl} 
  \mathbb{E} \big[  (Z^i_t)_k^\top  (Z^i_t)_l \big] . 
 \notag 
\end{align}

For $k=l$, since $(Z^{i}_t)_k = (a^i_t)_k + (\sigma^i_t)_k W^i_t$ is a $d_z$-dimensional row vector, 
we can use \eqref{E[(Zij)b(Zik)]} to determine \eqref{E[(Zi)kp(Zi)kq]} for the matrix 
$\mathbb{E} \big[ (Z^i_t)_k^\top (Z^i_t)_l \big]
= \big( \mathbb{E} \big[ (Z^i_t)_k^\top (Z^i_t)_l \big]_{pq} \big)_{p,q=1}^{d_z}$.  

For $k\neq l$, we use the fact that $W^i_t$ is a vector of independent random variables to determine the matrix 
$\mathbb{E} \big[  (Z^i_t)_k^\top ( \mathbf{e}_i^\top P_i(t+1) \mathbf{e}_i )_{kl} (Z^i_t)_l \big]$ such that
\begin{align} 
  \mathbb{E} \big[  (Z^i_t)_k^\top ( \mathbf{e}_i^\top P_i(t+1) \mathbf{e}_i )_{kl} (Z^i_t)_l \big]_{pq}   
 = & 
 \mathbb{E} \big[  (Z^i_t)_{kp} ( \mathbf{e}_i^\top P_i(t+1) \mathbf{e}_i )_{kl} (Z^i_t)_{lq} \big] 
 \notag \\ 
 = & 
 \mathbb{E} \big[  (Z^i_t)_{kp} \big] ( \mathbf{e}_i^\top P_i(t+1) \mathbf{e}_i )_{kl} 
  \mathbb{E} \big[ (Z^i_t)_{lq} \big]  
 \notag \\ 
 = & 
 a(t, i)_{kp} ( \mathbf{e}_i^\top P_i(t+1) \mathbf{e}_i )_{kl} 
  a(t, i)_{lq}  . 
  \notag 
\end{align} 
The explicit forms of the matrices $\widehat{\mathbf{A}}(t)=\big[ \widehat{A}^{(i,j)}(t) \big]_{i,j =1}^N$ 
and $\mathbf{D}_i(t) = \big[ D_i^{(j,k)}(t) \big]_{j,k = 1}^N$,  $i = 1, \dots, N$ are determined in a similar manner. 
The explicit forms of $\mathbf{B}(t)$, $\mathbf{C}(t)$ and $\mathbf{D}(t)$ are determined by~\eqref{E[Zi]} and 
$a(t, i) = \mathbb{E}[Z^i_{t+1}]$. 
\end{proof}

\section{The Algorithm}
\label{s:AlgoMain}

In this section, we summarize the FL algorithm.

\begin{algorithm}[H]%[hpt!]%
\SetKwInOut{Require}{Require}
\caption{Proposed Federated Learning Algorithm}
\label{alg:FedMain}
\SetAlgoLined
\LinesNumberedHidden
\Require{Expert models $\{\varphi^{i}\}_{i=1}^N$, Synchronization Frequency $\tau \in \mathbb{N}_+$}
\DontPrintSemicolon
\vspace{0.2cm}
\For(\tcp*[f]{Generate prediction for time $t$}){$t:1, 2, \dots$} 
 {
\For(\tcp*[f]{Update experts in parallel}){$i:1,\dots,N$ in parallel}{
$\beta_{t-1}^i \leftarrow \big( X^{i \top}_{t-1} X^i_{t-1} + \gamma_i I_{d_z} \big)^{-1}\, X_{t-1}^{i \top} \bar{y}^i_{t} $
\tcp*{Local weighted regression}
Generate $\varepsilon_{t-1}^i$
\tcp*{Generate random state}
$Z^i_{t-1}\leftarrow \varphi^{i}(x_{[0:t-1]}, Z_{t-2}^i, \varepsilon_{t-1}^i)$
\tcp*{Update hidden state}
$\hat{Y}^i_t \leftarrow \hat{Y}_{t-1}^i + Z^i_{t-1} \, \beta_{t-1}^i $
\tcp*{Update residual prediction}
} 
%%% Get Mixture Weights
\tcp{Compute mixture weights} 
% \For(\tcp*[f]{Update Experts in Parallel}){$t:1,\dots,T$ in parallel}{
$\hat{\mathbf{Y}}_{t-1} \leftarrow  (\hat{Y}^{1\top}_{t-1}, ..., \hat{Y}^{N\top}_{t-1})^\top$ \\
$A \leftarrow 2\,N \hat{\mathbf{Y}}^\top_{t-1} \hat{\mathbf{Y}}_{t-1} + \kappa I_N$ \\
$b\leftarrow 2 (y^\top_{t-1}  \hat{\mathbf{Y}}_{t-1})^{\top}$ \\
$ w_{t-1} \leftarrow  A^{-1}\, \left( b - \frac{\mathbf{1}_N^{\top}A^{-1} b \, - \, \eta}{\strut \mathbf{1}_N^{\top}A^{-1}\mathbf{1}_N} \cdot\,\mathbf{1}_N
\right)$ \\ 
%%%% Synchronization Step
\If{$t \bmod \tau =  0$ and $t>0$}{
\tcp{Synchronize local agents via Nash game}
$\boldsymbol{\beta}^{\text{sync}}_{t-2}, \hat{\mathbf{Y}}^{\text{sync}}_{t-1} \leftarrow $ Run Synchronize Subroutine: Algorithm \ref{alg:sync_subroutine}
\tcp*{Synchronized Weights}
\For(\tcp*[f]{Compute new predictions}){$i:1,\dots,N$ in parallel}{
$\beta_{t-1}^i \leftarrow (\boldsymbol{\beta}^{\text{sync}}_{t-2})^i$ \;
Generate $\varepsilon_{t-1}^i$ 
\tcp*{Generate random state} 
$\hat{Y}^i_{t} \leftarrow (\hat{\mathbf{Y}}^{\text{sync}}_{t-1})^i +  \varphi^{i}(x_{[0:t-1]}, Z_{t-2}^i, \varepsilon_{t-1}^i) \beta^i_{t-1}$ \tcp*{Correct local predictions} 
}
}
$\hat{Y}_t = w_{t-1}^\top \hat{\mathbf{Y}}_{t}$
\tcp*{Ensemble final prediction for time $t$}
}
\end{algorithm}

Note that in Algorithm \ref{alg:sync_subroutine}, described below, for simplicity, the indexing is presented from $0$ to $T$, representing the historical length of time over which the Nash synchronization takes place, i.e. the ''look-back'' length. In practice, the input values should correspond to those of the FL algorithm starting at $T = t-1$, proceeding backwards from there.

\begin{algorithm}[H]%[hbp!]%
\SetKwInOut{Require}{Require}
\caption{Synchronization Subroutine}
\label{alg:sync_subroutine}
\SetAlgoLined
\LinesNumberedHidden
\Require{Observed target values $\{ ( y^1_t, \ldots, y^N_t)\}_{t=0}^T$, Hidden States $\{Z_t^1, \ldots, Z_t^N\}_{t=0}^T$ \\
Mixture weights $w_\cdot = \{ (w^1_t, \ldots, w^N_t) \}_{t=0}^T$ $\leftarrow$ From Algorithm \ref{alg:FedMain}}
\DontPrintSemicolon
\vspace{0.2cm}
\DontPrintSemicolon
\tcp{Initialize updates} 
$\mathbf{w}_T=(w^1_T I_{d_{y}}, \dots, w^N_T I_{d_{y}})^\top$\\
\For{$i:1,\dots,N$ in parallel}{
$P_i(T)  = 0$ \\
$S_i(T) = 0$ 
}
\tcp{Generate iterates}
\For(){$t=T-1,\dots,0$}{
\tcp{Update driving parameters} 
$\mathbf{w}_t=(w^1_t I_{d_{y}}, \dots, w^N_t I_{d_{y}})^\top$ \\
%$\mathbf{A}(t)=  \big[ \mathbb{E} \big( Z^{i\top}_t \mathbf{e}_i^\top P_i(t+1) \mathbf{e}_j Z^j_t \big) \big]_{i,j=1}^N$ \\
$\mathbf{A}(t) = \mathbb{E} \big\{ [ P_1(t+1) \mathbf{e}_1 Z^1_t, \, \dots, \, P_N(t+1) \mathbf{e}_N Z^N_t ]^\top 
 [ \mathbf{e}_1 Z^1_t, \dots, \mathbf{e}_N Z^N_t ] \big\}$ \\  
 $\widehat{\mathbf{A}}(t) = \mathbb{E} \big\{ [ \mathbf{e}_1 Z^1_t, \, \dots, \, \mathbf{e}_N Z^N_t ]^\top \mathbf{w}_t \mathbf{w}_t^\top  
 [ \mathbf{e}_1 Z^1_t, \dots, \mathbf{e}_N Z^N_t ] \big\}$ \\  
$\mathbf{B}(t)= \mathbb{E} [ P_1(t+1) \mathbf{e}_1 Z^1_t, \, \dots, \, P_N(t+1) \mathbf{e}_N Z^N_t ]^\top $ \\
$\mathbf{C}(t)= \mathbb{E} [ S_1^\top(t+1) \mathbf{e}_1 Z^1_t, \, \dots, \, S_N^\top(t+1) \mathbf{e}_N Z^N_t ]^\top$ \\
$\mathbf{D}(t)  
 = \mathbb{E}  
 \begin{bmatrix} 
    \mathbf{e}_1 Z^1_t    , \, 
    \mathbf{e}_2 Z^2_t   ,  \,
 \cdots , \, 
  \mathbf{e}_N Z^N_t    
 \end{bmatrix}$ \\ 
   $\mathbf{G}(t) = - \big[ e^{-\boldsymbol{\alpha}(T-1-t)} ( \Gamma + \widehat{\mathbf{A}}(t) ) 
  + \mathbf{A}(t)    \big]^{-1} 
 \big[ e^{-\boldsymbol{\alpha}(T-1-t)} \mathbf{D}^\top(t) \mathbf{w}_t \mathbf{w}_t^\top + \mathbf{B}(t) \big] $ 
 \notag \\ 
  $\mathbf{H}(t) = \big[ e^{-\boldsymbol{\alpha}(T-1-t)} (\Gamma + \widehat{\mathbf{A}}(t) ) + \mathbf{A}(t)   \big]^{-1} 
  \big[ e^{-\boldsymbol{\alpha}(T-1-t)} \mathbf{D}(t)^\top \mathbf{w}_t y_{t+1} - \mathbf{C}(t) \big] $
 \notag \\
\For{$i:1,\dots,N$ in parallel}{
\vspace{-0.4cm}
\begin{flalign*} 
\mathbf{D}_i(t) & =  \mathbb{E} \big\{   
 \begin{bmatrix} 
    \mathbf{e}_1 Z^1_t    , \, 
    \mathbf{e}_2 Z^2_t   ,  \,
 \dots , \, 
  \mathbf{e}_N Z^N_t    
 \end{bmatrix}^\top 
  P_i(t+1) 
  \begin{bmatrix} 
    \mathbf{e}_1 Z^1_t    , \, 
    \mathbf{e}_2 Z^2_t   ,  \,
 \dots , \, 
  \mathbf{e}_N Z^N_t    
 \end{bmatrix} 
 \big\} & \\ 
  P_i(t) &=  
  \mathbf{G}(t)^\top 
  \big[ \widehat{\mathbf{A}}(t) + \mathbf{D}_i(t) + e^{-\alpha_i(T-1-t)} 
  \gamma_i \hat{\mathbf{e}}_i \hat{\mathbf{e}}_i^\top  \big] 
   \mathbf{G}(t) &\\   
  & \qquad + \big[ e^{-\alpha_i(T-1-t)}\mathbf{w}_t \mathbf{w}_t^\top + P_i(t+1) \big] 
   \mathbf{D}(t) \mathbf{G}(t)  &\\ 
  & \qquad + \mathbf{G}(t)^\top \mathbf{D}(t)^\top \big[ e^{-\alpha_i(T-1-t)} \mathbf{w}_t \mathbf{w}_t^\top + P_i(t+1) \big]^\top
  &\\ 
& \qquad + e^{-\alpha_i(T-1-t)} \mathbf{w}_t \mathbf{w}_t^\top + P_i(t+1)  &\\
%%%%%
 S_i(t)  &= \mathbf{G}^\top(t) \big[ \widehat{\mathbf{A}}(t) + \qquad\mathbf{D}_i(t) 
   + e^{-\alpha_i(T-1-t)} \gamma_i \hat{\mathbf{e}}_i \hat{\mathbf{e}}_i^\top  \big] \mathbf{H}(t) &\\ 
& \qquad + \big[ e^{-\alpha_i(T-1-t)} \mathbf{w}_t \mathbf{w}_t^\top + P_i(t+1)  \big] \mathbf{D}(t) \mathbf{H}(t) &\\  
& \qquad + \mathbf{G}^\top(t) \mathbf{D}^\top(t) \big[ - \mathbf{w}_t y_{t+1} e^{-\alpha_i(T-1-t)} + S_i(t+1) \big]  &\\
& \qquad - \mathbf{w}_t y_{t+1} e^{-\alpha_i(T-1-t)} + S_i(t+1)
\end{flalign*}
\vspace{-0.6cm}
}
}
\tcp{Update game strategy} 
$\hat{Y}_0 = (y^{1\top}_0, \dots, y^{N\top}_0)^\top$ \\
\For{$t=0, \dots, T-1$}{
$\boldsymbol{\beta}_t 
%= [\beta^{1\top}_t, \dots, \beta^{N \top}_t ]^\top 
=  \mathbf{G}(t) \hat{Y}_t + \mathbf{H}(t)$ \\
$\hat{Y}_{t+1} = \hat{Y}_t + \diag[Z^1_t, \dots, Z^N_t ] \boldsymbol{\beta}_t$
}

\Return $\boldsymbol{\beta}_{T-1}$, $\hat{\mathbf{Y}}_{T}$
\end{algorithm} 

\section{Additional Experiment Details}
\label{s:ExperimentDetials}

This appendix provides additional technical details for the experiments, including experimental design, dataset descriptions, and details in Appendix \ref{s:ExperimentDetials__ss::datasets}. Appendix \ref{s:OptimalParamterChoices} outlines the hyperparameter search space and the optimal hyperparameters used in the experiments. Further ablation studies in Appendix \ref{a:Ablations}, covering validation results for the BoC Exchange Rate and ETT datasets in which the performance on validation segments are reported. This is followed by a study informing the choice of activation function for ESNs and a discussion of the importance of data normalization in both ESNs and RFNs.

In this work, time-series performance is measured with mean-squared error (MSE) relative to the reference sequence. For a sequence of length $T$, predictions $\hat{y}_i \in \mathbb{R}^n$, and targets $y_i \in \mathbb{R}^n$, MSE is computed as 
\begin{align*}
    \text{MSE} = \frac{1}{T} \sum_{i=1}^T \Vert \hat{y}_i - y_i \Vert^2.
\end{align*}

\subsection{Datasets} 
\label{s:ExperimentDetials__ss::datasets}

In this appendix, the details for each of the time-series datasets considered in the numerical experiments are laid out. In all, five unique time series are considered. Each incorporates different complexities for time-series prediction, as well as varying length scales and dimensionalities. Throughout this section, time-series inputs are denoted as $\mathbf{x}$, targets as $\mathbf{y}$, and predictions as $\hat{\mathbf{y}}$

\subsubsection{Periodic Signal}

The Periodic Signal series is the simplest in terms of construction, but can be challenging for time-series prediction due to its highly oscillatory nature. The series has $200$ data points taking on a minimum value of -0.9 and a maximum value of 0.9. The target regularly oscillates between these two values throughout the series. At each time step, $t$, a single target lag is used to make model predictions such that $\mathbf{x}_{t-1} = \mathbf{y}_{t-1}$ in predicting $\mathbf{y}_t$. Figure \ref{fig:periodic_series} displays the target values of the series.

\begin{figure}[ht!]
    \centering
    \includegraphics[width=0.8\linewidth]{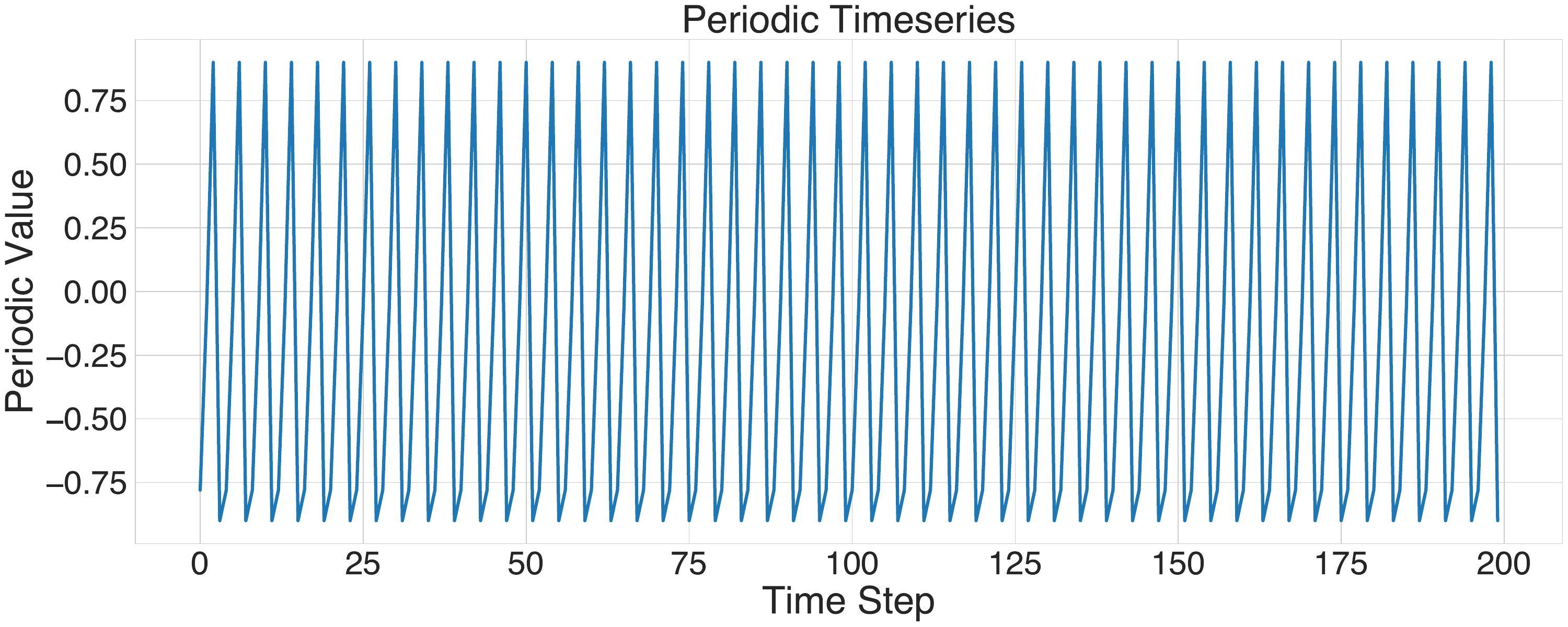}
    \caption{Periodic time series with 200 target values.}
    \label{fig:periodic_series}
\end{figure}

\begin{figure}[ht!]
    \centering
    \includegraphics[width=0.8\linewidth]{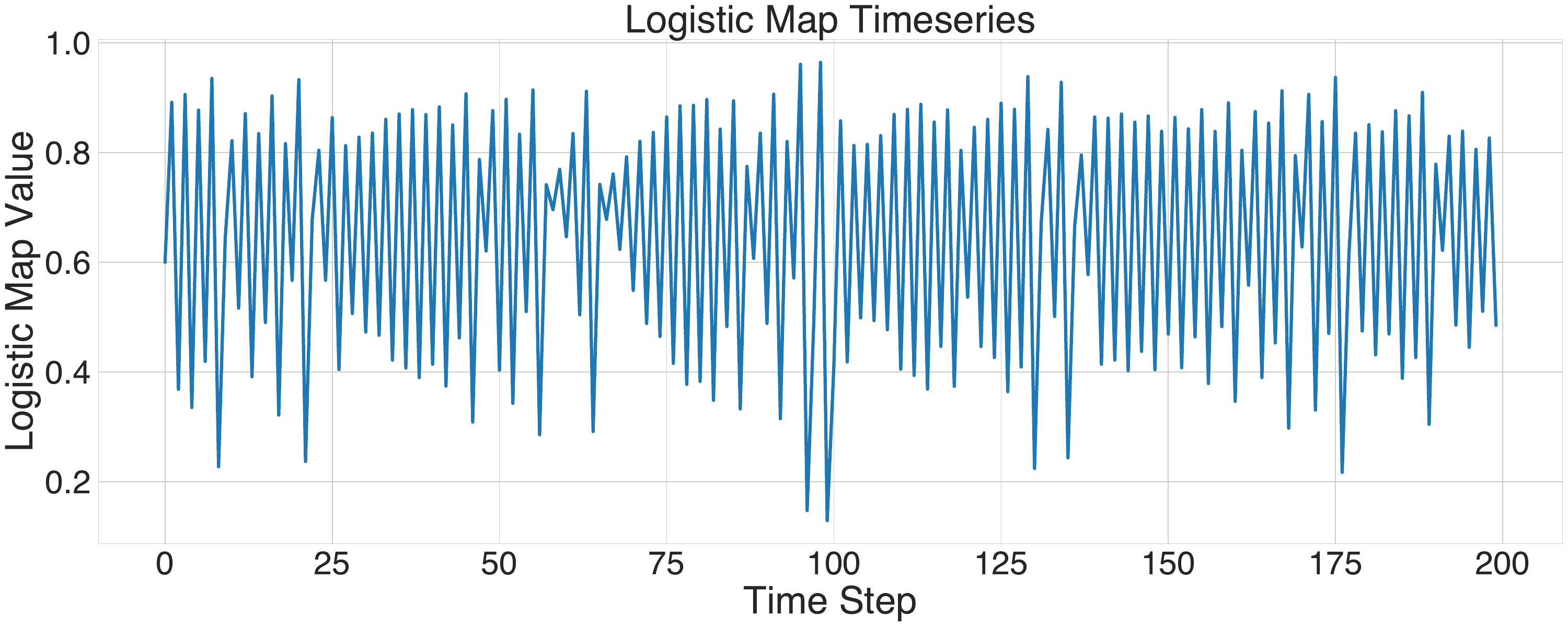}
    \caption{Logistic map series progressing over $200$ time steps.}
    \label{fig:logistic_map_dataset}
\end{figure}

\subsubsection{Logistic Map}

The Logistic Map dataset is a synthetic time series based on the dynamics of the logistic map. The generating function from the EchoTorch library \citep{echotorch} is used to produce a time series with $200$ total data points. The default parameters for the EchoTorch dataset are applied, namely $\alpha=5$, $\beta=11$, $\gamma=13$, $c=3.6$, and $b=0.13$. Figure \ref{fig:logistic_map_dataset} shows the resulting series. The target variable, $\mathbf{y}$, has a single dimension. As with the Periodic dataset, a single step lag is provided as input for model predictions. That is, for time step $t$ and target $\mathbf{y}_t$, the input is $\mathbf{x}_{t-1} = \mathbf{y}_{t-1}$.

\subsubsection{Concept Shift} \label{concept_drift_description}

The Concept Drift dataset considers a rapid, but smooth, change in the relationship of $\mathbf{x}$ and $\mathbf{y}$ halfway through the trajectory. The series consists of $200$ points drawn from the following analytical relationship. Let $p$ be evenly spaced in the interval $[0, 2\pi]$. Then
\begin{align*}
    x_1 = p, && x_2 = p, && x_3 = \sqrt{p}.
\end{align*}
For $p \in \left[0, \frac{7}{8}\pi \right]$, $\mathbf{y} \in \mathbb{R}^2$ is
\begin{align*}
    y_{1, 1}(x_1, x_2, x_3) &= x_1^2 + \sin x_2 + x_1  x_3 + \frac{1}{2} \cos(10 x_1), \\
    y_{2, 1}(x_1, x_2, x_3) &= x_1  \cos x_2 + x_3 - e^{-x_2}.
\end{align*}
For $p \in \left[\frac{9}{8} \pi, 2\pi\right]$,
\begin{align*}
    y_{1, 2}(x_1, x_2, x_3) &= x_1 + x_2 - \sin x_3, \\
    y_{2, 2}(x_1, x_2, x_3) &= \cos x_1  \sin x_2 + x_3^2 + \frac{1}{4} \cos(10 x_1).
\end{align*}
The phase transition, which occurs when $p \in \left[\frac{7}{8} \pi, \frac{9}{8} \pi \right]$ is facilitated by the piecewise function
\begin{align*}
    \alpha(x) = \begin{cases}
      1.0 & x < \frac{7}{8} \pi, \\
      \cos(2(x_1 - \frac{7}{8}\pi) & \frac{7}{8} \pi \geq x \leq \frac{9}{8} \pi, \\
      0.0 & x > \frac{9}{8} \pi. \\
    \end{cases}
\end{align*}
The final series is then written
\begin{align*}
    y_1(x_1, x_2, x_3) &= \alpha(x_1) \cdot y_{1, 1}(x_1, x_2, x_3) + (1-\alpha(x_1)) \cdot y_{1, 2}(x_1, x_2, x_3), \\
    y_2(x_1, x_2, x_3) &= \alpha(x_1) \cdot y_{2, 1}(x_1, x_2, x_3) + (1-\alpha(x_1)) \cdot y_{2, 2}(x_1, x_2, x_3).
\end{align*}
Figure \ref{fig:concept_drift} provides a visualization of the series as a function of $x_1$.
\begin{figure}[ht!]
    \centering
    \includegraphics[width=0.8\linewidth]{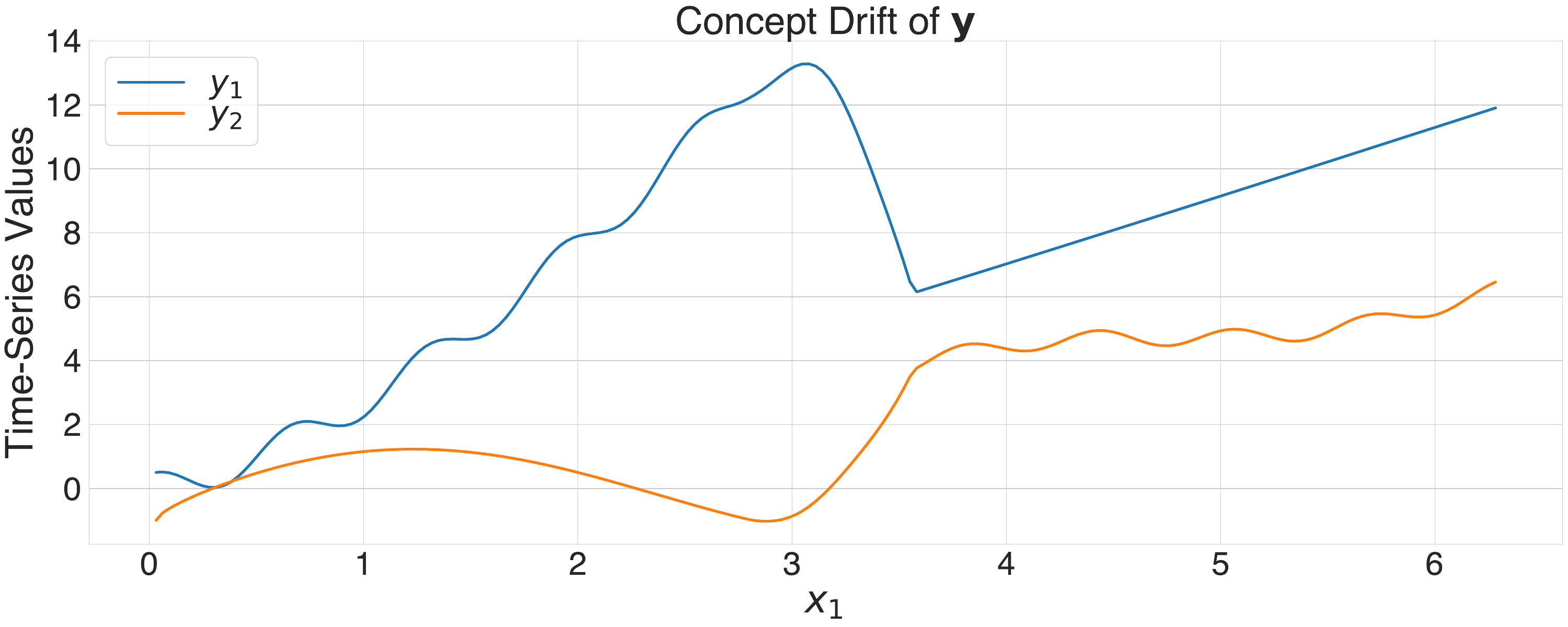}
    \caption{Visualization of the Concept Drift time series as a function of $x_1$.}
    \label{fig:concept_drift}
\end{figure}

\subsubsection{Bank of Canada Exchange Rate}
\label{s:ExperimentDetials__ss::datasets__sss:::BoC}
The Bank of Canada (BoC) Exchange Rate dataset \citep{boc_exchange_rates} records the daily exchange rates for $12$ different currencies, relative to the Canadian dollar (CAD), from May 1, 2007 to April 28, 2017. This constitutes $3,651$ observations per currency. Any one of the exchange rates may be used as a target with lagged currency values used as inputs. In the numerical experiments, the target exchange rate is USD and inputs for model predictions at a given time step $t$ are $\{\text{USD}_{t-1}$, $\text{USD}_{t-2}$, $\text{AUD}_{t-1}$, $\text{AUD}_{t-2}$, $\text{EUR}_{t-1}$, $\text{EUR}_{t-2}$, $\text{GBP}_{t-1}$, $\text{GBP}_{t-2}$, $\text{JPY}_{t-1}$, $\text{JPY}_{t-2}\}$. Here, for example, $\text{JPY}_{t-1}$ is the exchange rate for the Japanese Yen from the previous time step. The experiments in the body of the paper consider the first $2000$ dailey rates. Figure \ref{fig:boc_dataset} provides a visualization of the dataset for a subset of currencies over the entire time range.
\begin{figure}[ht!]
    \centering
    \includegraphics[width=0.8\linewidth]{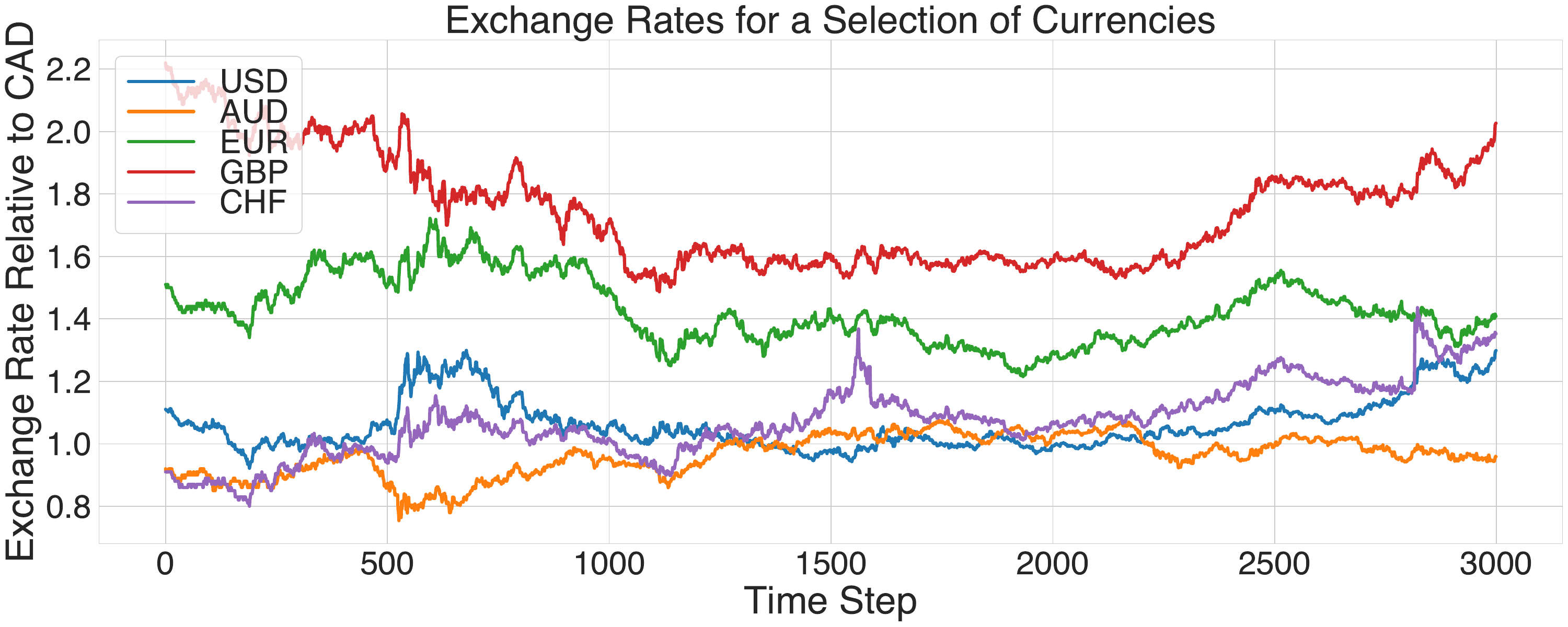}
    \caption{Exchange rates relative to the Canadian dollar for a subset of the $12$ currencies tracked in the BoC Exchange Rate time series.}
    \label{fig:boc_dataset}
\end{figure}

\subsubsection{Electricity Transformer Temperature} 
\label{s:ExperimentDetials__ss::datasets__sss:::ETT}
The Electricity Transformer Temperature (ETT) datasets were first introduced in \citep{haoyietal-informer-2021} as a difficult benchmark for the Informer time-series model. The target value for prediction in each dataset is the oil temperature (OT) at a given time within one of two electricity transformers. The datasets span a time-frame from July 1, 2016 to June 26, 2018. In particular, the ETT-small-h1 time series is studied, which records temperature measurements every hour during this date-span along with measurements of six other input features.

In the experiments in the body of this paper, the first $2000$ data points are used. All six input features (HUFL, HULL, MUFL, MULL, LUFL, LULL) are used to make predictions. When predicting $\text{OT}_t$, inputs include $\text{OT}_{t-1}$, $\text{OT}_{t-2}$, along with $\text{X}_{t-1}$, $\text{X}_{t-2}$, $\text{X}_{t-3}$, where $\text{X}$ represents an aforementioned input feature. Finally, all values in the dataset have been normalized by dividing the input and target sequences by the maximum value seen across the time series of interest. As will be discussed in Appendix \ref{normalization_study}, this is an important pre-processing step, because large input and target values tend to cause instabilities in the numerical computations and/or saturate the activation functions of the ESN and RFN models. While an exact normalization value may not be available during online prediction, an approximate normalizing scalar is likely sufficient. Figure \ref{fig:ett_dataset} exhibits a slice of the first $3,000$ steps in the time series along with the value of the input features at that time.

\begin{figure}[ht!]
    \centering
    \includegraphics[width=0.8\linewidth]{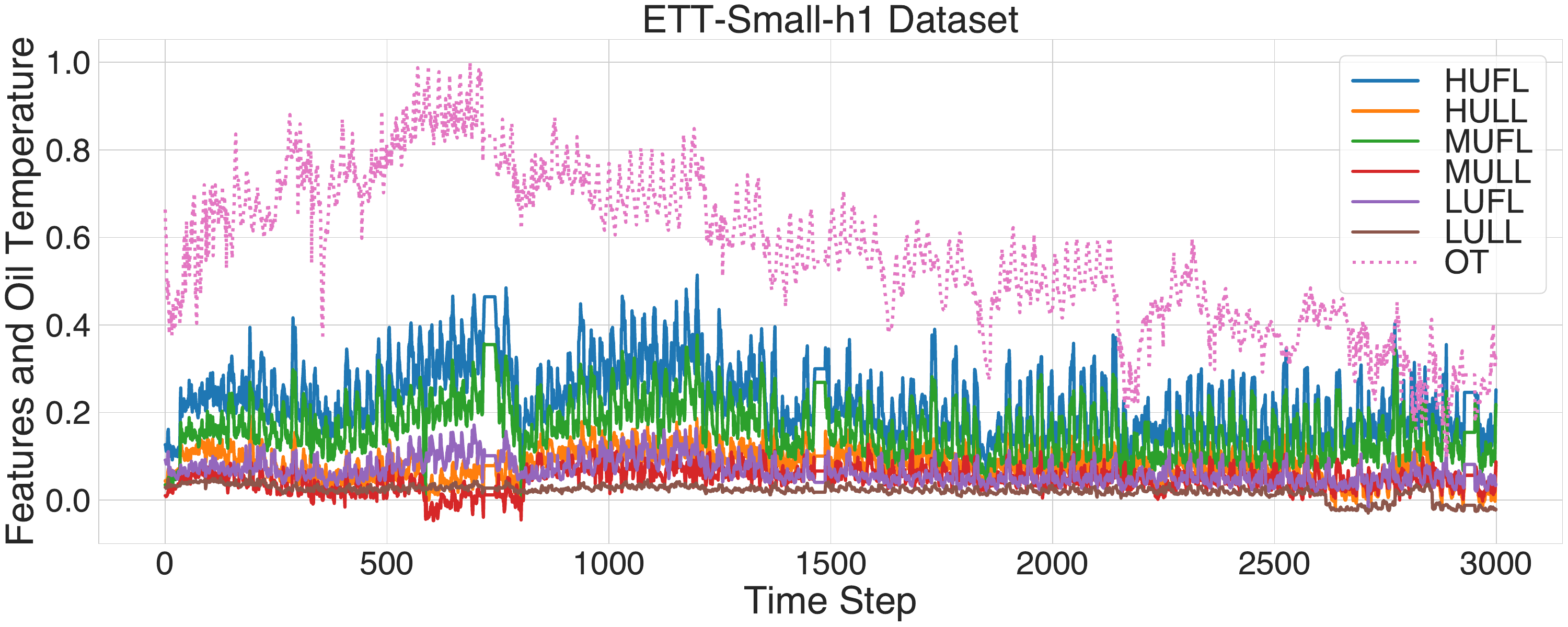}
    \caption{A snapshot of the normalized ETT-Small-h1 dataset with target value OT over time, along with the corresponding features used to help make predictions.}
    \label{fig:ett_dataset}
\end{figure}

\begin{table}[H]
\centering
\caption{Summary of hyperparameters searched for transformer models across datasets and whether Nash-game synchronization was used. The top section corresponds to the two expert setting and the bottom to five experts. In all settings, $\kappa=1.0$, $\eta=1.0$, $d_z = 2$, and synchronization frequency, $\tau = 1$, regardless of other hyperparameters. Bold numbers correspond to the optimal parameters based on mean MSE in the experiments.}
\label{tab:transformer_hyperparams}
\resizebox{0.65\linewidth}{!}{\begin{tabular}{llcccc}
\toprule
Dataset & Nash & $\alpha$ & $\gamma$ & Client $T$ & Game $T$ \\
\midrule
\multirow{2}{*}{Periodic} & True & $\{\mathbf{0.001}, 0.01, 0.1, 1, 5\}$ & $\{1, \mathbf{10}, 15\}$ & $\{\mathbf{3}, 4, 5, 6, \mathbf{7}\}$ & $\{2, \mathbf{3}, 4\}$ \\
& False & $\{0.001, 0.01, \mathbf{0.1}, 1, 5\}$ & $\{1, \mathbf{10}\}$ & $\{3, \mathbf{4}, 5, 6\}$ & $-$ \\
\multirow{2}{*}{Logistic} & True & $\{0.001, 0.01, 0.1, \mathbf{1}, 5\}$ & $\{1, \mathbf{10}, 15\}$ & $\{3, \mathbf{4}, 5, 6\}$ & $\{\mathbf{1}, 2, 3, 4\}$ \\
& False & $\{0.001, \mathbf{0.01}, 0.1, 1, 5\}$ & $\{1, \mathbf{10}, 15\}$ & $\{3, \mathbf{4}, 5, 6\}$ & $-$ \\
\multirow{2}{*}{Concept} & True & $\{0.001, 0.01, 0.1, \mathbf{1}, 5\}$ & $\{1, \mathbf{10}\}$ & $\{3, \mathbf{4}, 5\}$ & $\{\mathbf{2}, 3, 4\}$ \\
& False & $\{0.001, 0.01, 0.1, \mathbf{1}, 5\}$ & $\{\mathbf{1}, 10\}$ & $\{3, 4, 5, \mathbf{6}\}$ & $-$ \\
\multirow{2}{*}{BoC} & True & $\{0.001, 0.01, 0.1, 1, \mathbf{5}\}$ & $\{1, \mathbf{10}\}$ & $\{3, 4, 5, \mathbf{6}\}$ & $\{\mathbf{2}, 3, 4\}$ \\
& False & $\{0.001, 0.01, 0.1, 1, \mathbf{5}\}$ & $\{\mathbf{1}, 10\}$ & $\{3, 4, 5, \mathbf{6}\}$ & $-$ \\
\multirow{2}{*}{ETT} & True & $\{0.001, 0.01, 0.1, \mathbf{1}, 5\}$ & $\{\mathbf{1}, 10\}$ & $\{\mathbf{3}, 4, 5, 6\}$ & $\{\mathbf{2}, 3, 4\}$ \\
& False & $\{0.001, 0.01, 0.1, 1, \mathbf{5}\}$ & $\{\mathbf{1}, 10\}$ & $\{\mathbf{3}, 4, 5, 6\}$ & $-$ \\
\midrule
\multirow{2}{*}{Periodic} & True & $\{\mathbf{0.001}, 0.01, 0.1, 1, 5\}$ & $\{1, \mathbf{10}\}$ & $\{3, 4, \mathbf{5}, 6\}$ & $\{2, \mathbf{3}, 4\}$ \\
& False & $\{0.001, 0.01, \mathbf{0.1}, 1, 5\}$ & $\{1, \mathbf{10}\}$ & $\{3, \mathbf{4}, 5, 6\}$ & $-$ \\
\multirow{2}{*}{Logistic} & True & $\{0.001, 0.01, 0.1, \mathbf{1}, 5\}$ & $\{1, 10, \mathbf{15}\}$ & $\{3, \mathbf{4}, 5, 6\}$ & $\{\mathbf{1}, 2, 3, 4\}$ \\
& False & $\{\mathbf{0.001}, 0.01, 0.1, 1, 5\}$ & $\{1, \mathbf{10}, 15\}$ & $\{3, 4, 5, \mathbf{6}\}$ & $-$ \\
\multirow{2}{*}{Concept} & True & $\{0.001, 0.01, 0.1, \mathbf{1}, 5\}$ & $\{1, \mathbf{10}\}$ & $\{3, \mathbf{4}, 5\}$ & $\{\mathbf{2}, 3, 4\}$ \\
& False & $\{0.001, 0.01, 0.1, \mathbf{1}, 5\}$ & $\{\mathbf{1}, 10\}$ & $\{3, 4, 5, 6, 7, \mathbf{8}\}$ & $-$ \\
\multirow{2}{*}{BoC} & True & $\{0.001, 0.01, 0.1, 1, \mathbf{5}\}$ & $\{\mathbf{1}, 10\}$ & $\{\mathbf{3}, 4, 5, 6\}$ & $\{\mathbf{2}, 3, 4\}$ \\
& False & $\{0.001, 0.01, 0.1, 1, \mathbf{5}\}$ & $\{\mathbf{1}, 10\}$ & $\{3, 4, 5, \mathbf{6}\}$ & $-$ \\
\multirow{2}{*}{ETT} & True & $\{0.001, 0.01, 0.1, \mathbf{1}, 5\}$ & $\{\mathbf{1}, 10\}$ & $\{\mathbf{3}, 4, 5, 6\}$ & $\{2, 3, \mathbf{4}\}$ \\
& False & $\{0.001, 0.01, 0.1, 1, \mathbf{5}\}$ & $\{\mathbf{1}, 10\}$ & $\{3, 4, 5, \mathbf{6}\}$ & $-$ \\
\bottomrule
\end{tabular}}
\end{table}

\subsection{Best Hyperparameter Choices By Experiment}
\label{s:OptimalParamterChoices}

For all simulations, unless otherwise stated, a hyperparameter search is performed to find the best set of hyperparameters minimizing the average MSE for the time series in question. The full set of hyperparameters explored for each dataset, model, and Nash game setting are recorded in Tables \ref{tab:transformer_hyperparams}-\ref{tab:esn_hyperparams}. In each experiment, the parameters $\kappa$ and $\eta$ are each fixed to $1.0$. Additionally, for the transformer experiments, $d_z$ is fixed at $2$ for simplicity and empirical observations of better performance. Note that $\sigma$ is also not used in transformer models and is therefore irrelevant in hyperparameter tuning. The hyperparameter optimization is done post-hoc. However, the results compare the best possible performance of the two approaches in this idealized setting. In addition, several experiments in Appendix \ref{validation_studies} compare performance after hyperparameter tuning on validation time series.

\begin{table}[H]
\centering
\caption{Summary of hyperparameters searched for RFN models across datasets and whether Nash-game synchronization was used. The top section corresponds to the two expert setting and the bottom to five experts. In all settings, $\kappa=1.0$ and $\eta=1.0$, regardless of other hyperparameters. Bold numbers correspond to the optimal parameters based on mean MSE in the experiments.}
\label{tab:rfn_hyperparams}
\resizebox{0.99\linewidth}{!}{\begin{tabular}{llccccccc}
\toprule
Dataset & Nash & $\alpha$ & $\gamma$ & $\sigma$ & $d_z$ & Client $T$ & Game $T$ & Sync Freq. \\
\midrule
\multirow{2}{*}{Periodic} & True & $\{\mathbf{0.001}, 0.01, 0.1, 1\}$ & $\{0.1, 1, \mathbf{10}\}$ & $\{\mathbf{0.1}, 1\}$ & $\{\mathbf{3}, 6\}$ & $\{2, \mathbf{3}, 5\}$ & $\{2, \mathbf{3}, 4\}$ & $\{\mathbf{1}, 2\}$ \\
& False & $\{0.001, 0.01, \mathbf{0.1}, 1\}$ & $\{0.1, 1, \mathbf{10}\}$ & $\{\mathbf{1}\}$ & $\{\mathbf{3}, 6\}$ & $\{2, \mathbf{3}, 5\}$ & $-$ & $-$ \\
\multirow{2}{*}{Logistic} & True & $\{\mathbf{0.0001}, 0.001, 0.01, 0.1, 1\}$ & $\{0.1, 1, 10, \mathbf{15}\}$ & $\{\mathbf{0.01}, 0.1, 1\}$ & $\{3, 4, \mathbf{5}, 6, 7\}$ & $\{2, 3, 4, \mathbf{5}\}$ & $\{2, \mathbf{3}, 4, 5\}$ & $\{\mathbf{1}, 2\}$ \\
& False & $\{\mathbf{0.0001}, 0.001, 0.01, 0.1, 1\}$ & $\{0.1, 1, 10, \mathbf{15}\}$ & $\{0.01, \mathbf{0.1}, 1\}$ & $\{\mathbf{3}, 4, 5, 6, 7\}$ & $\{\mathbf{2}, 3, 4, 5, 6\}$ & $-$ & $-$ \\
\multirow{2}{*}{Concept} & True & $\{\mathbf{0.001}, 0.01, 0.1\}$ & $\{1, \mathbf{10}\}$ & $\{\mathbf{0.01}, 0.1, 1.0\}$ & $\{\mathbf{3}, 6\}$ & $\{\mathbf{2}, 3, 5\}$ & $\{\mathbf{2}, 3, 4\}$ & $\{\mathbf{1}, 2\}$ \\
& False & $\{0.001, \mathbf{0.01}, 0.1\}$ & $\{1, \mathbf{10}\}$ & $\{0.01, \mathbf{0.1}, 1.0\}$ & $\{\mathbf{3}, 6\}$ & $\{\mathbf{2}, 3, 5\}$ & $-$ & $-$ \\
\multirow{2}{*}{BoC} & True & $\{0.01, 0.1, \mathbf{1}\}$ & $\{0.1, 1, \mathbf{10}\}$ & $\{\mathbf{1}\}$ & $\{\mathbf{3}, 6\}$ & $\{\mathbf{2}, 5\}$ & $\{\mathbf{2}, 3\}$ & $\{\mathbf{1}, 2\}$ \\
& False & $\{\mathbf{0.01}, 0.1, 1\}$ & $\{0.1, 1, \mathbf{10}\}$ & $\{\mathbf{1}\}$ & $\{\mathbf{3}, 6\}$ & $\{\mathbf{2}, 5\}$ & $-$ & $-$ \\
\multirow{2}{*}{ETT} & True & $\{0.01, 0.1, 1, \mathbf{5}\}$ & $\{10, \mathbf{15}\}$ & $\{\mathbf{0.1}, 1\}$ & $\{1, \mathbf{2}\}$ & $\{2, 3, 4, \mathbf{5}\}$ & $\{\mathbf{2}, 3\}$ & $\{\mathbf{1}\}$ \\
& False & $\{\mathbf{0.01}, 0.1, 1, 5\}$ & $\{\mathbf{10}, 15\}$ & $\{0.1, \mathbf{1}\}$ & $\{1, \mathbf{2}\}$ & $\{2, \mathbf{3}, 4, 5\}$ & $-$ & $-$ \\
\midrule
\multirow{2}{*}{Periodic} & True & $\{\mathbf{0.001}, 0.01, 0.1, 1\}$ & $\{1, \mathbf{10}\}$ & $\{\mathbf{0.1}, 1\}$ & $\{\mathbf{3}, 6\}$ & $\{2, \mathbf{3}, 5\}$ & $\{2, \mathbf{3}, 4\}$ & $\{\mathbf{1}\}$ \\
& False & $\{0.001, 0.01, \mathbf{0.1}, 1\}$ & $\{1, \mathbf{10}\}$ & $\{\mathbf{1}\}$ & $\{\mathbf{3}, 6\}$ & $\{2, \mathbf{3}, 5\}$ & $-$ & $-$ \\
\multirow{2}{*}{Logistic} & True & $\{\mathbf{0.0001}, 0.001, 0.01\}$ & $\{10, \mathbf{15}\}$ & $\{0.01, \mathbf{0.1}\}$ & $\{3, 4, \mathbf{5}\}$ & $\{\mathbf{3}, 4, 5\}$ & $\{2, \mathbf{3}, 4\}$ & $\{\mathbf{1}, 2\}$ \\
& False & $\{0.0001, 0.001, \mathbf{0.01}\}$ & $\{10, \mathbf{15}\}$ & $\{\mathbf{0.01}, 0.1\}$ & $\{\mathbf{3}, 4, 5\}$ & $\{\mathbf{3}, 4, 5\}$ & $-$ & $-$ \\
\multirow{2}{*}{Concept} & True & $\{\mathbf{0.001}, 0.01, 0.1\}$ & $\{1, \mathbf{10}\}$ & $\{0.01, 0.1, \mathbf{1.0}\}$ & $\{3, \mathbf{6}\}$ & $\{2, 3, \mathbf{5}\}$ & $\{\mathbf{2}, 3, 4\}$ & $\{\mathbf{1}, 2\}$ \\
& False & $\{0.001, 0.01, \mathbf{0.1}\}$ & $\{1, \mathbf{10}\}$ & $\{0.01, \mathbf{0.1}, 1.0\}$ & $\{\mathbf{3}, 6\}$ & $\{\mathbf{2}, 3, 5\}$ & $-$ & $-$ \\
\multirow{2}{*}{BoC} & True & $\{\mathbf{0.01}, 0.1, 1\}$ & $\{0.1, 1, \mathbf{10}\}$ & $\{\mathbf{1}\}$ & $\{3, \mathbf{6}\}$ & $\{2, \mathbf{5}\}$ & $\{\mathbf{2}, 3\}$ & $\{\mathbf{1}, 2\}$ \\
& False & $\{0.01, 0.1, \mathbf{1}\}$ & $\{0.1, 1, \mathbf{10}\}$ & $\{\mathbf{1}\}$ & $\{\mathbf{3}, 6\}$ & $\{2, \mathbf{5}\}$ & $-$ & $-$ \\
\multirow{2}{*}{ETT} & True & $\{0.01, 0.1, 1, \mathbf{5}\}$ & $\{\mathbf{10}, 15\}$ & $\{0.1, 1\}$ & $\{\mathbf{1}, 2\}$ & $\{\mathbf{2}, 3, 4, 5\}$ & $\{2, \mathbf{3}\}$ & $\{\mathbf{1}\}$ \\
& False & $\{0.01, \mathbf{0.1}, 1, 5\}$ & $\{\mathbf{10}, 15\}$ & $\{0.1, \mathbf{1}\}$ & $\{1, \mathbf{2}\}$ & $\{2, \mathbf{3}, 4, 5\}$ & $-$ & $-$ \\
\bottomrule
\end{tabular}}
\end{table}

\begin{table}[ht]
\centering
\caption{Summary of hyperparameters searched for ESN models across datasets and whether Nash-game synchronization was used. The top section corresponds to the two expert setting and the bottom to five experts. In all settings, $\kappa=1.0$ and $\eta=1.0$, regardless of other hyperparameters. Bold numbers correspond to the optimal parameters based on mean MSE in the experiments.}
\label{tab:esn_hyperparams}
\resizebox{0.99\linewidth}{!}{\begin{tabular}{llccccccc}
\toprule
Dataset & Nash & $\alpha$ & $\gamma$ & $\sigma$ & $d_z$ & Client $T$ & Game $T$ & Sync Freq. \\
\midrule
\multirow{2}{*}{Periodic} & True & $\{\mathbf{0.001}, 0.01, 0.1\}$ & $\{0.1, 1, \mathbf{10}\}$ & $\{0.01, \mathbf{1}\}$ & $\{\mathbf{3}, 6\}$ & $\{\mathbf{2}, 5, 10\}$ & $\{\mathbf{3}, 4, 5\}$ & $\{\mathbf{1}, 2, 3\}$ \\
& False & $\{0.001, 0.01, \mathbf{0.1}\}$ & $\{0.1, 1, \mathbf{10}\}$ & $\{\mathbf{0.01}, 1\}$ & $\{\mathbf{3}, 6\}$ & $\{2, \mathbf{5}, 10\}$ & $-$ & $-$ \\
\multirow{2}{*}{Logistic} & True & $\{\mathbf{0.001}, 0.01, 0.1\}$ & $\{0.1, 1, \mathbf{10}\}$ & $\{0.01, \mathbf{1}\}$ & $\{\mathbf{3}, 6, 12\}$ & $\{\mathbf{2}, 5, 10\}$ & $\{\mathbf{3}, 5\}$ & $\{\mathbf{1}, 2\}$ \\
& False & $\{0.001, 0.01, \mathbf{0.1}\}$ & $\{0.1, 1, \mathbf{10}\}$ & $\{\mathbf{0.01}, 1\}$ & $\{\mathbf{3}, 6, 12\}$ & $\{2, \mathbf{5}, 10\}$ & $-$ & $-$ \\
\multirow{2}{*}{Concept} & True & $\{0.001, 0.01, \mathbf{0.1}\}$ & $\{0.1, \mathbf{1}, 10\}$ & $\{\mathbf{0.01}, 1\}$ & $\{\mathbf{3}, 6, 12\}$ & $\{2, 5, \mathbf{10}\}$ & $\{\mathbf{3}, 5\}$ & $\{\mathbf{1}, 2\}$ \\
& False & $\{\mathbf{0.001}, 0.01, 0.1\}$ & $\{0.1, 1, \mathbf{10}\}$ & $\{0.01, \mathbf{1}\}$ & $\{3, \mathbf{6}, 12\}$ & $\{\mathbf{2}, 5, 10\}$ & $-$ & $-$ \\
\multirow{2}{*}{BoC} & True & $\{\mathbf{0.001}, 0.01, 0.1\}$ & $\{\mathbf{1}, 10\}$ & $\{\mathbf{0.01}, 1\}$ & $\{3, \mathbf{6}, 12\}$ & $\{2, \mathbf{5}, 10\}$ & $\{\mathbf{3}, 5\}$ & $\{\mathbf{1}\}$ \\
& False & $\{\mathbf{0.001}, 0.01, 0.1\}$ & $\{1, \mathbf{10}\}$ & $\{0.01, \mathbf{1}\}$ & $\{3, \mathbf{6}, 12\}$ & $\{\mathbf{2}, 5, 10\}$ & $-$ & $-$ \\
\multirow{2}{*}{ETT} & True & $\{0.01, 0.1, 1, \mathbf{5}\}$ & $\{\mathbf{1}, 10\}$ & $\{\mathbf{1}\}$ & $\{\mathbf{1}, 2, 3\}$ & $\{\mathbf{2}, 3, 4, 5\}$ & $\{\mathbf{2}, 3\}$ & $\{\mathbf{1}\}$ \\
& False & $\{0.01, 0.1, 1, \mathbf{5}\}$ & $\{\mathbf{1}, 10\}$ & $\{\mathbf{1}\}$ & $\{1, 2, \mathbf{3}\}$ & $\{\mathbf{2}, 3, 4, 5\}$ & $-$ & $-$ \\
\midrule
\multirow{2}{*}{Periodic} & True & $\{\mathbf{0.001}, 0.01, 0.1, 1, 10\}$ & $\{0.01, 0.1, 1, \mathbf{10}\}$ & $\{0.01, \mathbf{1}\}$ & $\{\mathbf{3}, 6\}$ & $\{\mathbf{2}, 5, 10\}$ & $\{\mathbf{3}, 4, 5\}$ & $\{\mathbf{1}, 2, 3\}$ \\
& False & $\{0.001, 0.01, \mathbf{0.1}, 1, 10\}$ & $\{0.01, 0.1, 1, \mathbf{10}\}$ & $\{\mathbf{0.01}, 1\}$ & $\{\mathbf{3}, 6\}$ & $\{2, \mathbf{5}, 10\}$ & $-$ & $-$ \\
\multirow{2}{*}{Logistic} & True & $\{\mathbf{0.001}, 0.01, 0.1\}$ & $\{0.1, 1, \mathbf{10}\}$ & $\{0.01, \mathbf{1}\}$ & $\{3, \mathbf{6}, 12\}$ & $\{2, \mathbf{5}, 10\}$ & $\{\mathbf{3}, 5\}$ & $\{\mathbf{1}, 2\}$ \\
& False & $\{\mathbf{0.001}, 0.01, 0.1\}$ & $\{0.1, 1, \mathbf{10}\}$ & $\{\mathbf{0.01}, 1\}$ & $\{\mathbf{3}, 6, 12\}$ & $\{\mathbf{2}, 5, 10\}$ & $-$ & $-$ \\
\multirow{2}{*}{Concept} & True & $\{\mathbf{0.001}, 0.01, 0.1\}$ & $\{\mathbf{1}, 10\}$ & $\{\mathbf{0.01}, 1\}$ & $\{\mathbf{3}, 6\}$ & $\{2, \mathbf{5}\}$ & $\{\mathbf{2}, 3\}$ & $\{1\}$ \\
& False & $\{0.001, 0.01, \mathbf{0.1}\}$ & $\{1, \mathbf{10}\}$ & $\{\mathbf{0.01}, 1\}$ & $\{\mathbf{3}\}$ & $\{\mathbf{2}, 5\}$ & $-$ & $-$ \\
\multirow{2}{*}{BoC} & True & $\{\mathbf{0.001}, 0.01, 0.1\}$ & $\{\mathbf{10}\}$ & $\{\mathbf{1}\}$ & $\{3, \mathbf{6}\}$ & $\{3, \mathbf{5}\}$ & $\{\mathbf{2}, 3\}$ & $\{\mathbf{1}\}$ \\
& False & $\{0.001, 0.01, \mathbf{0.1}\}$ & $\{\mathbf{10}\}$ & $\{\mathbf{1}\}$ & $\{3, \mathbf{6}\}$ & $\{\mathbf{3}, 5\}$ & $-$ & $-$ \\
\multirow{2}{*}{ETT} & True & $\{0.01, 0.1, 1, \mathbf{5}\}$ & $\{\mathbf{1}, 10\}$ & $\{\mathbf{1}\}$ & $\{\mathbf{1}, 2, 3\}$ & $\{\mathbf{2}, 3, 4, 5\}$ & $\{\mathbf{2}, 3\}$ & $\{\mathbf{1}\}$ \\
& False & $\{0.01, 0.1, 1, \mathbf{5}\}$ & $\{\mathbf{1}, 10\}$ & $\{\mathbf{1}\}$ & $\{1, 2, \mathbf{3}\}$ & $\{2, \mathbf{3}, 4, 5\}$ & $-$ & $-$ \\
\bottomrule
\end{tabular}}
\end{table}

Experiments considering ESN and RFN models demonstrate an inherent randomness due, for example, to the construction of their affine maps and perturbative noise. Thus, for these models, simulations are run with three random seeds ($2024$, $2025$, $2026$) in all settings and results are averaged over the three runs. This is done for both hyperparameter tuning and reporting of final results.

\subsection{Pre-training Details for transformer Models}
\label{a:TransExpHyper}

During the pre-training phase of the transformer model, the encoder and the final linear decoder layer are trained to predict the target sequence given the input sequence. This is achieved using a causal mask, which ensures that the transformer relies only on the current and previous time step inputs to predict  $\hat{y}_{t+1}$. The output of the transformer’s encoder is then passed to a final trainable linear layer to predict the target at $t+1$. After pre-training for a few epochs, the encoder of the transformer, excluding the final linear layer, is used as the encoder model in the main experiments.

The first layer of the transformer is an embedding layer that maps the input dimension to the transformer's hidden dimension. A positional encoding layer is then added to the input. The resulting input is then passed to the transformer encoder.  The transformer encoder consists of two transformer layers, each consisting of self-attention and feed-forward networks. The number of expected features is set to $d_z \times d_y$, the number of heads is set to $2$, and the dimension of the feed-forward network is set to $16$, which are selected based on some initial hyperparameter tuning of the model. The dropout is set to $0.1$ for all experiments. The decoder layer is the final linear layer that transforms the encoder’s output $Z_t$ from a dimension of $d_y \times d_z$ to $d_y$.  This final layer is dropped and tuned following the main algorithm.

\subsection{Further Ablation Studies}
\label{a:Ablations}

This section contains additional, relevant ablation studies, the conclusions of which can be useful for the successful use of the FL algorithm.

\subsubsection{Validation Time Series Results} \label{validation_studies}

Studies applying ESN and RFN models with and without Nash-game synchronization on validation slices of the BoC Exchange Rate and ETT time series are reported in this section. In the experiments of Sections \ref{s:Examples__ss:RFN__sss:Results} and \ref{s:Examples__ss:ESN__sss:Results}, the first $2000$ data points from each of these series are considered. Herein, a slice of the BoC Exchange Rate series from index $1600$ to $3600$ is used, along with a slice from the ETT series from index $3500$ to $5500$. Note that there is a small amount of overlap for the BoC Exchange Rate dataset due to length limitations. 

As in previous results, three prediction runs with random seeds of $2024$, $2025$, and $2026$ are used and the resulting MSEs averaged. The optimal hyperparameters from previous experiments for each setting are applied. A summary of the results is presented in Table \ref{tab:validation_studies}. In all but one of the settings, predictions with Nash-game synchronizations significantly outperform those without on the unseen data. For the setting where predictions without the Nash game outperform those with it, the drop in performance for Nash-game synchronization is traced back to a single prediction in one of the runs that generated an obvious and significant outlier. It is likely that this aberrant prediction is caused by a numerical instability in the Nash optimization and could easily be detected and discarded during online prediction. The remaining two runs produced MSEs on the order of $1\mathrm{e}{\text{-}5}$, which is a marked improvement over the non-Nash setting.

\begin{table}[ht!]
    \centering
    \caption{Results, averaged over three runs for ESN and RFN models on validation slices of the BoC Exchange Rate and ETT time series. The optimal hyperparameters from the studies in Sections \ref{s:Examples__ss:RFN__sss:Results} and \ref{s:Examples__ss:ESN__sss:Results} are used.}
    \label{tab:validation_studies}
    \begin{tabular}{lllcc}
    \toprule
    Model & \# of Experts & Dataset & Nash game & No Nash game \\
    \midrule
    \multirow{4}{*}{ESN} & \multirow{2}{*}{2} & BoC Exchange Rate & $\mathbf{2.85886\mathrm{e}{\text{-}5}}$& $1.47168\mathrm{e}{\text{-}4}$ \\
    & & ETT Series & $\mathbf{2.64996\mathrm{e}{\text{-}3}}$ & $4.15773\mathrm{e}{\text{-}3}$ \\
    & \multirow{2}{*}{5} & BoC Exchange Rate & $\mathbf{5.87193\mathrm{e}{\text{-}5}}$ & $1.22207\mathrm{e}{\text{-}3}$ \\
    & & ETT Series & $\mathbf{2.64991\mathrm{e}{\text{-}3}}$ & $3.82797\mathrm{e}{\text{-}3}$ \\
    \midrule
    \multirow{4}{*}{RFN} & \multirow{2}{*}{2} & BoC Exchange Rate & $3.17312\mathrm{e}{\text{-}2}$ & $\mathbf{7.54179\mathrm{e}{\text{-}3}}$ \\
    & & ETT Series & $\mathbf{5.12075\mathrm{e}{\text{-}3}}$ & $1.92244\mathrm{e}{\text{-}2}$ \\
    & \multirow{2}{*}{5} & BoC Exchange Rate & $\mathbf{4.20660\mathrm{e}{\text{-}5}}$ & $6.28034\mathrm{e}{\text{-}2}$ \\
    & & ETT Series & $\mathbf{2.65513\mathrm{e}{\text{-}3}}$ & $1.28415\mathrm{e}{\text{-}1}$ \\
    \bottomrule
    \end{tabular}
\end{table}

\subsubsection{ReLU Activations and ESN Recursion} \label{activation_discussion}

Recently, it has been shown that ESN models with ReLU activations have desirable properties, including Universality \citep{Gonon1}. A natural question then is why use Hard Sigmoid activations in the models explored here. Peripherally, the work in \citep{Li6} extends many of the properties proven in \citep{Gonon1} to ESNs for general activation functions. However, the primary motivation is that the unbounded nature of the ReLU activation function can result in significant growth in the Froebenius norm of the latent representations, $Z_t$. Such growth produces poorly conditioned matrices. This can be problematic for both the numerical solution of the ridge-regression problems required for predictions without the Nash game and the more complex systems required for the proposed Nash-game synchronization.

Consider the following code to replicate the recurrence relationship for the latent space of the ESN. 

%\includepdf[pages=-]{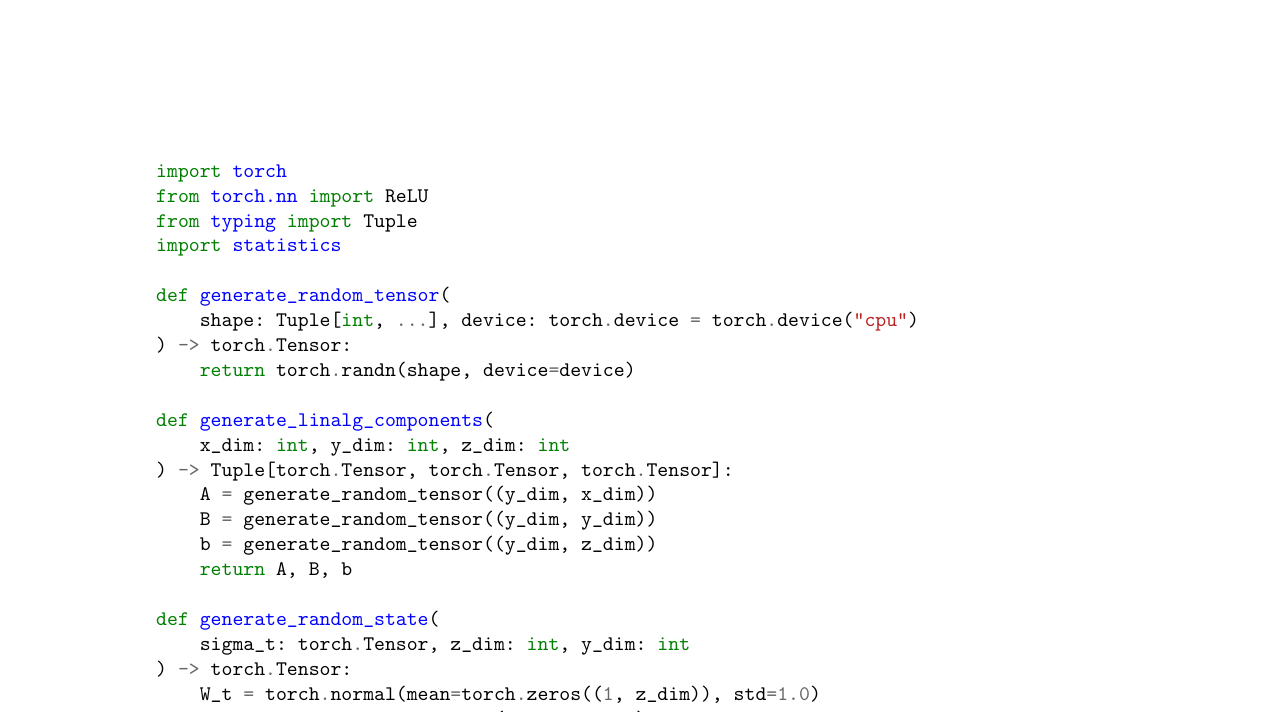}
\vspace{-2.5cm}
\hspace{-3.3cm}
\includegraphics[trim=0 450 0 0, clip, width=22cm,height=13.5cm]{code_python.pdf}

% Insert the top half of the PDF page
%\includepdf[pages=1, trim=0 396 0 0, clip]{code_python.pdf}

%\vspace{-1cm}
\hspace{-3.3cm}
\includegraphics[trim=0 0 0 340, clip, width=22cm,height=17.5cm]{code_python.pdf}

% Insert the bottom half of the PDF page
%\includepdf[pages=1, trim=0 0 0 396, clip]{code_python.pdf}

\vspace{-5.3cm}

\noindent In the above, the input features are quite reasonable and the length of the time series is only 10 time steps long. In spite of this, for unlucky draws, the magnitudes of elements in $Z$ grow extremely rapidly. The largest of which is $87025$ in this simulation. Over the $100$ simulation trajectories, the mean value for the largest element in $Z$ is $1937$. While the distribution is fairly skewed by outliers, this example demonstrates the potential hazard in using ReLU activations in ESNs over long time series. Using a bounded activation function, such as the Hard Sigmoid, in the experiments above, largely mitigates this issue.

\subsubsection{ESN and RFN Models on Unnormalized Data} \label{normalization_study}

In Appendix \ref{s:ExperimentDetials__ss::datasets__sss:::ETT}, the importance of normalizing the inputs and outputs of the ETT time series is briefly mentioned. In this section, some studies considering the impact of failing to scale data when using ESN or RFN models is discussed. Consider a toy dataset composed of three Brownian motion trajectories with drift \citep{Glasserman1}. For each trajectory, the Brownian motion has a $\mu = 1.0$ and $\sigma = 1.0$. At each time step, $t$, the target is simply the sum of $x_t = 0.1\cdot t$ and the value of trajectory $i$ at that time step, $b_{i, t}$. The inputs are simply $x_t$ and $b_{i, t}$. So the model need only learn to properly add these two values together to make an accurate prediction. In the examples below, a time series of length $200$ is used, three trajectories are sampled as targets, two experts are present for the mixture, and ESN models are used.

As seen in Figure \ref{fig:normalization}, the values of the Brownian trajectories quickly grow fairly large, implying that the model inputs also grow steadily. Examining the server predictions without Nash synchronization, it is clear that the server is unable to mix predictions from the two clients to make predictions for target variable $y_2$. The root cause of this failure is seen in considering the predictions produced by the individual clients. Both clients fail to produce meaningful predictions for $y_2$. Furthermore, Client 1 produces extremely poor predictions for all target dimensions. Thus, the server must rely exclusively on predictions from Client 0 to produce any meaningful predictions at all. Note that these issues remain present, even after significant hyperparameter tuning.

\begin{figure}[ht!]
    \centering
    \includegraphics[width=0.8\linewidth]{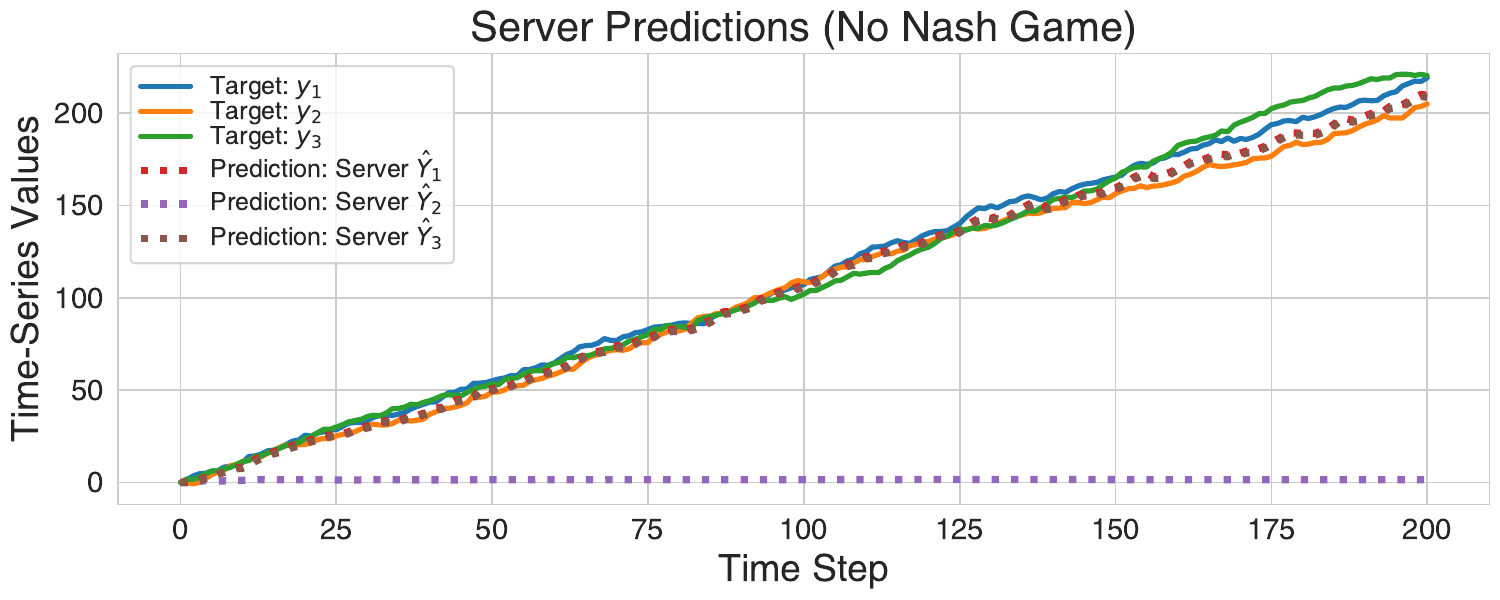}
    \includegraphics[width=0.8\linewidth]{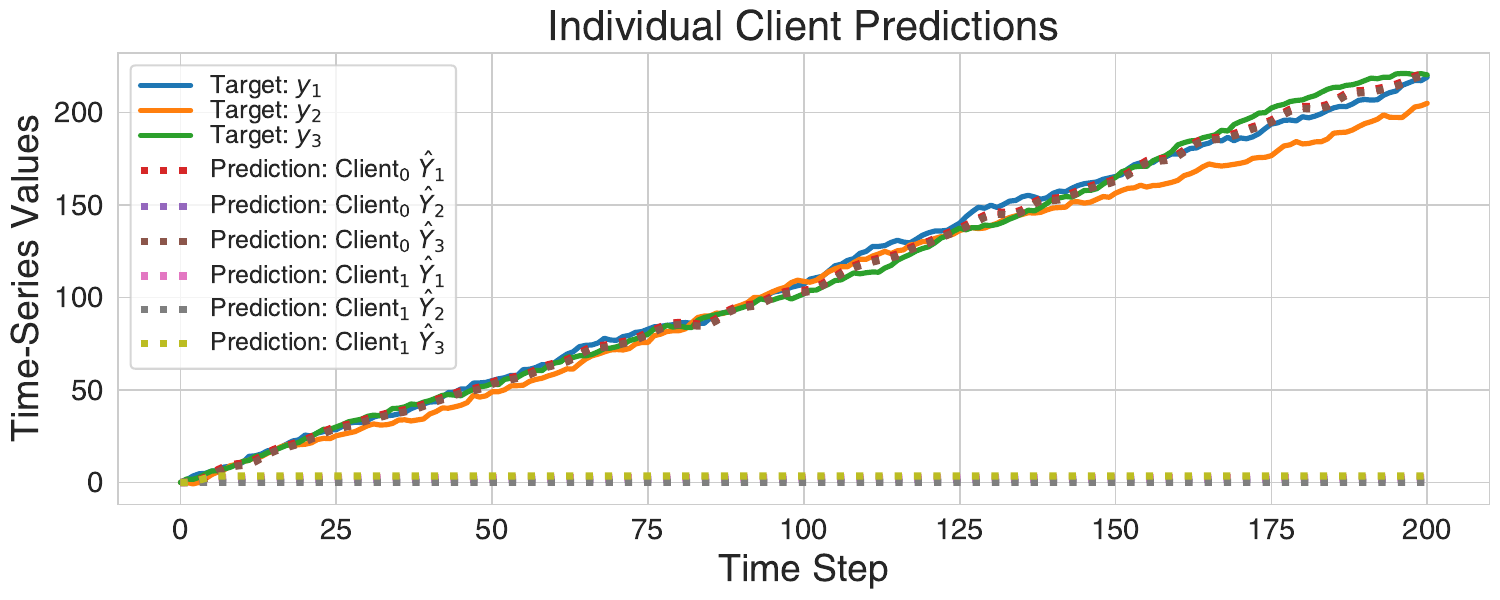}
    \caption{Predictions for the server (top) and individual clients (bottom) for the synthetic Brownian motion dataset. Due to the large input and target values, the clients, especially Client 1, fail to produce good predictions using ESN models.}
    \label{fig:normalization}
\end{figure}

The driver of these failures is a collapse in the latent representations, $Z_t$, from which clients derive their predictions. Recall that, for ESN models, 
\begin{align*}
    Z_t^i &= \text{Hard Sigmoid} \bullet \left (A^i x_t + B^i Z_{t-1}^i b^i + \sigma^i W_t^i \right) \\
    \hat{Y}_{t+1}^i &= \hat{Y}_t^i + Z_t^i \beta_t^i.
\end{align*}
If a row of $Z_t^i$ were to become zero, the previous prediction $\hat{Y}_t^i$ remains constant in that dimension, regardless of $\beta_t^i$. If the inputs to the Hard Sigmoid become negative enough, this can happen repeatedly. In the case of Figure \ref{fig:normalization}, after a few steps, the latent matrix $Z$ becomes zero across the board for Client 1 and the second row becomes zero for Client 0. 

This deficiency may be addressed in several ways. For example, a $\tanh$ function might be substituted for the Hard Sigmoid activation, thereby removing the vanishing representation problem. However, the issue of saturating the extremes of the activation function remains present. Experimentally, a suitable solution for the ETT time series is normalization of the input and output variables by some computed maximum. This approach was applied for all experiments. However, it is likely that an ideal setup would be a combination of $\tanh$ activation and normalization.

\end{document}